\newmdenv[leftline=false,rightline=false]{topbottom}
\newenvironment{definition}[1][Definition]{\begin{trivlist}
\item[\hskip \labelsep {\bfseries #1}]}{\end{trivlist}}
\newenvironment{problem}[1][Problem]{\begin{trivlist}
\item[\hskip \labelsep {\bfseries #1}]}{\end{trivlist}}
\newcommand*\colvec[3][]{
    \begin{pmatrix}\ifx\relax#1\relax\else#1\\\fi#2\\#3\end{pmatrix}
}
\newtoks\rowvectoks
\newcommand{\rowvec}[2]{%
  \rowvectoks={#2}\count255=#1\relax
  \advance\count255 by -1
  \rowvecnexta}
\newcommand{\rowvecnexta}{%
  \ifnum\count255>0
    \expandafter\rowvecnextb
  \else
    \begin{pmatrix}\the\rowvectoks\end{pmatrix}
  \fi}
\newcommand\rowvecnextb[1]{%
    \rowvectoks=\expandafter{\the\rowvectoks&#1}%
    \advance\count255 by -1
    \rowvecnexta}
\newtheorem{theorem}{Theorem}[section]
\newtheorem{lemma}[theorem]{Lemma}
\let\ps@plain\ps@empty
\begin{document}

\begin{titlepage}
\hspace*{1.5cm}
\begin{minipage}[l]{\linewidth}
	\begin{minipage}[c]{.15\linewidth}
	    \includegraphics[width=2cm]{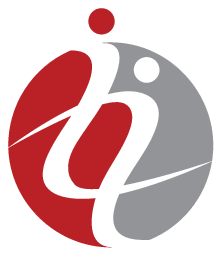}
	\end{minipage}\hfill
	\begin{minipage}[l]{.9\linewidth}
	    {\large \textsc{University of Wrocław\\
	    Faculty of Mathematics and Computer Science\\
	    Institute of Computer Science}}
	\end{minipage}
	
	\vspace{5cm}
	
	\vspace{0.5cm}
	
	\textsc{\huge GGP with Advanced Reasoning}
	\vspace{0.2cm}
	\\
	\textsc{\huge and Board Knowledge Discovery}
	\vspace{0.5cm}
	\\
	\textsc{by}
	\vspace{0.5cm}
	\\
	\textsc{\Large Adrian Łańcucki}
	\vspace{8cm}
	\\
	\hspace*{8.5cm}\textsc{M. Sc. Thesis}
	\vspace{0.5cm}
	\\
	\hspace*{8.5cm}\textsc{Supervisor:}\\
	\hspace*{8.5cm}\textsc{Dr Piotr Lipiński}
\end{minipage}
\vfill
\hspace*{6cm}\textsc{{\large Wrocław 2013}}
\end{titlepage}


\newpage

\begin{abstract}
Quality of General Game Playing (GGP) matches suffers from slow state-switching and weak knowledge modules. Instantiation and Propositional Networks offer great performance gains over Prolog-based reasoning, but do not scale well. In this publication mGDL, a variant of GDL stripped of function constants, has been defined as a basis for simple reasoning machines. mGDL allows to easily map rules to C++ functions. 253 out of 270 tested GDL rule sheets conformed to mGDL without any modifications; the rest required minor changes. A revised (m)GDL to C++ translation scheme has been reevaluated; it brought gains ranging from 28\% to 7300\% over YAP Prolog, managing to compile even demanding rule sheets under few seconds. For strengthening game knowledge, spatial features inspired by similar successful techniques from computer Go have been proposed. For they required an Euclidean metric, a small board extension to GDL has been defined through a set of ground atomic sentences. An SGA-based genetic algorithm has been designed for tweaking game parameters and conducting self-plays, so the features could be mined from meaningful game records. The approach has been tested on a small cluster, giving performance gains up to 20\% more wins against the baseline UCT player. Implementations of proposed ideas constitutes the core of GGP Spatium - a small C++/Python GGP framework, created for developing compact GGP Players and problem solvers.
\end{abstract}


\setcounter{page}{3}
\tableofcontents


\chapter{Introduction}

General Game Playing~\cite{gdl_specification} might be considered the first successful project, aiming at development of multi-game playing agents. With GGP courses being taught at universities, annual competitions and hundreds of publications, it is slowly making it's way as one of key fields in the AI domain. Perhaps the most exciting thing about GGP is the versatility - a GGP agent is assumed to learn a particular game and conduct a play based solely on a rule sheet, often under demanding time constraints. No human intervention is allowed.
\\

However, the flexibility comes at a price. No universal method of automatically conducting efficient computations based on the rule sheet exists. In the era of customizing the hardware to meet the needs of a problem, attempting heavy computations on a system, whose performance is far behind human-written code, seems largely missed. Fast pace of GGP matches also discourages development of thorough systems. It is usually the case that only simple, statistical data about the game is gathered. GGP agents rely on the recently developed MCTS technique~\cite{progressiveMCTSGo08}. Intense research in the field of MCTS-based computer Go allowed to, while breaking record after record, create agents of highly professional skill~\cite{grandChallenge12}. Such phenomenon did not occur in GGP, despite the same underlying MCTS method.
\\

This thesis addresses the aforementioned issues. A revised scheme~\cite{cpp09} of translating GDL game rules sheets to C++ code is evaluated and compared with other, existing methods. Spatial feature-based knowledge, inspired by that employed in computer Go~\cite{KNearestPatternsGo05, InriaMCTSPatternGo06, progressiveMCTSGo08}, is proposed for games featuring generalized boards. Evolutionary knowledge mining algorithm, allowing to find the features and fine-tune agent's parameters, completes the proposed approach.
\\

As the ability to generate efficient reasoning machines on-the-fly is crucial for GGP, such methods are constantly researched. The existing include: usage of Prolog engines, translation to OCaml~\cite{ocaml11} or C++ source code, game instantiation~\cite{instantiating10}, Propositional Networks~\cite{propnets09}. The last two give good results but, due to time and memory complexity, are currently limited to simple games only. Translation to source code seems promising in the long run; for that reason, this thesis introduces a small change to Game Description Language, bringing it closer to Datalog. With said change, the code generation approach will be simplified, and C++ code generation can be reevaluated.
\\

The state-of-the-art methods of extracting game knowledge from expert plays in Go rely on matching moves with various formations on the board. Those formations usually constitute piece arrangements, or express certain characteristics of a move, easily perceived by human players. The characteristics share essentially one thing: they rely on the concept of a board. Although experimental methods of automatic recognition of board structures and metrics in GGP exist~\cite{auto-heuristic-construction06, deducing-piece-args-kaiser-2007, distanceFeatures2012}, it seems the best to supply board information as meta data directly in the rule sheet, written in GDL itself. Such extension to GDL will be proposed, along with a system of a few spatial features and meta facts, definable on an arbitrary game board. They will aid in abstracting information about intended moves, and can be applied in the Go-like fashion to GGP.
\\

To generate meaningful, self-play game records, an evolutionary algorithm based on SGA~\cite{Goldberg:1989:GAS:534133} will be introduced. It will aid in developing various agent parameters and iteratively mining features. The computations are targeted for small clusters, as many matches are expected to be carried in the process. To run the evolutionary algorithm, time constrains have to be greatly relaxed.
\\

The approach is rather complex. Evaluation required implementing many tools; thus, GGP Spatium, a C++/Python GGP framework has been created. Because of a tepid interpretation of some limitations imposed by the GGP specification, GGP Spatium with it's capabilities constitutes also a testbed for general problem-solvers. It can be used to take advantage of the vast number of GGP resources (rule sheets, game servers, various agents, etc.) in otherwise not related research.
\\

This thesis is constructed in the following way: Chapter~\ref{chap:sota} will summarize recent techniques in multi-game playing, focusing on MCTS with knowledge, also in computer Go. Chapter~\ref{chap:problem} will define the problem considered in this thesis. In Chapter~\ref{chap:keyIdeas}, the approach created for this thesis to answer the problem will be presented. Chapter~\ref{chap:testingPlatform} will summarize technical details of GGP Spatium - the implementation of said approach. Finally, results obtained during tests will be presented in Chapter~\ref{chap:tests}, with Chapter~\ref{chap:summary} concluding the thesis.

\chapter{GGP/MCTS and Artificial Intelligence} 
\label{chap:sota}

In this chapter, an overview of state of the art in artificial intelligence in games is presented. The chapter begins with a brief history of AI. The main focus is on multi-game playing, and utilization of Monte Carlo methods supported with game knowledge in conducting plays. While the scope is narrowed to notable work relevant to the subject, a broader view on AI game playing, yet still concise, can be found in~\cite{Mandziuk10}.
\\

\section{AI Game Playing}

The idea of artificial intelligence comes from the 50s, and is nearly as old as computer science itself. The natural approach to research in the field of AI was to break up the problem into simpler and smaller ones. This is how AI game playing, among the other AI-related ones, emerged as a problem. Academic research focused mainly on developing one-game programs for board games like chess, checkers, variations of Tic-tac-toe, or later Go.
\\

It is not clear when the rise in artificial intelligence in games happened. However, for many, it is the year 1997, when Deep Blue, IBM's supercomputer with both software and hardware dedicated to playing chess, won with then world master Garry Kasparov. 
\\

\section{Multi-Game Playing} 

First notable Meta-Game Playing (Multi-Game Playing) systems come from the first half of the 1990s. The term itself was coined by Barney Pell~\cite{pell92}, though at the time, he was not the only one interested in the subject. Early Multi-Game Playing systems like SAL or Hoyle~\cite{hoyle01, sal93} were designed to play two-player, perfect-information board games – the kind that fits well with minimax tree searching scheme, which was then the state of the art method.
\\

No popular framework was available until the year 2005, when General Game Playing (GGP) originated at Stanford University. The whole idea of GGP revolves around a single document~\cite{gdl_specification}, created and maintained by the Stanford Logic Group, which covers important technical details such as:
\begin{itemize}
  \item Game Description Language (GDL) for describing general games, 
  \item architecture of a General Game Playing System,
  \item mechanics of a single play,
  \item communication protocol.
\end{itemize}

GGP is continuously gaining popularity, largely due to the aforementioned standardization, and also active popularization in form of frequent, open competitions. The most prominent one, the AAAI Competition~\cite{aaaiComp05}, is an annual competition which features a large money prize and gathers enthusiasts and scientists. Yet another factor, that contributed to GGP's success, is it's timing – the system has been created almost in parallel with development of the UCT algorithm, applicable to a wide class of problems, and flexible in applying modifications such as utilization of various forms of game knowledge.
\\ 

Although GGP systems are meant to learn the game, they usually do not do that. Designed to collect and immediately use ad-hoc data about the game, they rarely improve on previous plays, because the game is assumed not likely to appear again. For instance, rather than to keep game-specific data between consecutive plays,~\cite{knowledgeTransfer07} presents the idea of knowledge transfer. It is a form of extracting "general knowledge" from previously conducted plays and using it in future plays with, possibly different yet unknown, games.
\\

Numerous different directions in research in GGP were taken; non-trivial ones include using automated theorem proving for formulating simple statements about the game, and detecting sub-games and symmetries in games~\cite{symmetry09}. On the other hand, feature and evaluation function discovery, described in Subsection~\ref{subsec:evalFuns}, is popular among the top GGP agents. There are many ideas for enhancing the players by analyzing game rules, but they all seem to share one flaw: their impact is hard to evaluate, as many a time they tend to better plays in some games while getting in the way in others.
\\

Of course, the ultimate panacea for game-playing programs is computational efficiency. That is why speeding up the key components of agents also attracts significant attention, specifically efficient parallelization~\cite{Centurio10, parallel11} and optimization of reasoning engines, described in greater detail in Section~\ref{sec:reasoning}.

\section{Rollout-Based Game Playing}

Monte Carlo methods have been long unpopular in AI game playing, mostly because of poor results in comparison to minimax-like searching. For instance, an attempt to apply simulated annealing to Go, resulted in obtaining only a novice-level player~\cite{Brugmann93montecarlo}.
\\

A major breakthrough came in 2006 when UCT~\cite{Kocsis06banditbased}, a rollout-based Monte Carlo planning algorithm, was proposed and successfully applied to Go~\cite{mogoFirstGoUct06}. Later, UCT was generalized to a new tree-search method named Monte Carlo Tree Search~\cite{progressiveMCTSGo08}.
\\

To understand what makes UCT stand out, UCB1 has to be introduced beforehand.

\subsection{UCB1 and UCT}
\label{subsec:ucb}

UCB1 is a policy proposed as a solution to the following problem, formulated during World War~II:
\\
 
\textit{Suppose we have a $k$-armed bandit; each arm (lever) pays out with an unknown distribution. What strategy should be used to maximize the payout of multiple pulls?}
\\

The problem is also known as an exploration/exploitation dilemma, question of a philosophical origin, quite similar to the famous secretary problem~\cite{wiki:secretaryProblem}.
\\

UCB states for Upper Confidence Bound. As the name suggests, UCB1 policy~\cite{multiarmed02} gives an upper bound on expected regret (which occurs when not pulling the optimal lever). More specifically, for a $k$-armed bandit with arms of arbitrary reward distributions $P_1,\ldots,P_k$ with support in $[0;1]$, the expected regret after $n$ pulls is bounded by
\[ \left[ 8 \sum_{i:\mu_1<\mu^*}\left( \frac{\ln n}{\Delta_i} \right) \right] +\left(1+\frac{\pi^2}{3} \right)\left(\sum^k_{j=1}\Delta_j\right) ,\]
where $\mu_1,\ldots\mu_k$ are the expected values of $P_1,\ldots,P_k$ and $\mu^*$ is any maximal element in $\left\{\mu_1,\ldots,\mu_k\right\}$.
\\

The natural strategy, when having some information about the levers, would be to pull the most promising one constantly. Let $I=\left\{1,\ldots,k\right\}$ be the set of lever's indexes. The lever to be pulled in turn $k$ is being determined by the following formula
\begin{equation} \label{eq:ucb}
argmax_{i\in I}\left( v_i + C \sqrt{\frac{\ln n_p}{n_i}}\right),
\end{equation}
where $v_i$ denotes an average payoff associated with lever $i$, $n_p$ is the total number of pulls and $n_i$ is the number of pulls on lever $i$. The expression $\sqrt{\frac{\ln n_p}{n_i}}$ is called the UCB bonus. It's purpose is to accumulate over time for long ignored levers. With the Formula~\ref{eq:ucb}, the most promising lever is being chosen.
\\

UCT stands for UCB1 Applied to Trees. It is an anytime heuristic for making decisions by the analysis of decision trees. The algorithm works in a loop, where each iteration consists of three phases: shallow-search, random simulation and back-propagation. Additionally, each iteration marks a path from the tree root to a leaf, and back to the root again. 
\\

During the shallow search, next node is being selected using the UCB1 policy. Each node is being treated as a $k$-armed bandit, with it's children being the levers. 
\\

UCB1 is being used to select the most promising node. When the maximum depth is reached, a random simulation takes place from the chosen node to a leaf. Next up, a value of the leaf is back-propagated as a payoff for selected arms, that is, all the selected moves on the path from the current game state.

\subsection{Monte Carlo Tree Search}

Monte Carlo Tree Search is a generalization of UCT, where the shallow-search phase (called a selection phase) strategy follows a \textit{selection strategy} like UCB1, but not necessarily.
\\ 

\begin{figure}[h]
  \centering
      \includegraphics[width=1.0\textwidth]{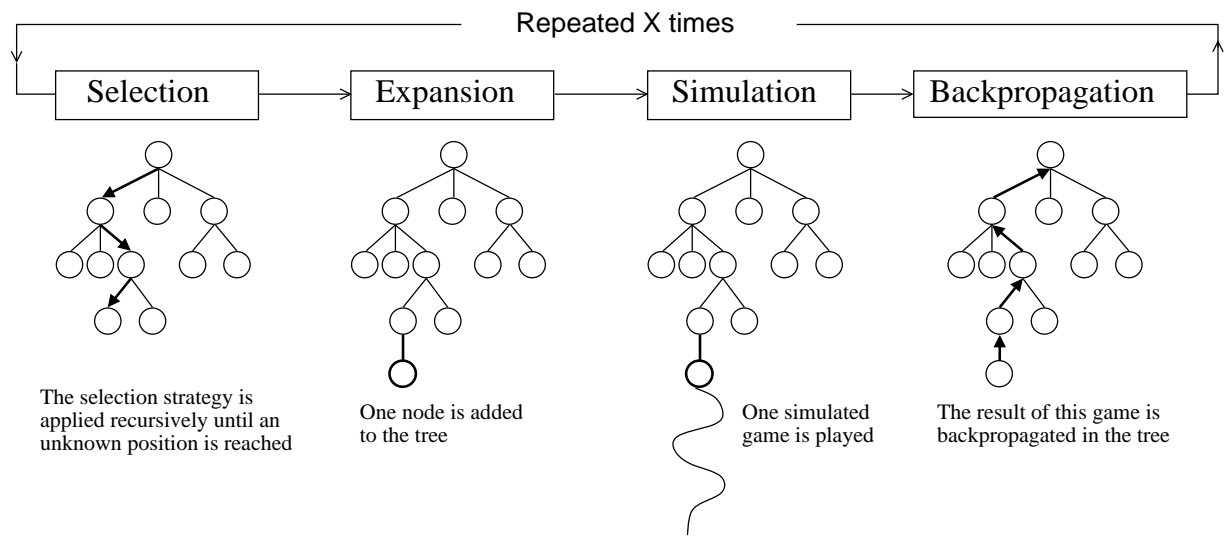}
  \caption{Outline of Monte-Carlo Tree Search~\cite{progressiveMCTSGo08}. Tree nodes are being scored to reflect average results of random playouts. To conserve resources, most promising playouts are pre-selected in the selection phase. The main loop might be interrupted at any given time.}
  \label{fig:mcts}
\end{figure}

The class of problems for which MCTS performs well is not yet thoroughly investigated, as it is dependent on the chosen selection strategy. Because of it's probabilistic nature, it is generally not suitable for single player deterministic games, which are essentially search problems. 
It performs well for a wide variety of other ones, specifically in dynamic environments.
It should be noted that bare MCTS, or even UCT variant of tree search, needs to be fine-tuned to a problem.
\\

Recent applications of MCTS to continuous, real-time environments with no easily distinguishable end states, give promising results, i.e. to Ms.~Pac-Man~\cite{mctsPacMan11} or Tron~\cite{tron10}. Another notable application of MCTS was obtaining automatic discretization of a continuous domain in Texas Holdem flavor of Poker~\cite{poker11}.

\section{The Game Description Language}

AI or random driven, almost every turn-based game playing agent has the logic component (also called the reasoner). It enables i.e. inference of consecutive game states or legal move sets. The reasoner has to be efficient, for it utilizes a great share of computational power through repeated switching of game states during a single play.
\\

The most straightforward solution for a multi-game player would be to generate the logic component on the fly, based on game rules. GGP uses Game Description Language for this purpose~\cite{gdl_specification}. GDL, a variant of Datalog, which is again a subset of Prolog, uses first-order logic for describing games. The class of games coverable by GDL\footnote{It is enough to include classic board games and even discretized versions of some arcade games.}
contains all finite, sequential, deterministic and perfect information games. In other words, the language describes any game for which a finite state automaton exists, where the automaton state corresponds to a single game state; a sample automaton is presented in Figure~\ref{fig:ga}.
\\

It is obvious that, due to the combinatorial explosion of their sizes, bare or even compressed automatons as structures are not feasible for passing information about the game. Therefore GDL is being used to define the automaton, by providing initial/terminal states and the transition function.
\\

\begin{figure}[h]
  \centering
      \includegraphics[width=0.6\textwidth]{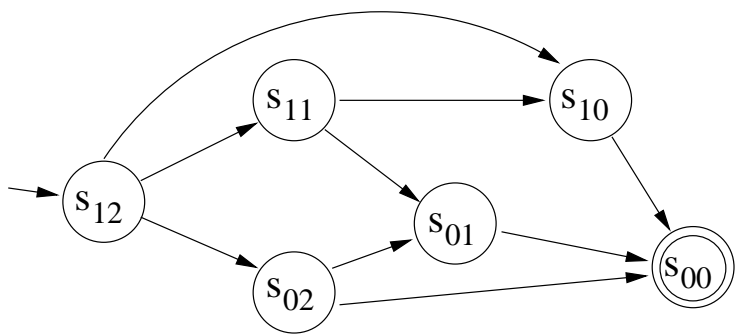}
  \caption{A sample game automaton for the game \textit{Nim} with initial stacks $(1,2)$. Each automaton state represents a game state, with $s_{12}$ being a start state, and $s_{00}$ an accept state. Transition to the next state occurs when players make their moves.}
  \label{fig:ga}
\end{figure}

While GDL is defined in terms of syntax and semantics, KIF~\cite{kifSpec} has been chosen to provide the syntax. KIF specification covers technical details, i.e. it provides a syntax in the modified BNF notation and defines the allowed charset. Sample excerpts from GDL rule sheets are presented in Subsection~\ref{subsec:prolog}.
\\

GDL rule sheets are, in a way, closely mimicking rule books used by human players. After all, real-life handbooks do not define GAs either; inference rules are written in natural language and extended with broad comments, with the rules relying on the common knowledge (like arithmetic or well-known conventions) simply omitted.

\subsection{Limitations}
\label{subsec:limitations}
For the time being, GDL can describe only finite, sequential, deterministic, and perfect information games. Though it is a wide class of problems, efforts have been made to extend it even further.
\\

One simple way to add a primitive support for non-deterministic games is to add a dummy random player to the game, whose only moves would be random calls separated from the rest of the game. Such player could, for instance, shuffle the cards or throw the dice. The GDL specification~\cite{gdl_specification} states that, whenever a player fails to connect, the gamemaster should carry on supplying random moves for the absent player. This way the dummy player could serve as an interface for the gamemaster to actually make random calls. However, most of today's players "safely" assume, that other players are opponents, and the dummy player would certainly be considered one. This would result in distorted perception of the dummy player's move distribution. In other words, such agents might continuously expect the worse number on the dice, worse possible cards dealt, etc.
\\

A GDL extension has been proposed~\cite{GDLExtension10}, adding a full support for both nondeterminism and information obscurity in a clear manner. The extension was a basis for GDL-II language~\cite{gdl2-11}, allowing to express games like Poker or Monopolly. It seems fairly popular\footnote{At the time of writing, Dresden GGP Server~\cite{dresdenGGPServer} holds $21$ GDL-II rule sheets, which account for $7\%$ of the total.}; however, this publication focuses on GDL alone.
\\

\begin{figure}[h]
  \centering
      \includegraphics[width=0.5\textwidth]{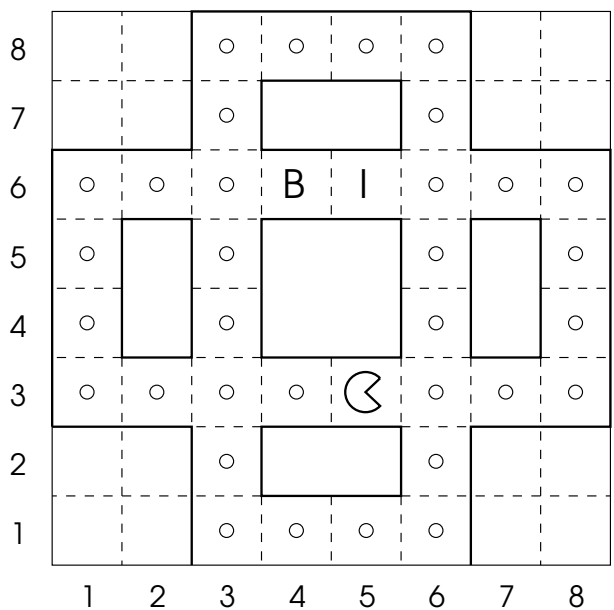}
  \caption{Initial maze setup of \emph{pacman3p.kif}~\cite{dresdenGGPServer} - a quantized, GDL variation of the popular Ms.~Pac-Man arcade game. The rules have been heavily simplified in respect to the original version, so the game could be easily expressed in GDL. Tokens $B$ and $I$ represent the pieces of two players whose goal is to catch the third one. Every player moves at the same speed of one square per move.}
  \label{fig:pacman}
\end{figure}

Computer representation of continuous domains is a matter of proper quantization. The same applies to a game-state space; though without specific language constructs, simple versions of a few “continuous” games have been expressed in GDL. However, accurate implementations of such games are not yet feasible with GDL. Ms. Pac-Man's GDL version is a good example of such problem. The simplified maze used in it is depicted in Figure~\ref{fig:pacman}.
\\

The game involves three players; one controls Pac-Man and the other two control the ghosts. The play takes place on $8\times 8$ grid; both the player and the ghosts share the same speed of 1 square per turn. This is not the case in the real Ms. Pac-Man; the speed of ghosts varies from 40\% to 95\% of Pac-Man's top speed with many factors contributing~\cite{PacManDossier}, not to mention Pac-Man's speed changing as well.
\\

A simple approach might be to quantify the state space in such way, that the turn would change only junctions. However, with such infrequent board refresh rate, the monsters might seem as randomly changing their position, what in turn could affect the learning engine. Developing reliable Ms. Pac-Man GDL rules could result in a heavily polluted state space.
\\

Another shortcoming of GDL is lack of basic arithmetic. The common way of bypassing this dilemma is defining simple arithmetic operators only when necessary, as it is shown in Figure~\ref{fig:PrologAddition}. Apart from doing unnecessary computations, it also makes the size of the overhead dependent on the data - in the example, complexity of operation $a+b$ would be $O(f(b))$, instead of $O(1)$. Thus, adding large numbers might result in unexpected slowdowns, i.e. in the middle of the game, when $b$ becomes suddenly large.
\\

\begin{figure}[htbp]
\begin{SaveVerbatim}{VerbPrologAddition}
(++ 1 0 1)
(++ 1 1 2)
(++ 1 2 3)
(++ 1 3 4)
(++ 1 4 5)
   ...
(++ 1 9 10)

(<= (++ 2 ?x ?z)
    (++ 1 ?x ?y)
    (++ 1 ?y ?z))
(<= (++ 4 ?x ?z)
    (++ 2 ?x ?y)
    (++ 2 ?y ?z))
(<= (++ 3 ?x ?z)
    (++ 1 ?x ?y)
    (++ 2 ?y ?z))
\end{SaveVerbatim}
	
	\centering
    \setlength{\fboxsep}{5mm}
    \fbox{\BUseVerbatim{VerbPrologAddition}}
    \caption{Definition of simple addition operation in GDL, $++: \{1\ldots4\}\times\{0\ldots9\}\mapsto\{1\ldots10\}$, allowing to add $1, 2, 3$ or $4$ to a positive integer with the resulting sum limited to 10.}
    \label{fig:PrologAddition}
\end{figure}

\section{GGP-Specific Knowledge}

The following section explores various interesting attempts of enriching GGP agents with game knowledge. The usual learning methods might not fit with GGP, either because of limited time, or uncertainty of the target game.

\subsection{Data Mining}

	An interesting attempt has been made in order to explore the possibility of replacing game tree search algorithm completely by a decision tree in~\cite{hSheng11}. The agent tries to gather some basic knowledge about the game by analyzing self-play history; using statistical analysis, sub-goals, crucial to winning or loosing, are being identified. Those are, in fact, just regular ground atomic sentences (facts) like \texttt{board(1,3,x)}. In the mining phase, the C4.5 Algorithm~\cite{c4.593}, which is an extension of the ID3 algorithm, is being used to create a sub-goal decision tree. Each sub-goal has a statistical winning ratio associated with it. The tree's job is to, for a given game state (a collection of facts) as an input, classify it to one of the outcome buckets. The approach seems to be useful for games with a high branching factor.

\subsection{Patterns}

	Though moderately popular in AI Game Playing, pattern recognition and matching has not attracted much attention of the GGP community. Usually, patterns are hard-coded, or at least their learning is supervised; any of that is not feasible in GGP. For discovery and application of quality patterns is heavily resource consuming, the documented attempts are simple. In~\cite{GIFL11} GIFL (GGP Feature Learning Algorithm) was proposed. It is an algorithm, which tries to find patterns described as predicate sets with associated moves and expected outcomes. GIFL finds two types of patterns: offensive and defensive, which are correlated with success and failure, respectively.
\\

GIFL is based on random playouts. Whenever a move $M$ leads from state $S_{i}$ to a winning, terminal state $S_t$ with payout $p_{S_t}$, state $S_i$ is being examined. Facts are being removed from that state, one at a time, and after every removal, the agent checks if applying move $M$ still guarantees payout equal to $p_{S_t}$.
\\

This way, the algorithm approximates the minimal subset of predicates that contributes to winning. Patterns are also being recognized in other states along the path in similar fashion, though they are being weighted according to the formula:
	\[ 100\cdot V^{level-1}, \]
where $V$ is an empirically set constant, and $level$ denotes the state's level in the tree.

\subsection{Payout Association}

The easiest approach, and perhaps the most natural one (considering representation of games in GGP), is associating statistical quality values with facts. For instance, CadiaPlayer utilizes the Move-Average Sampling Technique~\cite{CadiaSearchControl11}, where an action lookup table holds $\mathcal{Q}_h(a)$ value for each encountered move. It is an average payout of game during which the move was made. The values are updated with each subsequent playout. The rationale is to identify profitable actions, independent from game states.
\\

Similar approach was taken in~\cite{knowledgeGenGGP08}, albeit move payout was also dependent on a position during the simulation. Simply speaking, for board games with homogeneous pieces, this technique allows to discover the most favorable positions. The mentioned paper also explored a similar concept of evaluating facts, describing game states.
\\

Those simple ideas are not always new; in fact, some of them have been reused, as good enough for GGP. For instance, in 1993 Br\"{u}gmann~\cite{Brugmann93montecarlo} described a Monte Carlo scheme for evaluating possible moves.

\subsection{Recognition of Boards and Metrics}
\label{subsec:boardDiscovery}

As already mentioned in Subsection~\ref{subsec:mgdlTest}, GDL lacks basic arithmetic and, in consequence, obscures relations between different components of the game, that are naturally perceived by human players. A prime example of such connection is a concept of a board. It is absent in GDL; without it, facts which describe the board are mixed with those which describe temporary scores, count moves, etc. and form a loosely coupled set. But perceiving a board requires perceiving the distances between the pieces, and that is where having a metric defined is crucial. It is matter of discussion if tying facts together to form an abstract structure would be particularly useful, especially for games that do not feature a rectangular board, or even any board or pieces at all. Despite that, some of the top GGP agents use techniques of inference of metric features and even primitive recognition of board-like structures.
\\

One notable approach was taken in~\cite{auto-heuristic-construction06}, later used also in~\cite{Schiffel07automaticconstruction}. It consisted of identifying syntactic structures through analysis of game rule sheet. All ternary predicates have been assumed to describe two dimensional boards of a grid structure. More specifically, each ternary predicate described a board under one condition. Two arguments were assumed to be the coordinates and one to be the piece. The condition was that one of the arguments could never have two different values simultaneously (what would mean that there are two different pieces occupying the same field). The property also allowed to assume which argument was the one describing the field, and was verified by running self-play simulations.
\\

Kaiser~\cite{deducing-piece-args-kaiser-2007} used a similar approach of discovering board-like structures through self-play. Facts were grouped by relation, based on their functor/arity, i.e. \texttt{cell/3} formed a single group. All constants were labeled with unique numbers. Each relation's argument position (in case of \texttt{cell/3} there are three) was examined in terms of variance throughout the game. Retaining the values throughout the game would yield a 0 variance, contrary to frequently changing the values from state to state, referred to as mobility. Lastly, a motion detection heuristic settled which mobile arguments qualified as those denoting the pieces. The approach works best with rule sheets denoting empty fields in states, common in games where player moves a piece (chess, checkers) rather than puts a new one (Tic-tac-toe, Othello).
\\

The aforementioned papers note that, due to unpredictable ordering and obfuscation of predicates describing the board, those cannot be recognized by simply following naming patterns like \texttt{(cell ? ? ?)}. However, they may constitute a light an easy to implement alternative. Apparently, this approach is not universal as it relies heavily on the GDL representation of the game. Again, this is a clear shortage of GDL, since basic metadata supplied with a rule sheet would rule out all the potential errors from badly recognized metrics.

Board-like structures are typically used in conjunction with metrics. Below are presented a few examples of such, coming from different contexts.
\\

Manhattan distance on the board was used in~\cite{Schiffel07automaticconstruction} as a way to obtain new state features. Two kinds of features were proposed:
\begin{itemize}
\item Manhattan distance between each pair of pieces,
\item sum of pair-wise Manhattan distances.
\end{itemize}

Yet another idea was used in~\cite{distanceFeatures2012}. It suggested “proving” the metric by analysis of game rules. By inspecting relations between fluents, distance between two arbitrary predicates may be obtained (in terms of turns), that is, how many turns have to pass between the last occurrence of one term and the first one of the other. Taking chess for example, fact \texttt{board(8,1,rook)} can be obtained from \texttt{board(1,1,rook)} with a single move, whereas \texttt{board(8,8,rook)} requires at least two moves. The distances were calculated using the formula

\[ \delta(s,f)=\frac{\Delta(s,f)}{\Delta_{max}(f)}, \]

where $\Delta(s,f)$ denotes a shortest distance between a fluent $f$ and any of the fluents in state $s$, while $\Delta_{max}(f)$ is the longest possible distance $\Delta(s,f)$ that is not infinite. The paper states that this type of metric is always possible to obtain but not always feasible due to memory- and time- consuming computations.
\\
 
The impact of such a metric on the computations may be harder to foresee. 
In example, an introductory chess tutorial~\cite{chessTutorial04} suggests keeping the rooks close to the board center. The concept of a center relies on the Euclidean metric; it could be defined as a place equally distant from the boundaries. The definition of the board center through the “rook metric” however is not that straightforward. Of course at present, no real algorithm relies on tips written in natural language, albeit some designers of algorithms do.
\\

\cite{CluneHeuristicFunctions07} proposed the \textit{symbol distance}, an interesting way of capturing relations that follow syntactical patterns. Simply put, each binary relation is assumed to introduce an ordering of involved object constants through an undirected graph:
\begin{itemize}
\item each object constant $c$ creates a vertex $v_a$, 
\item two constants bound together with a functor (like \texttt{rel(c,d)}) introduce an edge $\{v_c,v_d\}$,
\item the distance is defined as the shortest path between the corresponding vertices.
\end{itemize}
The approach works very well with predicates like \texttt{succ}:
\begin{verbatim}
    (succ 1 2)
    (succ 2 3)
    (succ 3 4)
       ...
    (cell 1 2 b)
    (cell 3 4 b)
\end{verbatim}
Finally, the distance between fluents is a sum of distances between object constants on positions identified as distance-relevant. In the above example, if only the first two arguments would be identified as positional arguments, then the distance between \texttt{(cell 1 2 b)} and \texttt{(cell 2 4 b)} would be equal to the sum of the shortest paths $v_1\rightsquigarrow v_3$ and $v_2\rightsquigarrow v_4$, which would be $2+2=4$.

\subsection{Evaluation Functions}
\label{subsec:evalFuns}

	The point of using UCT in AI Game Playing is to have a generic way to evaluate game states. However, having a quality evaluation function at hand can lead to better performance of a UCT player. Experiments carried in~\cite{amazonsDiscover08}, where UCT algorithm was fitted nearly as a replacement for a minimax engine in the Game of Amazons agent, yielded a twofold benefit. Firstly, evaluation function can save time on random simulations - intermediate states evaluated with high certainty can be treated like terminal states. However, as the paper underlines, it might not pay off to evaluate every state along the path. Evaluation function might be used just once after a certain number of moves (counted from beginning of the game, or the beginning of the simulation). The second benefit is the ability to do forward pruning in games having great branching factor, where a high percentage of poorly evaluated moves might be discarded from building the initial UCT tree.
\\

A few of the leading GGP Agents incorporate different ideas for generating evaluation functions~\cite{fluxplayer07, CluneHeuristicFunctions07}. The first one, for instance, scored game rules' atoms with fuzzy logic, so they could serve as a basis for scoring complicated formulas. However, neither of them used UCT, as the corresponding agents were equipped with iterative-deepening depth-first-search algorithms.
\\

Lastly, an interesting approach was taken in~\cite{neuralNets09}, where a neural network was utilized for state evaluation. A propositional domain theory, obtained from the game description, was passed to $C-IL^2P$ algorithm~\cite{garcez02}, which returned a ready-to-use network. The approach has been further improved in~\cite{neuralNets11}.

\section{MCTS with Knowledge}
\label{sec:goKnowledge}

Recent Go playing agents, which leverage MCTS, are a great resource for pattern recognition and matching methods, tailored specifically to Monte Carlo simulations. Those agents give a good overview of what can be expected of pattern systems in terms of design, usage and performance. Though most of the ideas from Go are not generally applicable to every game, as they are mostly based on expert games and specific Go features, knowledge-free research in that area has been also carried.
\\

	Suiting MCTS to one particular game gives a finer control over minor tweaks to the MCTS method. Kloetzer et al.~\cite{MCAmazons07} summarized (the paper includes further references) the typical improvements as :
\begin{itemize}
\item using knowledge in the random games,
\item changing the behavior of UCT or using other techniques in the tree-search part, 
\item changing the behavior at the final decision of the move, by pruning moves.
\end{itemize}

In the following sections, selected state-of-the-art approaches to refining MCTS are presented. Figure~\ref{fig:go-quality} shows some of the results obtained by programs utilizing similar methods.

\begin{figure}[h]
  \centering
      \includegraphics[width=0.9\textwidth]{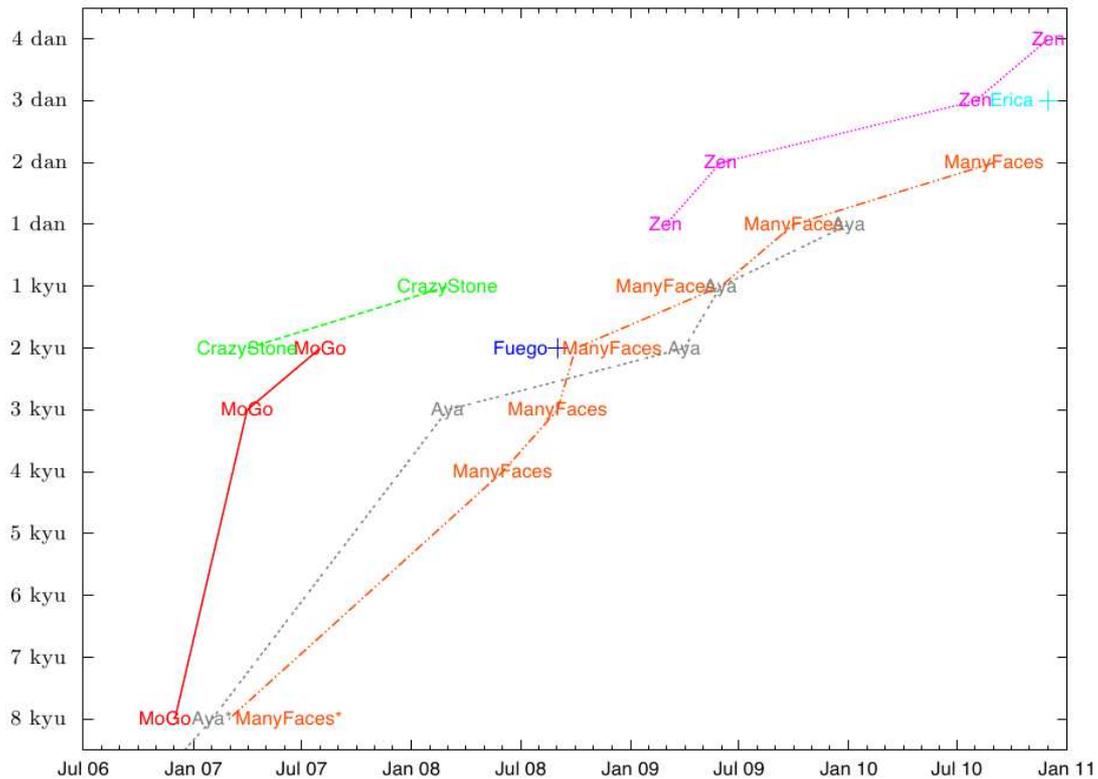}
  \caption{Dates at which several strong MCTS programs on the Kiseido Go Server achieved the given ranks~\cite{grandChallenge12}. Pre-MCTS versions are marked with an asterisk\*.}
  \label{fig:go-quality}
\end{figure}

\subsection{Features and Spatial Patterns}

Many mature examples of pattern exploration and matching come from chess and Go, games complicated enough, to resist any “lighter” computational intelligence methods. Both were, and still are, challenging. Patterns usually consist of spatial arrangements of pieces or game-specific features.
\\

In 2002, \cite{ComputerGo02}~noted, that mimicking human pattern skills seems out of reach on modern hardware, for the computational power required for efficient processing of patterns during game play is too great a burden. Only simple patterns are feasible to explore.
\\

$3\times 3$ spatial patterns, being a reasonable trade-off between size and carried information, are popular among Go researchers~\cite{bouzyGoPatterns03, progressiveMCTSGo08, goDiversity11, goMovePatternTrends12, InriaMCTSPatternGo06}. Such patterns are usually piece arrangements on a $3\times 3$ area around the field where a move is to be made. There are roughly $3^8$ of those (with extra patterns for moves close to the edges). \cite{bouzyGoPatterns03, progressiveMCTSGo08}~report a significant improvement when using $3\times 3$ patterns, and the latter suggests future increase in the size of patterns as greater computational power and memory sizes will become available.
\\

Similar approach, inspired by solutions from Moyo Go Studio~\cite{MoyoGoStudio12} and others, was taken in~\cite{BayesianPatternRankingGo06}, where patterns were also build around the field where a move was to be made. Nested sequences of patterns were considered, ordered by their sizes. Simply speaking, the larger the matched patterns (in terms of number of fields), the greater predictive power was associated with it. The smallest patterns were matched last, as the most universal and least predictive ones. Additional local features, bit-coded, were also added to the patterns. Those were mostly Go-specific, but a distance of the move to the board edge was also included.
\\

In~\cite{KNearestPatternsGo05} application of the common, K-nearest-neighbor pattern representation to Go was investigated. Again, the pattern was build around the field where a move was to be made, and the $k$ nearest fields to it (in respect to the Euclidean metric), that contained game pieces or board edges were relevant to the pattern and stored. The Bayesian properties of patterns were used to filter out meaningless ones: pattern $p$ was kept in the database if $P(i|p) > 0.01$, where $P(i|p)$ is the probability of playing a move on intersection $i$, provided pattern $p$ matches on $i$. A corpus of 2000 professional players' 19~x~19 Go games has been analyzed in order to mine significant spatial features. The generated databases (varying in size from 8,000 to 85,000 for $k$ from 6 to 15) were used in the Indigo Go program.
\\
	
	Among the popular features used in Go agents, that do not rely on Go-specific knowledge, are distance of piece to the nearest board edges, distance to the last move, and the move before the last move. All three rely on the Euclidean metric; the last two, called “proximity features”, depend on locality of Go~\cite{InriaMCTSPatternGo06}, an assumption that it usually pays out to make a move close to the intersection, where the last move was made.

\subsection{Knowledge Incorporation}
\label{subsec:incorporation}

With knowledge prepared beforehand, a simulation-based engine can be enhanced, for instance, by:
\begin{itemize}
\item influencing probability distribution when picking next move,
\item using the knowledge in the expert system manner (matching a rule would result in picking a move without any randomness)
\item narrowing down the choice of moves,
\item search-free playing when no reliable simulation data is available,
\item relying more on knowledge than average move payouts, until the payouts become reliable.
\end{itemize}

K-nearest-neighbor databases, mentioned in the previous section for the Indigo program, were used in two ways: for choosing a move in the opening, and for preselecting moves for the MC module during regular play. The latter is more relevant to the subject; during each turn, the knowledge module was used only to preselect a fairly small number $ns$ of moves (reported $ns=7$) for further evaluation by the Monte Carlo module. Mined k-nearest-neighbor patterns were used to select $nk$ best moves, leaving $ns-nk$ to be selected by the regular knowledge module. In the experiments conducted by the authors, the best results were achieved for $nk=2$.
\\

MoGo~\cite{InriaMCTSPatternGo06}, one of the top Go players, is a great example of how UCT can make use of identified patterns.	 The claim presented in the paper was that the quality of random simulations is poor, because pure randomness results in mostly meaningless playouts, skewing the estimated move value. It is worth noting that the meaningless playout effect might not be that obvious in simpler games of lower branching factors. A search-free method, of matching against predefine rule set, was used during random simulations. The first matching rule selected the move:
\begin{itemize}
  \item If the last played move was an Atari (and the stones could be saved), one of the saving moves was chosen randomly.
  \item The 8 fields surrounding the last move were compared against the encoded patterns. If some patterns were matched, one of the matching moves was chosen randomly. The patterns represented common Go formations.
  \item Any move that captured stones on the whole board was chosen.
  \item Finally, if all the previous heuristics failed, a truly random move was played.
 \end{itemize}

Progressive strategies, proposed in~\cite{progressiveMCTSGo08}, aim at lowering the costs of using knowledge in Monte Carlo Tree Search and, though tested with Go, are game-independent. The first one, progressive bias, introduces a modification to the UCT formula (designations remain the same as given in Formula~\ref{eq:ucb}):

\begin{equation}
\label{formula:progressiveUCT}
k=argmax_{i\in I}\left( v_i + C\sqrt{\frac{\ln n_p}{n_i}}+f(n_i)\right)
\end{equation}

The point is to rely on the search knowledge, while there is not enough data available from simulations. It follows from the formula that progressive bias converges over time to regular UCT.
\\

The second method, “progressive unpruning”, tries to cope with high branching factor of tree nodes. UCT tries to be fair and, by default, initially visits each node's child at least once. For hundreds or thousands of children this might not be possible withing the given timespan. In progressive pruning, after a fixed number visits to the node, the children are being pruned, according to the domain knowledge. Afterwards, the children nodes are becoming gradually “unpruned”. The scheme is depicted in Figure \ref{fig:widening}.

\begin{figure}[h]
\centering
\begin{tabular}{cc}
\subfloat[Moves are played according to the \textbf{simulation strategy}. All moves can be played.]
{\includegraphics[width=0.47\textwidth]{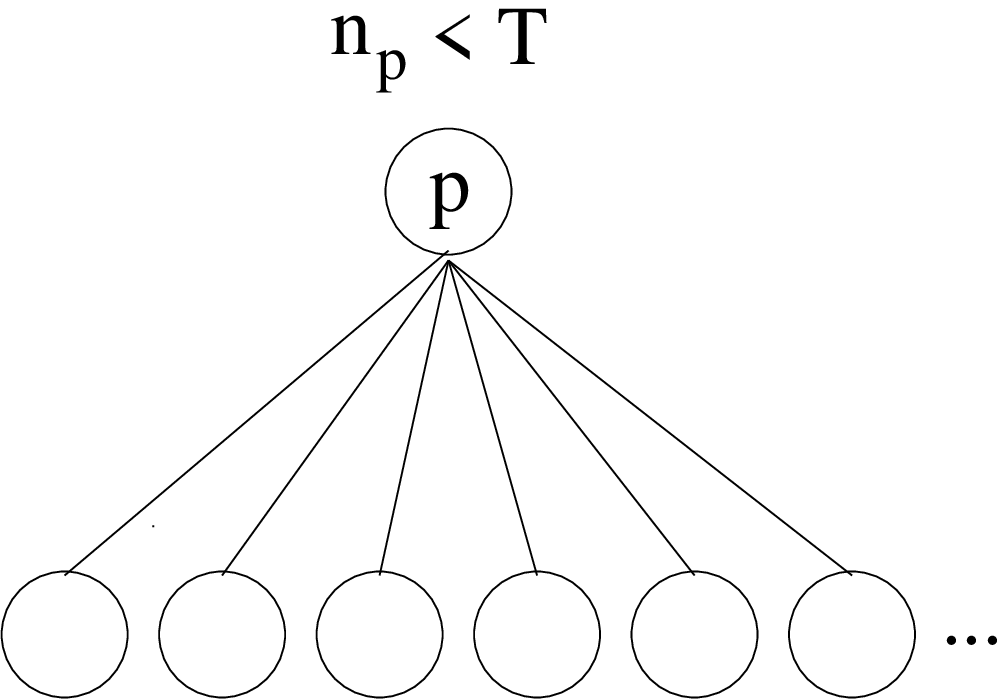}} 
\hspace{0.5cm}
\subfloat[The domain knowledge is called. Most of the moves are prunned.]
{\includegraphics[width=0.47\textwidth]{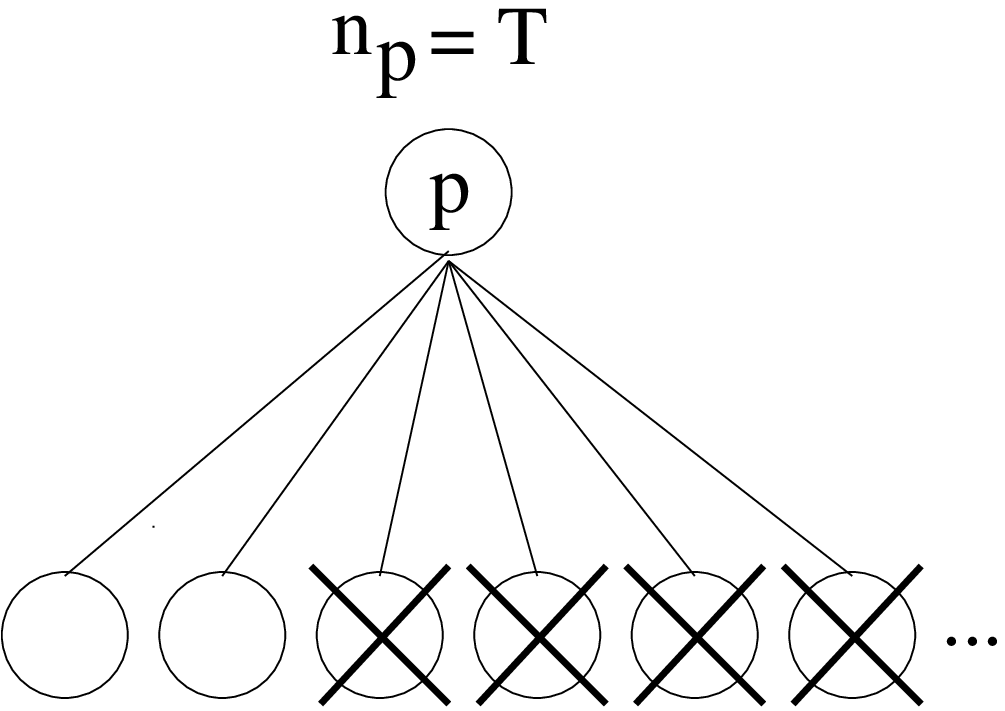}} \\
\\ 
\subfloat[Moves are played accordint to the \textbf{selection strategy amongst the unpruned moves}. Moves are progressively unprunned.]
{\includegraphics[width=0.47\textwidth]{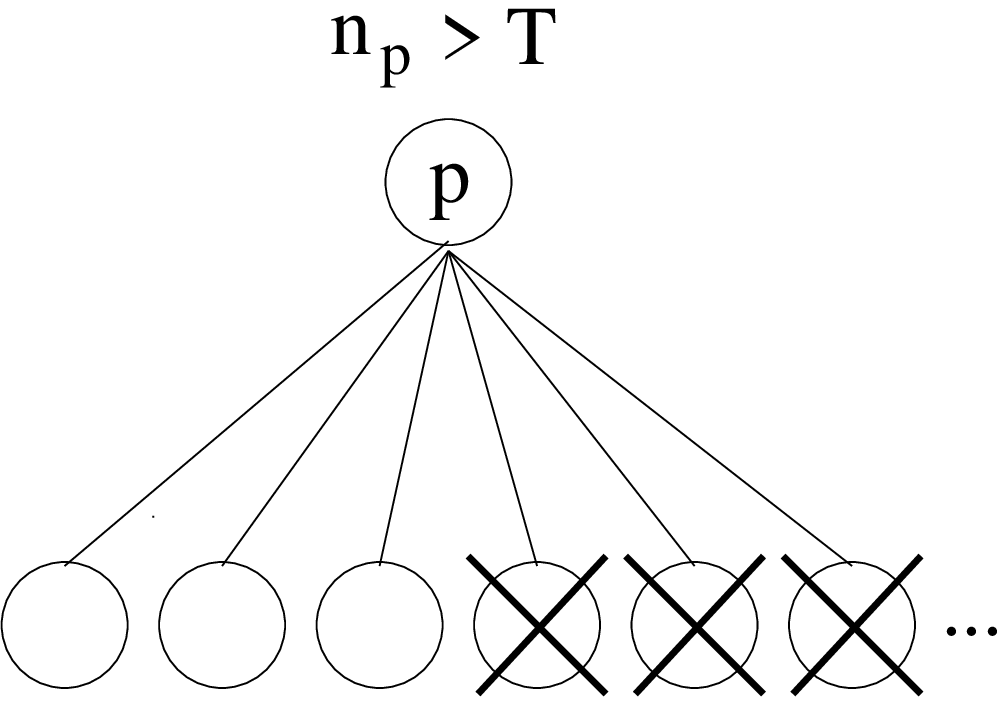}} 
\hspace{0.5cm}
\subfloat[Moves are played accordint to the \textbf{selection strategy amongst the unpruned moves}. Moves are progressively unprunned.]
{\includegraphics[width=0.47\textwidth]{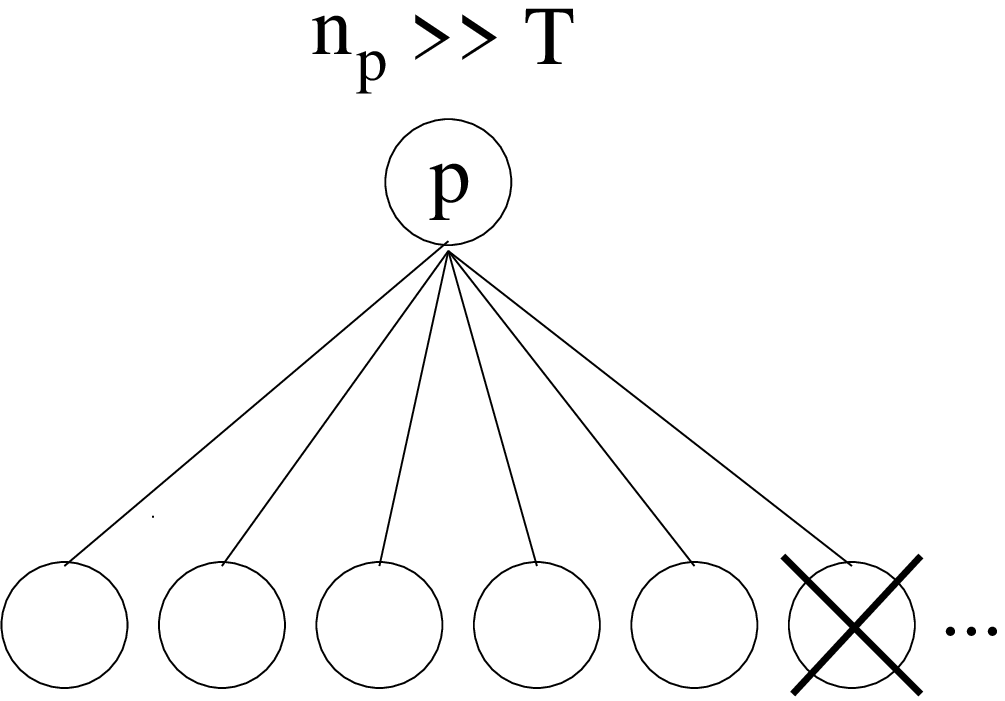}} \\
\end{tabular}
\caption{Progressive unprunning~\cite{progressiveMCTSGo08}}
\label{fig:widening}
\end{figure}

Cadiaplayer employs simulations affected by the previously mentioned MAST tables of move payouts. Instead of random, the moves are chosen according to the Boltzmann distribution

\[ \mathcal{P}(a)=\frac{e^{\mathcal{Q}_h(a)/\tau}}{\sum^n_{b=1} e^{\mathcal{Q}_h(b)/\tau}} .\]

\paragraph{Diversity}

An interesting extension of the knowledge-based MCTS scheme was proposed in~\cite{goDiversity11}. The paper investigates the influence of diversity in decision making, concept taken from social studies, where a team of diverse people is believed to cope better with difficult problems than a team of highly talented, but similarly-skilled individuals. As it goes for GGP agents, a multi-agent MCTS Go system was proposed; in each playout phase, a move to be made is being suggested by a randomly chosen agent from the agent database. The agents are equipped with identical pattern and feature sets, only weighted differently. It is an interesting phenomenon, that each of those agents alone may have worse results than a diverse group of agents (that is, group of agents performing good and relatively poor). Another thing noted is, that diversity alone does not guarantee a good performance - a diverse set has to be selected carefully. Thus, a greedy algorithm was used to select such set. Even better results were obtained when the agents were initially ordered.

\paragraph{Elo rankings}

	An inquiring idea was explored in~\cite{EloGo07}. The author suggested that patterns, once extracted, can be further evaluated using the popular Elo ranking system~\cite{elo1978rating}. Each pattern is considered a separate contestant, and selecting a move during a play is a team win of all the participating patterns. Though patterns were obtained with an expert knowledge (game records), the learning process is game-independent. As the author suggests, the whole system can be easily extended to other games. The patterns are being used with UCT in two ways. Firstly, the most lightweight features (in terms of computation time) direct the random search phase, for they provide probability distributions for moves. Secondly, the full set of features is used to prune the Monte-Carlo Search Tree.

\section{GDL Reasoning Engines}
\label{sec:reasoning}

\subsection{Prolog}
\label{subsec:prolog}

Different implementations of Prolog, due their ease of use and almost natural translation from KIF to Prolog, quickly became the  standard components handling the logic part of GGP agents. The vast majority of top GGP contestants also do, or used to, handle inference with Prolog: Ary~\cite{Ary10}, Fluxplayer~\cite{fluxplayer07}. Cadia~\cite{CadiaPlayer09} reports using YAP Prolog, Centurio~\cite{Centurio10} used ${}ECL^iPS^e$ as a base logic engine, which was further extended with generated Java reasoning code.
\\

The Knowledge Interchange Format specification~\cite{kifSpec} covers a Prolog-like, infix syntactic variant of KIF, which facilitates a mellow switch from infix to prefix notation. For example, rule
\begin{verbatim}
    (<= (legal xPlayer (play ?i ?j x))
        (true (control xPlayer))
        (emptyCell ?i ?j))
\end{verbatim}
could be translated as
\begin{verbatim}
    legal(xPlayer, play(I, J, X)) :- state(control(xPlayer)),
                                     emptyCell(I, J).
\end{verbatim}
Technical details of the translation may be dependent on a particular Prolog implementation.
\\

Bj{\"o}rnsson and Finnsson have created useful macros (as Prolog rules)~\cite{cadiaplayerSrc} to simplify the communication between the Prolog engine and the agent. Exemplary rules from the set are:
 
\begin{verbatim}
    distinct( _x, _y ) :- _x \= _y.
    or( _x, _y ) :- _x ; _y.
    or( _x, _y, _z ) :- _x ; _y ; _z.
    
    state_make_move( _p, _m ) :- assert( does( _p, _m ) ),
                                 bagof( A, next( A ), _l ),
                                 retract( does( _p, _m ) ),
                                 retractall( state( _ ) ),
                                 add_state_clauses( _l ).
    
    state_peek_next( _ml, _sl ) :- add_does_clauses( _ml ),
                                   bagof( A, next( A ), _l ),
                                   retractall( does( _, _ ) ),
                                   remove_duplicates( _l, _sl ).
    
    state_is_terminal :- terminal.
\end{verbatim}

Prolog engines are, first of all, used on a per-query basis; agent makes single queries about legal moves or next game state, provided a move $M$ was made. It is possible to push the macro approach a little bit further, by defining a rule for a whole random simulation (that is, all the way to the terminal state), or even implementing UCT in Prolog. 
\\

Performance of Prolog varies greatly between implementations; numerous benchmarks made on simple problems are widely available
\cite{prologPerformance1, prologPerformance2}. Demoen and Nguyen~\cite{prologBench2001} points out, however, some difficulties in comparing Prolog implementations. The advantage of one system over the others varies from test to test, and depends on the type of queries which are to be made. Though slightly out-of-date, the study revealed a few factors affecting performance: conformance to the ISO standard (which supposedly introduces an overhead), system robustness and subtle implementation details.
\\

The last thing to note about using Prolog is that, although fairly easy to interface, Prolog engines are terribly slow when processing GDL rule sheets in comparison to engines custom-written for a particular game.~\cite{Ary08} reports a full game of Othello taking 2 seconds. In preliminary experiments carried for this publication, a single, random, 200-move game of chess took nearly 1,5s on a 2GHz processor with YAP Prolog. This is an overwhelming number\footnote{In comparison, Deep Blue (1997) could calculate 200 million positions per second~\cite{deepBlue}.}, and makes chess impossible to play under reasonable time constrains.

\subsection{Source Code Generation}
\label{subsec:compilation}

The set of possible queries to the logic component, given by the specification, might be considered fixed. So is the set of inference rules and possible functors given by the rule sheet. Prolog engines introduce an overhead on account of their flexibility, which allows to alter rules and facts and make various queries. On the other hand, Prolog systems are usually heavily specialized and use specific heuristics to speed up the process.
\\

For a particular game, a static program compiled after analysis of the games' rule sheet, should be sufficient to traverse the game automaton. A few sources note performance gains from translating GDL rule sheets to source code on a per-game basis. The code is later compiled and serves as a static query engine.
\\

Waugh~\cite{cpp09} proposed a way of generating C++ source code for a single GDL rule sheet. For each rule and fact there was a corresponding C++ function generated. Rule's body, consisted of nested \texttt{for} loops, each loop for one literal. Because invoking a rule is essentially querying for unbound variables appearing in it's head, every query returned a list of tuples - possible values of those variables. Moreover, each function was overloaded, depending on what combination of constants and variables was to be fed as input. Two interesting optimizations were also considered:
\begin{itemize}
\item Query memorization, in form of caching results of functions that do not depend on \texttt{does} predicates. Results of such functions remain unchanged throughout the whole game.
\item Reordering of literals in rules' bodies by the number of unknown variables, so that more variables would be bound in subsequent queries.
\end{itemize}
Compared to YAP Prolog, reported performance gains varied between 60\% and 1760\%, depending on the game. As the flaws are concerned, long\footnote{Source files as large as of megabyte size were reported.} source code files were reported for the most complicated games, and the volume of source code influences compilation times. Additionally, performance profiling of the system revealed a huge overhead on memory management, partially due to poor memory management of used C++ Standard Library containers.
\\

Saffidine and Cazenave~\cite{ocaml11} proposed another method of translating rule sheets to OCaml source code. Rule sheets would undergo a chain of transformations from GDL to intermediate languages - variations of GDL.

\begin{itemize}
	\item By logical transformations, rules were transitioned so \texttt{distinct} predicates would never appear negated. Rules were put into Disjunctive Normal Form and divided into sub-rules at disjunctions. Obtained rules were in Mini-GDL language, a subset of GDL, yet equally expressive.
	\item Rules were further decomposed to the normal form, in which there are at most two literals on the right hand side. Because the decomposition might have broken the safety property (which yields that every variable in a negated literal should appear in an earlier positive literal), negated literals were moved when necessary. Under an assumption that the original author of the rule sheet might have ordered the literals in an optimal way, contrary to aforementioned C++ code generation method, positive literals were not moved. Obtained rules were in Normal-GDL language.
	\item Finally, the rule sheet underwent inversion, where each functions constant $f$ was associated with rules possible to be triggered in it's body. Resulting rule sheet in Inverted-GDL was ready for translation to the target programming language.
\end{itemize}

The authors have translated the processed rules sheet to OCaml, and interfaced it with functions interfacing the program as a GA (game automaton). Because bottom-up evaluation was chosen, the system kept a fact database and allowed for adding, searching and unifying facts. Reported performance gains of this method varied between 24\% and 146\% for a set of tested games.

\subsection{Propositional Automata}

Both approaches described in previous sections, that is using Prolog reasoning or Prolog-like reasoning with pre-compiled code, employ a top-down evaluation. Propositional Automata are somewhat similar to bottom-up evaluation trees; but instead of setting the facts and propagating the effects they have towards the root of the tree, a PA retains it's internal state between updates (turns). The input of a PA consists only of the unpredictable game facts - representing the moves of the players. The updates are triggered in transition nodes.
\\

A Propositional Automaton (PA)~\cite{propnets09} is a representation for discrete, dynamic environments. Simply put, it allows to infer the next state of the environment from the previous one, meeting the requirements for a GGP reasoning engine. A PA is based on a simpler structure, called Propositional Network. A Propositional Network (PN) is a structure which resembles a logic circuit. It is a directed bipartite graph, nodes of which fall into one of the three categories:
\begin{itemize}
  \item boolean gates,
  \item transitions,
  \item propositions, further divided into:
  \begin{itemize}
    \item $P_I$ - input propositions (with no entering edges),
    \item $P_V$ - view propositions (with no leaving edges),
    \item $P_B$ - base propositions (leading connected to boolean gates and transitions).
  \end{itemize}
\end{itemize}

Proposition nodes correspond to facts about the environment; they take on boolean values. As for GGP, input propositions roughly correspond to \texttt{does} facts, and base propositions to the other facts taking part in the reasoning process.
\\

A transition node is an identity function. It serves only for synchronization purposes. During each turn, the changes propagate not from the inputs, but rather from identity nodes. In other words, a transition node serves as a dam, taming the changes until the next time slot.
\\

Evaluating the network is similar to evaluating a logic circuit. Assuming that the network already has a valid internal state, the input propositions are supplied with new boolean values, and transitions fire off, propagating changes throughout the network. An exemplary PN is shown in Figure~\ref{fig:propnet}.
\\

\begin{figure}[h]
  \centering
  \includegraphics[width=0.9\textwidth]{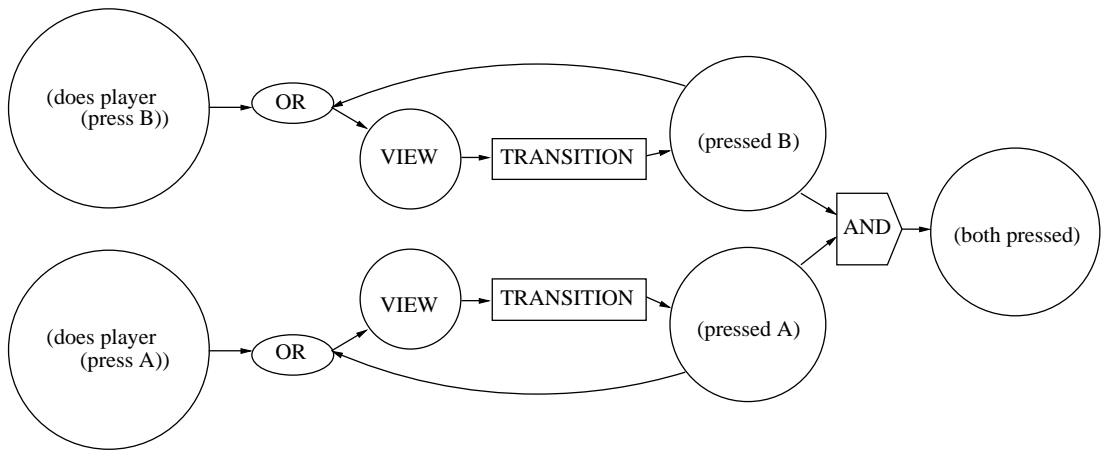}
  \caption{Propositional Network representing the physics
of a simple game~\cite{propnets09}. Each player can press either of buttons A and B which,
once pressed, remain in a pressed state.}
  \label{fig:propnet}
\end{figure}

Definition of a PN as a graph does not provide means for conducting a reasoning. For this purpose, it is being wrapped as a Propositional Automaton. A PA~\cite{propnets09} is a triple $<N, M^0_B, l>$, where $N$ is a Propositional Network, $M^0_B$ is an initial truth assignment, and $l$ is a legality function mapping base truth assignments to finite sets of input truth assignments $l:m_B\mapsto 2^{m_I}$. For GGP, initial truth assignment would be based on the initial game setup, and the legality function on the \texttt{legal} predicate.
\\

The formal definitions of PNs and PAs and algorithms for generating them were presented in great detail in~\cite{propnets09}.
\\

In GGP, PAs are very robust when compared to Prolog engines. Chmiel~\cite{gajusz12} presented a performance comparison of PAs against SWI Prolog run on the same machine. The paper also reports inability to generate PAs for more complicated games like chess and checkers, due to enormous size of their PNs. Results are presented in Figure~\ref{tab:propnetResults}.
\\

\begin{table}[h]
\caption{Propositional Net performance~\cite{gajusz12}. The PN failed to build for chckers and chess.}
  \centering
	\begin{tabular}{ |c||c||c|>{\columncolor[gray]{.9}}c| }
	\hline
	\multirow{2}{*}{Game} & SWI Prolog & \multicolumn{2}{c|}{Propositional Net} \\ 
	 & states/s & states/s & \% of SWI \\ \hline
	\textit{blocks.kif} & 2537 & 23504 & 926\\
	\textit{hanoi.kif} & 1547 & 2190 & 142\\
	\textit{tictactoe.kif} & 1237 & 12580 & 1017 \\
	\textit{checkers.kif} & 146 & - & - \\
	\textit{chess.kif} & 41 & - & - \\
	\hline
	\end{tabular}
  
  \label{tab:propnetResults}
\end{table}

\subsection{Game Instantiation}
\label{subsec:instantiation}

	Game instantiation, that is, removing all variables from the rule sheet in such way, that the original semantics would be preserved, was proposed as an obvious way to speed up GDL inference in~\cite{instantiating10}. The idea is based on an assumption that using variables may add unnecessary complexity, whereas processing instantiated input, which is more like brute-force searching, is actually faster. According to the conducted experiments, the factor of Prolog engine speed growth ranged from being a few times to even 250 times faster than the search on uninstantiated input.
	\\
	
	The algorithm is shown as Algorithm~\ref{alg:instantiation}. The critical step of instantiation is calculation of supersets of reachable state atoms, moves and axioms. As initially proposed, this can be done using either Prolog or dependency graphs. After that, formulas are being instantiated. Some of the instantiations might actually be redundant and leading to conflicting formulas. For this reason, the reachable supersets are being post-processed, and obtained instantiations' validity is being checked. An extra step is to calculate groups of mutually exclusive state atoms, so that the resulting rule sheet could be processed more efficiently.
\\

\begin{algorithm}[h]
 \SetAlgoLined
 \KwData{GDL rule sheet}
 \KwResult{Instantiated GDL version of the rule sheet}
 \BlankLine
	1. Parse the GDL input.\\
	2. Create the disjunctive normal form of the bodies of all formulas.\\
	3. Calculate the supersets of all reachable atoms, moves and axioms.\\
	4. Instantiate all formulas.\\
	5. Find groups of mutually exclusive atoms.\\
	6. Remove the axioms (by applying them in topological order).\\
	7. Generate the instantiated GDL output.\\
\caption{Game instantiation~\cite{instantiating10}. The point of the algorithm is to convert the rule sheet to an equivalent form, stripped of the variables.}
\label{alg:instantiation}
\end{algorithm}

The benefit of this approach is it's compatibility – instantiated input might be written back in KIF, resulting in a rule sheet that every GGP agent  can process without any further modifications. However, instantiation might not always be feasible to carry out. For some games from Dresden GGP Server\footnote{\url{http://ggpserver.general-game-playing.de/}}, the C++ instantiator ran out of time or memory. Despite that, instantiation has been utilized in the Gamer GGP Agent~\cite{GamerGGPAgent11}.

\section{Perspective of Further Development}

Presented methods approach the problem of (multi) game playing from entirely different angles. Despite using MCTS methods, both in GGP and in recent computer Go, there is a tremendous gap in performance of those systems. Naturally, Go has the advantage of the knowledge being carefully tested by researchers. But the used methods are also far apart. Another thing that strikes at a closer look is the appalling performance of existing GGP reasoning systems.
\\

Research in the area of GGP continues in different directions. But some of them seem to arise when trying to cope with minor flaws of the system. Perhaps GGP is mature enough to be reevaluated, to direct future research towards more exciting methods focusing solely on playing, possibly drawing from other, successful MCTS-based projects.

\chapter{Problem Definition}
\label{chap:problem}

\section{Motivation}

It is clear that MCTS technique has become a standard in meta-gaming and is a valuable approach for games with high branching factor. However, some argue that the AAAI GGP Competition does not encourage CI methods and actually learning the game. It might be valuable to relax a bit restrictions imposed by the GGP specification; overcome some of it's flaws in order to develop stronger agents. However, the main assumption of GGP should be retained: learning without any human supervision.
\\

MCTS has been widely applied to problems and games perceived as hard, with Go being the prime example of such. Large branching factor rules out conventional, tree-searching methods for such games. A great deal of research has been made in terms of adapting and equipping MCTS with hand-coded knowledge, or mined from expert game records in Go. Usually, in chess, the Game of Amazons, etc., the knowledge and learning are carried beforehand, and the agent is being optimized for fast state-switching. GGP has a different philosophy: learning and playing the game are almost parallel, so it's hard to expect robust state-switching. Those two activities might be better off separate.
\\

The Go approach of game record analysis and mining spatial features, presented particularly in~\cite{goMovePatternTrends12, KNearestPatternsGo05, BayesianPatternRankingGo06}, seems applicable to GGP with slight modifications. Such features usually rely on Euclidean metric. Numerous attempts to automatic discovery of metrics in GGP have been presented in Subsection~\ref{subsec:boardDiscovery}. It would be more reliable to simply add board meta data to the game's rule sheet and continue with mining the features. Performance might not be that good with a generic-feature approach as it is in Go, but it might still suffice to strengthen the agent against weak UCT players in a multi-game environment.
\\


Additionally, even with quality game knowledge at hand, the appalling performance of Prolog-based reasoning engines might compromise the agent's ability to play. As stated in Subsection~\ref{subsec:prolog}, such engines, while the easiest to employ, are the main reason of inefficiency of GGP agents. The approach from~\cite{cpp09} of translating GDL to C++, which makes as a reasonable trade-off between performance and resource utilization, is well worth reevaluating. Especially, when code generation might be greatly simplified with a slight and not that harmful change to GDL.
\\

All the mentioned ideas pose as an opportunity for creating a system, where developing the game knowledge would be a separate process carried automatically prior to the match, unlike in GGP where it always happens while the match goes on. During an actual play, a lean, robust agent could then use the knowledge to it's advantage, focusing only on the tree search.



\section{The Problem}

This publication pursues the goal of making a design, implementation and evaluation of a lean, versatile GGP player. Such player should be able to learn the game before an actual GGP match, but still without any human supervision. Thus, some limitations imposed by GGP may be relaxed in order to achieve the goal.



\section{Discussion}


A lean player is, in principle, an agent conserving the resources - that is available memory and computational power. CPU conservation was achieved by efficient reasoning engine, generated ad-hoc from the rule sheet.

Two different concepts of generating C++ source code were examined. Both designs, reliant on ordinary arrays and custom data structures instead of STL containers, involved reserving the whole memory at startup, to mitigate the management overhead mentioned in the original paper. The newly designed system was also modular, in the sense that it easily allowed swapping the underlying containers or key code-generating functions; the two new mentioned versions shared most of the source code.

To keep things simple, a small change to the GDL specification was made, which resulted in a new language: sharing the same syntax but slightly different semantics. While GDL allows and encourages using complex terms as arguments for predicates, a quick analysis of a set of popular GDL games revealed that they are not being used in practice even in complicated rule sheets. 
Thus, the simplified version of GDL constitutes a solid theoretical base for the system, as well as it results in a clear and comprehensible source code.

The idea from Holt~\cite{holt08} of using transposition tables was extended, to adapt to possible, demanding memory constrains. Instead of randomly deleting states, the refined design presented in this work combined transposition tables with priority lists, allowing to keep the most interesting states intact. Thus, MCTS limited with small transposition table, would build the game tree selectively, adapting to the most pormising branches.
\\

As to learning the game beforehand, an attempt to apply some successful ideas from the Go community was made (described in Section~\ref{sec:goKnowledge}), to general games. It required to analyze game records, find simple features describing the board and assign them weights, to gather basic knowledge about the game. The features were inspired by those which are and are not Go-specific, and rely only on the Euclidean metric (and structure of the board). To achieve this, GDL has been modified with a small, backward-compatible extension. It supplied the basic meta informations about the game board and additional semantics of the rule sheet to the agent, remaining transparent to the agents not supporting it.
\\

The features were mined by analysis of a game records database. To generate quality ones, a multi-agent system was designed, driven by an evolutionary approach. The agents improved upon both their knowledge and their plays.
\\

To strengthen the knowledge module, some of the inconclusive features were backed with frequent itemsets of state facts and meta-facts. Because the games (and thus their states), which can be thought of as Apriori baskets, can be both winning or loosing, an approach similar to that in~\cite{negativeAssociationRules04} was needed in order to gather game facts associated with winning (but not loosing) and vice versa. A variation of the Apriori Algorithm~\cite{Agrawal:1994:FAM:645920.672836} was prepared, which adapted to those needs and fit well with the expected game data.
\\


\chapter{Proposed Approach}
\label{chap:keyIdeas}

The new approach which this thesis proposes, merely outlined in the previous chapter, is presented in detail in this chapter. It has been created as a solution for the stated problem, and might be considered the core of this publication. First, a revised scheme of translating GDL rule sheets to C++ code is presented and formally backed with an alternation to GDL. Next, a small extension to GDL is described along with a kind of game knowledge, which the extension allows to obtain. Lastly, a way of extracting spatial knowledge through self-play, including necessary algorithms, is presented.

\section{Revised GDL to C++ Translation Scheme}

	Game instantiation or inference through Propositional Automata results in far more efficient reasoning than with Prolog. However, those methods are bulky both time- and memory-wise, and do not work for certain complex games, due to exceeding time and memory limits. The code generation approach was chosen for this publication as a compact alternative, not subjected to memory-overflowing issues. The realizations of this approach mentioned in Subsection~\ref{subsec:compilation} performed well, while leaving room for some obvious improvements.
\\

In this chapter, design of a refined GDL/KIF to C++ translation scheme based on~\cite{cpp09}, created for this publication, is described. First, the concept of a reasoning tree is clarified. By considering sample queries, the mGDL language is outlined and defined as a subset of GDL, much easier to process. Then, examples are presented of how the translation of mGDL/KIF rules to C++ routines can proceed. Furthermore, two versions of the refined code generation approach, based on different reasoning algorithms, are explained. The system has been designed in such way, that the underlying data containers and output functions could be changed with little effort, thus both versions share most of the source code.
\\

\subsection{Reasoning Trees}

A typical way to resolve a Datalog/Prolog query is to evaluate the associated reasoning tree made of the rules and facts. It can be done with a bottom-up or a top-down approach. The tree can be build recurrently:
\begin{enumerate}
\item Begin with a query sentence $\phi_q$ as a root.
\item If $\phi$ is a sentence:
	\begin{itemize}
	\item Find a set $C$ of rules and facts $\phi$ unifies with.
	\item If $|C| > 1$ then add an \texttt{OR} vertex as a child of $v$, with elements from $C$ as it's children and apply recurrently to those children.
	\item If $|C| = 1$ then make the element from $C$ a child of $v$ and apply recurrently to that child.
	\end{itemize}
\item If $\phi$ is a rule then add each $\phi$'s literal as it's child and apply recurrently.
\end{enumerate}

The resulting tree has the rules as internal nodes, and facts as leaves. Sample trees generated with \textit{tictactoe.kif} rule file are presented in Figure~\ref{fig:tictactoeTrees}.

\begin{figure}[h]
  \centering
      \includegraphics[width=0.8\textwidth]{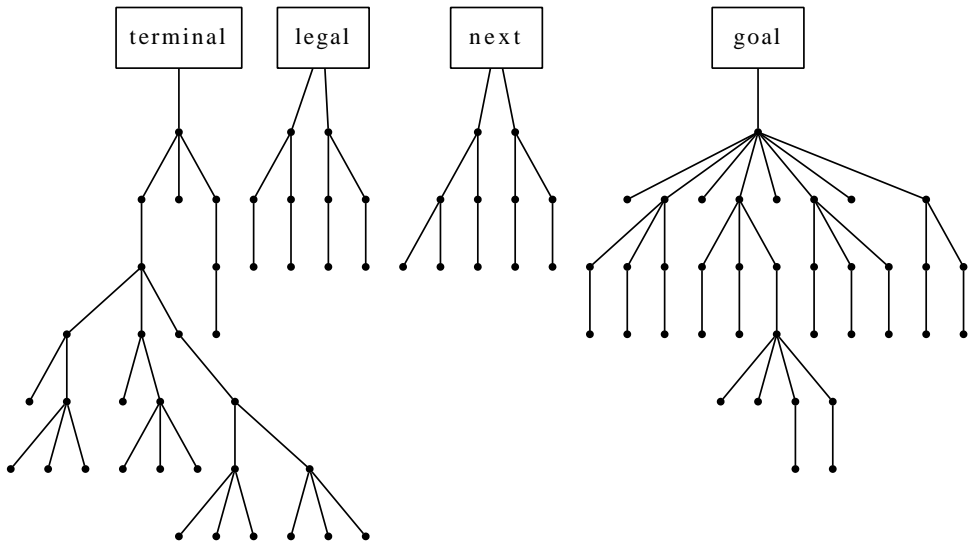}
  \caption{Sample reasoning trees for \textit{tictactoe.kif}. Each node represents a rule; node's children represent rules which may unify with literal from the rule's body.}
  \label{fig:tictactoeTrees}
\end{figure}

To illustrate size of typical reasoning trees, a few more examples are presented in Figures~\ref{fig:chessAsteroidsTrees},~\ref{fig:othelloTrees}, with labels removed for enhanced readability.
\\

\begin{sidewaysfigure}[p]
  \centering
  
  \begin{minipage}{\columnwidth}
  \subfloat[Chess]{
      \includegraphics[width=25cm]{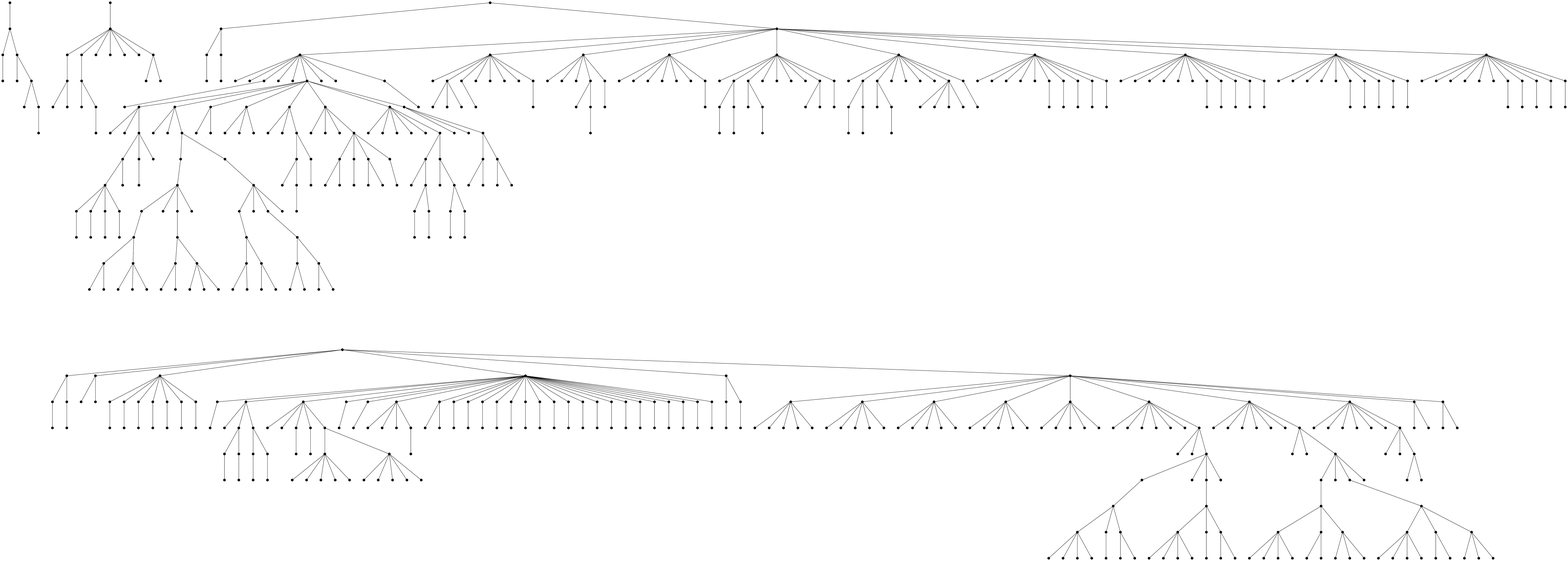}}
  \end{minipage}
 
  \vspace{1cm} 
 
  \begin{minipage}{\columnwidth}
  \subfloat[Asteroids]{
      \includegraphics[width=25cm]{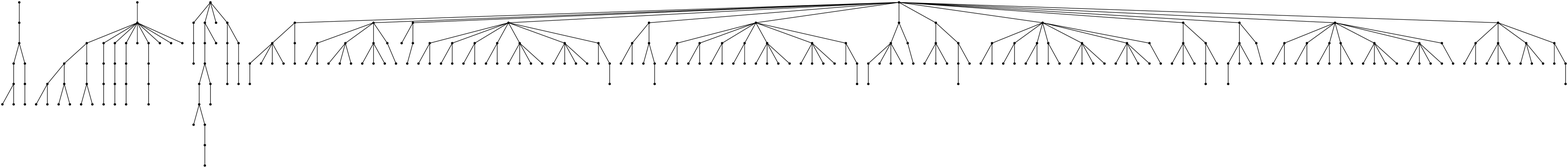}}
  \end{minipage}
  \caption{Reasoning trees for \textit{chess.kif} and \textit{asteroidsserial.kif}}
  \label{fig:chessAsteroidsTrees}
\end{sidewaysfigure}

\begin{figure}[h]
  \centering
      \includegraphics[width=0.8\textwidth]{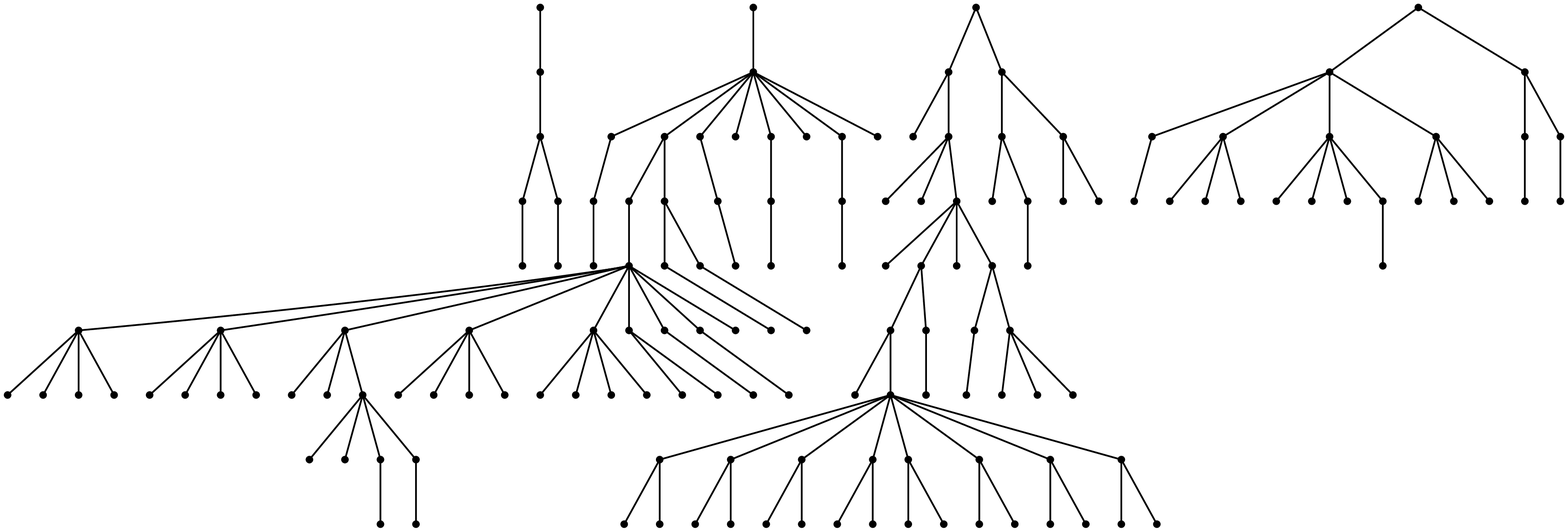}
  \caption{Reasoning trees for \textit{othello-comp2007.kif}}
  \label{fig:othelloTrees}
\end{figure}

\subsection{Leveraging Datalog}

	GDL was designed to describe games, which have natural representations as DFAs. But Datalog or even Datalog stripped of variables is expressive enough to describe them. In fact, a similar  concept was explored in the rule sheet instantiation scheme mentioned in Subsection~\ref{subsec:instantiation}.
\\

As the GDL specification~\cite{gdl_specification} states, it builds directly on top of Datalog, adding a few changes. The most meaningful is addition of function constants, what results in a possibility of occurrence of nested atomic sentences. It also complicates the reasoning; for instance, a sample query
\begin{verbatim}
   (legal xplayer ?move)
\end{verbatim}
can yield
\begin{verbatim}
   ?move : (play 1 3 x)
   ?move : noop
   ?move : (foo (bar (baz x))),
\end{verbatim}
and implies usage of an unification algorithm for inference.
\\
	
In practice, it is rarely handy to use nested atomic sentences. Whether is it worth having that extra syntactic sugar at the expense of complicated reasoning, is a mater of discussion. On the one hand, unification should not consume much time since the expressions to unify are supposed to be simple in most cases (and they almost never have nested function constants). On the other hand, it always adds the overhead. It depends heavily on the implementation of the reasoning engine.
\\

Of course, the matter of usefulness of nested function constants depends on what queries are to be made to the reasoning engine. If, for instance, the previous query would be reformulated to
\begin{verbatim}
   (legal xplayer (play ?x ?y ?p)),
\end{verbatim}
then the answer set would be
\begin{verbatim}
   ?x : 1
   ?y : 3
   ?p : x.
\end{verbatim}
An ordinary agent would have almost fixed set of possible queries throughout the game, specifically:
\begin{itemize}
    \item \texttt{next} queries,
    \item \texttt{legal} queries,
    \item \texttt{goal} queries,
    \item \texttt{terminal} queries.
\end{itemize}

As in Figure~\ref{fig:tictactoeTrees}, those would be the roots of the queries. Because of their arguments' semantics, \texttt{next} and \texttt{legal} rules usually introduce nested atomic sentences (like in \texttt{(<= (legal xplayer (mark ?x ?y x) ...)} for instance).
\\

It follows that it is a matter of making measured queries, to keep the reasoning fairly simple and have variables bind only to object constants. This realization led to the definition of a variant of GDL under the working name mGDL. It guarantees that variables can bind only to object constants, with function constants left merely as a syntactic sugar.

\subsection{mGDL}

As an attempt to simplify GDL, therefore simplifying the reasoning, mGDL has been created as a part of research for this publication. Since the definition of mGDL is almost identical to the original definition of GDL~\cite{gdl_specification}, only altered parts are given. Following the convention adopted from the mentioned document, changes are emphasized with a bold font.

\begin{definition}
\emph{(Satisfaction)}. Let $M$ be a model, and let the sentence in question be an explicitly universally quantified Datalog rule.
	\begin{itemize}
        \item $\models_{M} distinct(t_1, t_2)$ if and only if $t_1$ and $t_2$ are not the same term, syntactically.
        \item $\models_{M} p(t_1, \ldots, t_n)$ if and only if $p(t_1, \ldots, t_n) \in M$.
        \item $\models_{M} \neg \phi$ if and only if $\not\models_{M}\phi$.
        \item $\models_{M} \phi_1 \wedge \ldots \wedge \phi_n $ if and only if $\models_{M} \phi_i$ for every $i$.
        \item $\models_{M} h \Leftarrow b_1 \wedge \ldots \wedge b_n $ if and only if either $\not\models_{M} b_1 \wedge \ldots \wedge b_n$ or $\models_{M} h$ or both.
		\item $\models_{M} \forall X.\phi(X)$ \textbf{if and only if} $\models_{M} \phi(c)$ \textbf{for every object constant} $c$.
	\end{itemize}

\end{definition}

mGDL is somewhat between Datalog and GDL. Function constants have been left for notation consistency's sake, but variables are not allowed to range over sentences. It is also worth emphasizing that, while the semantics change, the syntax does not. Consider the rules
\begin{verbatim}
   (<= (foo a))
   (<= (foo (bar a)))
\end{verbatim}
A query \texttt{(foo ?x)} would yield the following results:
\begin{verbatim}
   ;; GDL
   ?x : a
   ?x : (bar a)

   ;; mGDL
   ?x : a
   ;; ?x cannot be unified with (bar a)
\end{verbatim}
Under the assumption of making measured queries, most of the popular rule sheets have the same semantics in both mGDL and GDL. It was confirmed in Section~\ref{subsec:mgdlTest}.
\\

The simplest form of unification-based~\cite{wiki:unification} reasoning is presented in Algorithm~\ref{alg:queryResolving}. As mentioned earlier, the minimal reasoning engine "interface" exposed to the agent consists of functions corresponding to reserved predicates. With reasoning trees rooted in those queries, the unification would occur multiple times in every single node of those trees, including the leaves.
\\

\begin{algorithm}[h]
 \SetAlgoLined
 \KwData{Sentence $\phi$}
 \KwResult{All substitutions satisfiable $\phi$ in the model}
 \BlankLine
    \If{$\phi = true(\psi)$}{
        \For{fact $f\in F_{dyn} \cup F_{stat}$}{
             $unify(\psi, f)$\;
        }
    }
    \For{rule $r\in R$}{
        \If{$unify(\phi, head(r))$}{
            $resolve(body(r))$\;
        }
    }
\caption{Outline of a simple, unification-based query resolving algorithm}
\label{alg:queryResolving}
\end{algorithm}

With mGDL, the unification is greatly simplified, for it concerns only "flat" atomic sentences. It requires only to check if the corresponding functors and arguments match - and that is linear in time with respect to the number of arguments. Of course, good unification algorithm would also perform in linear time with such data. However, by translating atomic sentences directly to C++ routines, the overhead is greatly reduced, and no complex tree-structured expressions (representing potential substitutions for variables) are being passed as arguments. Instead, language constructs are being leveraged, and function arguments are mapped to simple types only.

\subsection{The Database Aspect}
\label{subsec:database}

Falling back to Datalog-like reasoning with mGDL brings out the database aspect of reasoning. After all, Datalog is the language for deductive databases. Intensive research in that area was conducted starting from the late 80s, and Datalog is still in use to this day~\cite{Datalog11}. Many clever algorithms like the Magic Sets~\cite{MagicSets86} were developed, and there is certainly a room for improvement in developing compact, mGDL reasoning engines too.
\\

Throughout the game, the rules do not change. In a sense, they constitute relational algebra relations. Resolving a query can be seen as a chain of relational algebra operations, namely: projection, selection, natural join and cross join (Cartesian product). For instance, rule
\begin{verbatim}
   (<= (sibling ?x ?y)
       (child ?x ?x1)
       (child ?y ?y1)
       (married ?x1 ?y1))
\end{verbatim}
can be rewritten in relational algebra as
$$ Q = \pi_{X,Y} \left(\left(Child_{X,X1} \times Child_{Y,Y1} \right) \bowtie Married_{X1,Y1} \right).$$

When choosing an external reasoning engine, instead of reaching for Prolog because of similarities of syntax/semantics, it could be more efficient to pick an engine that closer matches the needs in terms of what it does. An example of such engine would be a one based on Datalog or Relational Algebra. mGDL could mitigate the difficulties in suiting systems of this kind to GGP. This publication, however, does not further pursue the idea.

\subsection{Table-Driven and Query-Driven Models}
\label{subsec:gtcModels}

As mentioned earlier, one of the goals of this publication was to pursue an efficient GDL to C++ translation scheme. The developed approach required translating each rule and fact to a C++ function. As the evaluation of an expression (query) proceeded in a top-down manner, the function matching the query was called. In the function's body, the literals were evaluated by calling the corresponding functions, and so on.
\\

Two models of carrying tree evaluation were chosen for implementation and testing, both sharing essentially the same components: a table-driven and a query-driven one. They varied in how variables were handled and passed down the tree.
\\

By enumerating all object constants in a particular rule file, they can be mapped to natural numbers. In mGDL, storage of potential variable values is straightforward, as they do not have to be polymorphic. At minimum, a set of values a variable might be bound to, can be represented by a linear container of primitive integer type.

\paragraph{The table-driven model}
In this model, a two-dimensional array of primitive type is stored in the memory. It might be though of as a simple table of valid variable substitutions. More specifically, each column corresponds to a GDL variable, and each record to a valid set of values.
\\

With such table, tree evaluation still proceeds top-down. The nodes are visited along a depth-first search path:
\begin{itemize}
\item Each time a node $v$ is visited, new variables might be declared.
\item Variables declared in $v$ are local only for the subtree rooted in $v$.
\end{itemize}
The second property allows to always add/remove columns in the "last in, first out" fashion. Simply speaking, the table will never have empty columns in between. An example is shown in Figure~\ref{fig:lilo}.

\begin{figure}[h]
	\centering
	\begin{tabular}{ccc}
	\subfloat[Entering a node in the reasoning tree]{\includegraphics[scale=0.6]{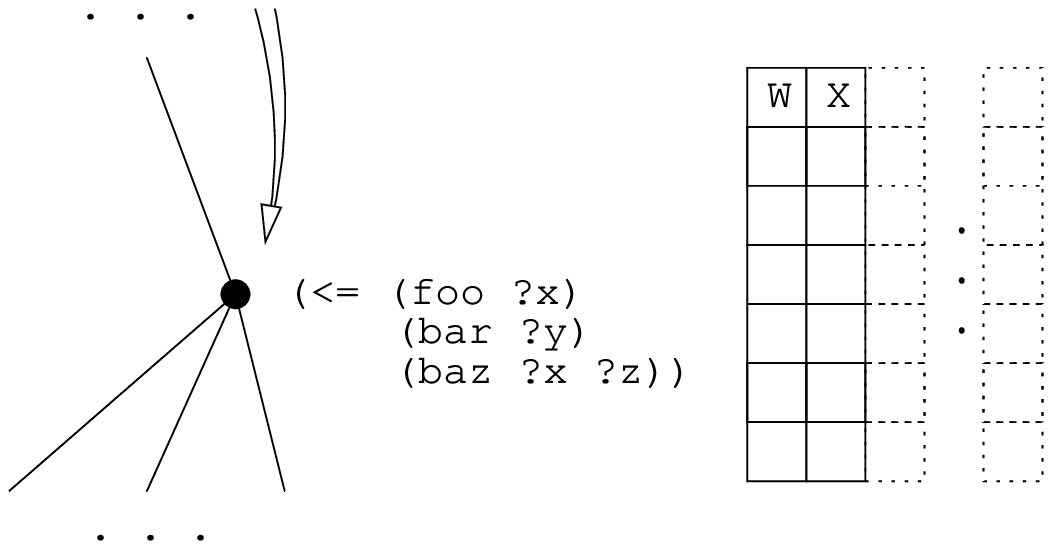}}
	\hspace{2cm}
	\subfloat[Progressing down the tree; room for variables local to the subtree reserved]{\includegraphics[scale=0.6]{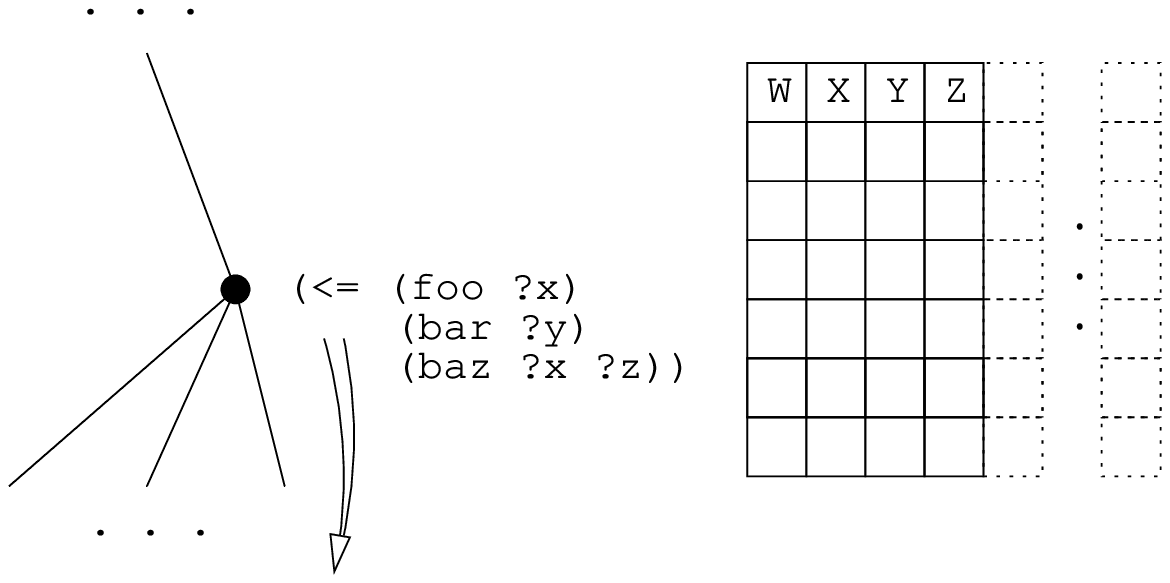}} \\
	\vspace{0.5cm}\\
	\subfloat[Collecting the results of the subtree; local variables erased]{\includegraphics[width=0.4\textwidth]{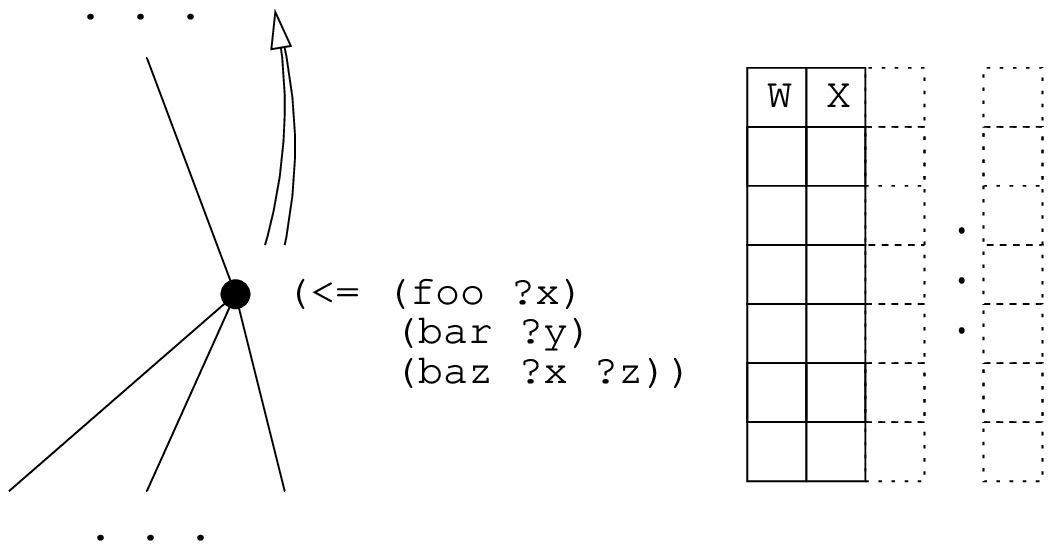}}
	\end{tabular}
\caption{Column management in the LIFO fashion}
\label{fig:lilo}
\end{figure}

However, the columns were not simply added; the join operation took place, as soon as new variables were bound to possible sets of values. In the example in Figure~\ref{fig:lilo}, let $VS$ denote the \texttt{VarStore} relation (table), and $Bar$, $Baz$ relations obtained from \texttt{(bar ?y)} and \texttt{(baz ?x ?z)} respectively. After \texttt{bar} has been evaluated, the new relation $VS'$ would be \[VS'=VS\times Bar\] and, after resolving \texttt{baz}, $VS''$ would be equal to \[VS''=VS'\bowtie Baz\] (with both relations joined on $X$).
\\

One advantage of this approach was the simplicity of the source code. Recalling from the GDL specification~\cite{gdl_specification} that a GDL rule is an implication
\[h \gets b_1 \wedge \ldots \wedge b_n,\] where the head $h$ is an atomic sentence and each $b_i$ in the body is a literal, it was translated roughly to
\begin{verbatim}
   bool h() {
       return b1() && ... && bn();
   }
\end{verbatim}
with the \texttt{VarStore} object passed by reference between the functions.
\\

Another advantage was the memory management. A large table variable was declared at run-time and kept throughout the whole game, solving the memory management problem mentioned in~\cite{cpp09}. The maximum number of variables (columns) that have to be stored simultaneously, bounding the table size in one dimension, can be determined by a simple DFS browsing of the reasoning tree.

\paragraph{The query-driven model}

To rule out join operations completely, the table-driven approach was combined with the idea from~\cite{cpp09}. Previously, processing a rule required evaluating each of the literals only once. The latter has literals evaluated multiple times in nested loops.
\\

The change shifts the cost, from manipulating a variable table, to evaluating the same literals multiple times (which also results in multiple function calls), but with less data.
\\

The query-driven model still used the variable table object - but only locally to a literal. A GDL rule was translated roughly to
\begin{verbatim}
   bool h() {
       vs_1 = b1();
       for (int i1 = 0; i1 < vs_1.len; ++i1) {
           r2 = b2();
           ...
               vs_n = bn();
               for (int in = 0; in < vs_n.len; ++in) {             
                   results.add_result();
               }
           ...
       }
   }
\end{verbatim}
where $\text{\texttt{r1}}, \ldots, \text{\texttt{rn}}$ denote \texttt{VarStore} objects.

\subsection{Generating the Source Code}

There are three main kinds of entities that might occur in a rule file Their summary is shown in Figure~\ref{tab:kifObjects}. Each kind has been converted to a piece of C++ source code in a slightly different manner.

\begin{table}[h]
  \caption{Summary of rules' entities with their counterparts in Prolog and GTC reasoning machines}
  \centering
  	\begin{tabular}{|l|l|p{6.5cm}|}
	\hline
	GTC C++ code & Prolog & GDL \\ 
	\hline
	container \& functions & static fact & ground atomic sentence, e.g. \texttt{(role xplayer)}\\ 
	\hline
	container \& functions & dynamic fact & ground atomic sentence $\phi$ such that $\models_M next(\phi)$ in the previous state, e.g. \texttt{(mark 1 1 x)} \\
	\hline
	function & static rule & rule, e.g. \texttt{(<= terminal (not open))} \\ 
	\hline
	\end{tabular}

  \label{tab:kifObjects}
\end{table}

\paragraph{Rules}

Each rule had a function associated (precisely, a set of overloads); functions' bodies were dependent on the chosen reasoning model.

\paragraph{Static and dynamic facts}

Following the notion adapted from Prolog-based agents, relation constants were assumed unique to either static facts, dynamic facts, or rules. With this assumption, static facts are easy to recognize; those are the ground atomic sentences hanging loosely in the KIF files, and they stay unchanged throughout the whole game.
\\

Dynamic facts are a little harder to distinguish; during the first turn, those are the ones decorated with the \texttt{init} functor. After the turn has passed, they should be replaced with new ones inferred with the \texttt{next} rules.
\\

Both kinds of facts had C++ containers and a set of querying functions associated.

\paragraph{Name mangling}

Because in mGDL, function constants have been syntactically left intact, it became less obvious how to translate nested atomic sentences to C++ functions. A simple name mangling scheme was used, so the name of a function could reflect the structure of the corresponding atomic sentence. The conversion was simple; it essentially required to flatten\footnote{The metaphor of flattening nested expressions was borrowed from Ruby's \texttt{flatten} method for arrays.} the sentences. For instance, rule head
\begin{verbatim}
   (<= (legal ?player (move ?x ?y ?piece))
\end{verbatim}
would be flattened to
\begin{verbatim}
   (<= (legal_ARG_LPAR_move_ARG_ARG_ARG_RPAR ?player ?x ?y ?piece)).
\end{verbatim}
The name reflects all terms and parentheses, read from left to right. The naming scheme is verbose in part for the human reader's sake. The finished, compiled reasoner did not break up the function name to analyze it during the runtime.

\paragraph{The procedure}

The complete procedure of generating the source code was as follows:
\begin{enumerate}
  \item The rule sheet, assumed to be written in mGDL, was preprocessed. Preprocessing consisted of identifying static/dynamic functors, removing \texttt{true} functors, and flattening the sentences with the name mangling scheme. \texttt{or/n} literals were isolated to $n$ separate sub-rules.
  \item For each fact and each rule, functions (along with the relevant overloads) were generated.
  \item The common interface was completed with interface functions for making precise queries to \texttt{legal}, \texttt{next}, \texttt{goal} and \texttt{terminal}.
  \item The code was compiled as a shared object.
\end{enumerate}

The common, pre-written code included the reasoning engine interface, auxiliary wrappers for saving/loading states, resetting the games etc., variable table, and fact containers.

\subsection{Optimizations}

The original paper describing translating GDL rule sheets to C++ code also featured description of optimizations. The following paragraphs provide a short commentary for those, with respect to the proposed new translation scheme.

\paragraph{Memorization}

Waugh~\cite{cpp09} employed memorization of state-independent query results (those, which do not rely on the \texttt{does} relations). In the query-driven model, with immediate substitution of variables with values, leaves with no variables are getting evaluated far more often than in the table-driven approach. Evaluating leaves with no variables, which defaults to a boolean query, can be realized simply by a hashset lookup - and it further adds to the performance. While memorization of fact queries might not be worth the cost, an efficient technique of memorizing rule queries (again, only state-independent) might yield further improvements.
\\

In implementation made for this publication, only memorization of fact-queries was investigated. Facts were called in the leaves of the reasoning tree; queries might have been either entirely boolean (with no variables), or they might have required a join operation.
\\

Because of the query-driven model characteristics, fact queries were called more often and with less variables, than in the table-driven approach. Particularly, profiling the output with Google CPU Profiler\footnote{Part of Google Performance Tools~\cite{googlePerfTools}.} revealed that for most of the games, fact queries with no variables took most of the time spent on fact queries. The amount varied from game to game, being usually around 5\% - 30\% of total runtime.
\\

To mitigate the issue, a hashset lookup was implemented with open addressing hashsets\footnote{This particular method allowed for quick calculation of the hash, based on the fact's arguments. It also took advantage of the sparse nature of the values.}. Double hashing  was used. Again, for each fact, memory for lookup set was reserved at startup. This form of caching was used for all fact containers, and yielded improvements for all of the tested games.

\paragraph{Relevant function overloads}

To reduce the size of output source files and compilation times, only the necessary function overloads (for rules and facts) were generated. Each function was overloaded with possible combinations of input arguments as constants/variables, but not all combinations were present in the reasoning tree.
\\

With a fixed set of possible queries, all possible function overloads have been identified. To achieve that, a DFS search in the reasoning tree was employed, substituting variables with labels \texttt{<var>} and \texttt{<const>}. In other words, trees have been flooded to see what kinds of arguments were to be expected in the rules' heads. 

By tracing the labels, overloads were obtained. It should be however noted, that such set is a superset of the sufficient set of overloads, as some literals and rule heads agreeing in functors and arities might still never unify during the game. For instance \texttt{(goal <var> <const>)} corresponding to \texttt{(goal <var> 100)} would still unify with \texttt{(goal ?x 50)}. However the supersets, even for demanding game rules like chess and checkers, turned out to be of acceptable size.
\\

Function overloads, prepared this way, fit well with the C++ overload constructs. Below are sample definitions of functions from the aforementioned Tic-tac-toe, obtained by the method:

\begin{figure}
\begin{SaveVerbatim}{VerbtttOverloads}
bool diag(VarStore *& vs, const Constant & c1);
bool line(VarStore *& vs, const Constant & c1);
bool not_open(VarStore *& vs);
bool mark(VarStore *& vs, Variable v1, Variable v2, Variable v3);
bool mark(VarStore *& vs, Variable v1, const Constant & c1,
          const Constant & c2);
bool mark(VarStore *& vs, const Constant & c1, Variable v1, 
          const Constant & c2);
bool mark(VarStore *& vs, const Constant & c1, const Constant & c2,
          const Constant & c3);
\end{SaveVerbatim}
	
	\centering
    \setlength{\fboxsep}{5mm}
    \fbox{\BUseVerbatim{VerbtttOverloads}}
    \caption{GTC C++ function definitions from generated \textit{tictactoe.kif} reasoning machine}
    \label{fig:tttOverloads}
\end{figure}

\paragraph{Reordering of literals}

\cite{cpp09} also employed reordering of literals by the number of unknown variables, but with respect to Datalog Rule's Safety property~\cite{gdl_specification}. Saffidine and Cazenave~\cite{ocaml11} however, has not reordered the literals, under an assumption that the creator of the rule file might have already ordered them, keeping efficiency in mind. Under the same assumption, in implementation made for this publication, literals also have not been reordered.
\\

\section{Spatial Game Knowledge}
\label{sec:knowledgeMining}

The following section describes an attempt made in this thesis, to transfer the ideas of obtaining feature-based game knowledge in Go, mentioned in Chapter~\ref{chap:sota}, to GGP. Numerous successful MCTS-based Go agents achieved professional rankings by taking advantage of such methods, which could be of great value to GGP as well.
\\

There are, however, legitimate reasons why the feature approach is not straightforward to be applied to GGP. To name a few:
\begin{itemize}
    \item Knowledge modules usually use game-specific features.
    \item An arbitrary game (in it's abstract form) not necessarily takes place on a 2-dimensional board, what further discards some universal features. Even if the game has some kind of a board, the agent is not aware of it's structure.
    \item Board pieces cannot be assumed to be persistent between states; 
          predictive power of patterns may be limited. For instance, a few games might be "mixed" into one and being played in parallel, alternating between entirely different board situations.
    \item There are no quality game records for general games, useful for mining features and weights.
    \item Resources devoted to developing the agent are limited in GGP (usually it is the startclock phase of a playout).
    \item No human intervention is assumed in GGP. The agent would have to tune all the parameters by itself, which is again hard to achieve reliably within the resource constrains.
\end{itemize}

This publication relaxes the limitations imposed by the GGP specification, so the developed GGP agent could be equipped with more advanced features in the likeness of Go-playing agents, and the other MCTS-based ones. However, the assumption of no human intervention during the whole process is retained, as the most crucial one.
\\

The following sections address the aforementioned issues in detail.

\subsection{The Euclidean Board Metric Extension of GDL}
\label{subsec:gdlExtension}
This section covers a proposed extension to GDL, introduced in order to reliably extract Go-like game features. Subsection~\ref{subsec:boardDiscovery} describes some of the tools allowing to infer the existence of the board and define the Euclidean metric for a particular GDL rule sheet. However, none of these tools are reliable; good tools would have to thoroughly understand a particular game and it's spatial arrangement, a task that might not be at all possible for some games. Although the class of games describable by GDL is enormously large, most of the real-world games translated to GDL are actually square-board games or at least they have a quantified, spatial representation. In other words, many of them are not purely abstract (like rock-paper-scissors in contrast to Tic-tac-toe) or do not happen in continuous domains (like the original Ms. Pac-Man).
\\

For this reason, this publication introduces an extension to the GDL specification, consisting of new reserved relations, describing multi-dimensional boards and the corresponding pieces. With the extension, underlying structures of most of the board games can be recognized unequivocally. The extension is not mandatory, and it requires only to supply the meta data in form of static facts in the KIF file.
\\

The system works with boards of hypercubical shape. Formally a board is a set:
$$\left\{
\begin{pmatrix}
r_1, \cdots, r_n
\end{pmatrix}^T
: d_{min} \leq r_1, \dots, r_n \leq d_{max} \right\},$$

which is described in GDL by the following relations:

\begin{description}
    \item[\texttt{boardboundaries/2}] - a range $[d_{min};d_{max}]$ for a single board dimension,
    \item[\texttt{boardrelation/1}] - a relation denoting a single board field, like \texttt{cell} or \texttt{mark},
    \item[\texttt{boardpattern/k}] - a pattern describing the meaning of the board relation's arguments, 
    \item[\texttt{playfunctor/1}] - a relation denoting placing a piece on a board,
    \item[\texttt{playpattern/l}] - a pattern describing the meaning of the play relation's arguments.
\end{description}
\noindent
Additional reserved constants used in the relations are:
\begin{description}
    \item[\texttt{piece}] - a piece argument,
    \item[\texttt{dim}] - a board dimension,
    \item[\texttt{skip}] - a meaningless argument.
\end{description}

\vspace{0.5cm}
The added extensions require an additional:
\begin{definition}
\emph{(GDL extensions restriction)}.
Each extension relation only appears in ground atomic sentences. Let $\{d_{min}, d_{max}\}$, $\{t_1,\ldots,t_k\}$, $\{r_1\}$, $\{t'_1,\ldots,t'_k\}$, $\{r_2\}$ be the arguments of, respectively, \texttt{boardboundaries}, \texttt{boardrelation}, \texttt{boardpattern}, \texttt{playfunctor}, \texttt{playpattern}. They should meet the following conditions:
\begin{itemize}
    \item $d_{min}$, $d_{max}$ are valid real numbers (represented by decimals with an optional, "."-delimited fraction part), $d_{min} \leq d_{max}$,
    \item all values of $\{t_1,\ldots,t_k\}$ and $\{t'_1,\ldots,t'_k\}$ are either \texttt{piece}, \texttt{dim}, \texttt{skip},
    \item $\{r_1\}$ and $\{r_2\}$ are valid relations appearing elsewhere in the rule sheet.
\end{itemize}
\end{definition}

The extension behaves a lot like the \texttt{role} predicate - it resides in the KIF file and serves as a reference for the agent. If the agent does not support it, it will be transparent, as the sentences simply will not take place in the reasoning. When adding the extension, one would also have to ensure no name collisions would occur. Figure~\ref{fig:gdlExtensions} presents an example of the extension from a rule file.

\begin{figure}
\begin{SaveVerbatim}{VerbgdlExt}
;; The extension.	

(boardboundaries 1 8)
(boardfunctor mark)
(boardpattern dim dim piece)

(playfunctor play)
(playpattern piece skip skip dim dim)
\end{SaveVerbatim}
	
	\centering
    \setlength{\fboxsep}{5mm}
    \fbox{\BUseVerbatim{VerbgdlExt}}
    \caption{The GDL spatial extension applied to \textit{chess.kif}}
    \label{fig:gdlExtensions}
\end{figure}

\subsection{Spatial Features}
\label{subsec:features}

A typical feature is a function: 

\[ \delta :(s_n, M_{n-1}, m_n) \rightarrow \{0,1\}, \]

where $s_i$ is the $i-$th game state, $m_i$ is a move to be made in this state by the investigated player, and $M_i$ is a set of moves for all the players. Simply put, the function checks if the intended move meets particular conditions (creates a pattern, takes part in a formation, makes a capture, etc.) and acts as an indirect prediction for the next state.
\\

The following system of simple features has been designed with arbitrary games in mind. The prerequisites were:
\begin{itemize}
\item a hypercubical board,
\item players making move by indicating coordinates on the board (by placing, moving, removing pieces, etc.),
\item relations describing boards and moves, following the conventions close enough to be described by the extension.
\end{itemize}

\noindent
Each feature had a weight associated, and the features later took part in the agents' plays. The features included:

\begin{description}
\item[Proximity between the moves] A distance between the last and the current move. If many players have made the move during last turn, it's a distance to the closest one. This is the proximity feature that is well known to pay out in Go.

\item[Nearest border distance] A distance to the closest board border, expressed with a single, real number.

\item[Absolute piecewise move] A move made by a player, regardless of a state or any other contributing factor, weighted and marked as a good or a bad one. The feature has been inspired by Cadiaplayer's MAST technique described in Subsection~\ref{subsec:incorporation}.

\item[Absolute piecewise move in area] Presence of a piece in a specific board area. If square board's fields $(1,1)$ and $(1,2)$ belong to the same area, facts \texttt{(mark 1 1 x)} and \texttt{(mark 1 2 x)} would yield the same feature.

\item[K-nearest neighbors] A list of $k$ closest neighboring pieces (in respect to the field where a move is to be made), ordered lexicographically.

\item[K-nearest neighbors in one dimension] Similarly, a list of $k$ nearest pieces only in 1 dimension, but ordered by the distance. They would correspond to columns and rows on a $2$-dimensional plane.

\item[Itemsets only] A dummy feature, meant to be backed up with itemsets, that are enough interesting alone. More information on itemsets is provided in Subsection~\ref{subsec:itemsets}.
\end{description}

\subsection{Board Areas}

Some of the features rely on the concept of board areas. A board was divided into square (or hypercubical in general) areas, so the common features for neighboring fields could be captured. Let $d = |d_{max}-d_{min}|$ be the board size. Variable area size was chosen as

$$s = \left\lfloor \frac{(d+1)}{log_2 \left(d + 1\right) + 1} \right\rfloor .$$

\noindent
Areas were indexed with natural numbers. An area $A_{t_1,\ldots,t_n},\; t_i\in\mathbb{N}$ was defined as

$$\left\{ \left(r_1,\ldots ,r_n\right)^T: d_{min} + st_i \leq r_i < d_{min} +s(t_i+1),\quad i\in\left\{1,\ldots,n\right\} \right\} .$$

The definition was tailored specifically to $8 \times 8$ boards, so it would result in $2 \times 2$ ares. A sample area is shown in Figure~\ref{fig:area}. The upper borders of the area are not inclusive (marked with dashed lines) - this way an area of size $n$ spans over $n$ integer points in each of the dimensions.

\begin{figure}[h]
  \centering
      \includegraphics[width=0.5\textwidth]{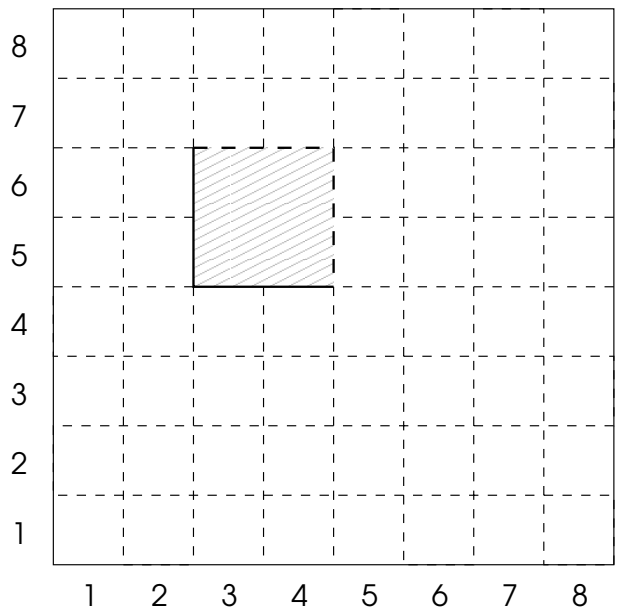}
  \caption{Area $A_{1,2}$ on an $8\times 8$ board. Edges marked with dashed lines do not belong to the area, therefore the areas do not overlap and cover the whole board.}
  \label{fig:area}
\end{figure}

\subsection{Meta Facts}

Apart from spatial features, which act like simple predictions for consecutive game states, a similar mechanism was used to express the statements about the current state only.
\\

Meta facts are realized by binary functions; a particular meta fact is either present in a state or not. They are aggregated into groups; each group $\psi$ is associated with a function 

\[ \psi : S \rightarrow \{0,1\}^n , \]

returning a binary vector depicting, if the corresponding features are present or not. A meta fact describing that something occurs in area $A_{i,j}$, would return a list of all the areas indicating, which are the interesting ones. Meta facts (groups) share the following properties:
\begin{itemize}
    \item $n \leq |S|$ (size of the vector is not greater than the number of facts),
    \item some of meta facts are more frequent than average facts.
\end{itemize}

Of course, far more sophisticated methods could have been employed to overcome the problem of finding associations between the pieces. However, such simple way of abstracting information about the game mimics the human ability of doing so.
\\

\noindent
So far, only two meta facts were used:

\begin{description}

    \item[Any piece in a field] A particular field having any piece on it. It might be useful for games with many pieces, but also for situations, when a distinction between pieces of the player and his opponent is not necessary.
    
    \item[Piece in area] Like the feature with a similar name, true when a certain piece lays within the area.
    
\end{description}

\section{Obtaining the Knowledge}
\label{sec:analyzer}

One of the goals of this thesis was to design a system, not only able to play an arbitrary (board) game, but also able to gradually learn it. Hence the assumption, that the rules are known for an undefined amount of time before the actual playout.
For the computations were very time consuming, hours where expected, rather than minutes or seconds, as it is usually the case with GGP.
\\ 

A tool external to the agent, Knowledge Miner (also referenced as the analyzer), has been designed to perform the learning process, thus completing the spatial board knowledge approach made to address the learning issue in this publication. The following sections describe a variant of Simple Genetic Algorithm along with the feature finding scheme, used by the analyzer for developing a strong knowledge file.

\subsection{Self-Play Through Modified SGA}

With no game records available for an arbitrary game, a scheme with a population of competing agents has been created with the following goals in mind:
\begin{itemize}
    \item producing game records of greater quality, than with bare UCT agents,
    \item gathering meaningful features by analysis of those records,
    \item making crucial choices concerning utilization of features,
    \item fine-tuning knowledge parameters.
\end{itemize}

Simple Genetic Algorithm~\cite{Goldberg:1989:GAS:534133} has been used as a basis for the evolutionary algorithm. Skill of individuals was meant to gradually improve with every generation, as well as the quality of their game knowledge. The individuals have been scored according to their performance during inter-generation matches. Apart from regular evolutionary operators, separate feature mining took place (using game records gathered during the previous generation). Then, after applying evolutionary operators, the agents were allowed to update their knowledge with newly mined features, according to their individual feature-learning policies. The goal was to improve upon both the knowledge and game records, to finally develop the ultimate game knowledge. The algorithm is shown in Algorithm~\ref{alg:evoAnalyzer}.
\\

Mining features for each individual alone has been rejected as too time- and resource- consuming. Because of this shortcoming, at the knowledge mining phase, the knowledge has been mined from the entire population's game records combined. Of course having many agents sharing the same knowledge would be pointless; to bring back the diversity, each agent had a simple learning policy and a set of varying knowledge parameters. The parameters were also subjected to evolution.

\begin{algorithm}[h]
 \SetAlgoLined
 \KwData{GDL rule sheet}
 \KwResult{Knowledge file}
 \BlankLine

$P \gets Initial-Population(n)$\;
\While{not $Termination-Condition(P)$}{
    \For{$r$ rounds}{
        $G \gets Randomly-Schedule-Games(P)$\;
        \For{game $g\in G$}{
            $R = R \cup \{result(g)\}$\;
        }
    }
    $P_p \gets Parent-Selection(P,R)$\;
    $P_c \gets Crossover(P_p)$\;
    $Muation(P_c, \theta)$\;
    $Replacement(P, P_c)$\;
    $F \gets Mine-Features(R)$\;
    \For{individual $I\in P$}{
        $Update-Using-Policy(I,F)$\;
    }
}

\caption{The genetic algorithm based on SGA. GGP Agent knowledge objects (knowledge files) constitute as individuals. The files are scored depending on quality of plays between agents using them. Instead of taking raw mean of number of victories, Elo ranking system is used for better convergence. During the Mine-Features phase, game records from previous generations are being analyzed, revealing new candidate features. Knowledge files are being updated with candidate features according to their learning parameters.}
\label{alg:evoAnalyzer}
\end{algorithm}

\paragraph{Chromosome}

The basic chromosome has been made a string of $\mathbb{R}$, $\mathbb{N}$, and 0-1 values. The values represented different knowledge parameters (more on parameters in Section~\ref{subsec:gameKnowFormat}). Because of this heterogeneity of genes, uniform crossover has been employed. Each chromosome has been also paired with a feature list. The list were subjected only to the crossover operator: the resulting list was a list of $n_{max}$ best features (in respect to their weights) from a set of unique features, from both of the parents' lists combined. The number $n_{max}$ has been stored in the chromosome. Updates of feature lists (with the population's newly mined knowledge) have been carried in a similar fashion.

\subsection{Feature Mining}

Following the ideas from Go (Section~\ref{sec:goKnowledge}) of analyzing professional game records, a similar scheme has been employed for mining features. Moves can be associated with states, in which they were made, of both winning and loosing parties. The games were not assumed symmetric (or almost-symmetric); while playing with white and black pieces in chess is almost the same in theory, features for both colors were chosen separately.
\\

To begin mining on a set of records, all the records were loaded; for each player and each state, the occurring features were recognized and divided into sets $F_w$ and $F_l$, winning and loosing feature sets respectively.
\\

To recognize correlations of certain features with success or failure, the phi correlation coefficient was used. For each feature $\delta$, the following values were computed:
\\

\begin{center}
\begin{tabular}{ c|c|c|c| }
\cline{2-4}
& \# of winning & \# of loosing & \multirow{2}{*}{total}\\
& states & states & \\
\hline
\multicolumn{1}{ |c| }{ $\delta$ present } & $n_{0,0}$ & $n_{0,1}$ & $n_{0,*}$ \\ \hline
\multicolumn{1}{ |c| }{ $\delta$ absent } & $n_{1,0}$ & $n_{1,1}$ & $n_{1,*}$ \\ \hline
\multicolumn{1}{ |c| }{total} & $n_{*,0}$ & $n_{*,1}$ & $n_{*,*}$ \\ \hline
\end{tabular}
\end{center}

The phi coefficient for correlation between feature occurrence and winning was:

$$\phi_{good} = \phi = \frac{n_{0,0}n_{1,1}-n_{0,1}n_{1,0}}{\sqrt{n_{*,0}n_{0,*}n_{*,1}n_{*,1}}}.$$

Additionally, the coefficient between feature occurrence and loosing required swapping table columns; specifically:

$$\phi_{bad} = \frac{n_{0,1}n_{1,0}-n_{0,0}n_{1,1}}{\sqrt{n_{*,0}n_{0,*}n_{*,1}n_{*,1}}} = -\phi_{good}.$$

It follows that $-1 \leq \phi_{good}, \phi_{bad} \leq 1$.
\\

Features, that exceeded the confidence threshold $r$, that is $|\phi| > r$ were picked as correlated with winning or loosing (depending on the sign of $\phi$). The $|\phi|$ value also influenced the initial weights for those features.

\subsection{Apriori-Like Algorithm for Fact Sets}
\label{subsec:itemsets}

Mined features, while having $\phi > r$, underwent an additional fact set mining phase. The point of that phase was to back the features with frequent item sets which, while lowering the chance of their occurrence, would strengthen the correlation.
\\

Each game state was treated as a basket of facts it consisted of. A simple variation of the Apriori Algorithm~\cite{Agrawal:1994:FAM:645920.672836} was applied to find out which fact sets are associated with winning, and which with loosing (also having the mentioned feature present). Unfortunately, Apriori turned out to be cumbersome when working with the two pools; i.e. mining winning and loosing fact sets separately, and then subtracting the sets from each other, seems like a huge overkill. Perhaps it would be more natural to somehow subtract the states \textit{before} running Apriori. The encountered problem could be formulated as follows:

\begin{problem}
A basket is a set of objects. Given two sets of baskets: $D$ (\textit{desirable}) and $U$ (\textit{undesirable}), two constants: $d_{min}$, $u_{max}$, find all the itemsets, which have the support $\geq d_{min}$ in the $D$ set and $ \leq u_{max}$ in the $U$ set.
\end{problem}

The problem is rather hard to tackle; the Apriori itself might, under the right circumstances, return an exponentially large output, and a subpart of the algorithm is NP-complete~\cite{AprioriPerf04}. The hard part is, however, finding itemsets frequent and infrequent (in different sets) at the same time.

\subsubsection{The downward closure lemma}

The following auxiliary relations simplify the stated problem. Let $D=\{t_1,t_2,\ldots,t_n\}$ be the set of baskets, $I$ an itemset, $r_f,r_i\in \mathbb{R}$:
\begin{eqnarray}
\textit{freq}(I) \Leftrightarrow \textit{supp}(I) \geq r_f,\\
\textit{infreq}(I) \Leftrightarrow 1 - \textit{supp}(I) \geq r_i.
\end{eqnarray}
The problem comes down to finding itemsets both frequent in $D$ and infrequent in $U$. Finding frequent itemsets belongs to the Apriori Algorithm. The algorithm is based on the downward closure lemma, which guaranties that larger (candidate) itemsets may be obtained by extending smaller ones. However, extending itemsets does not work with infrequent itemsets, where  larger itemsets could be narrowed down to obtain smaller candidate itemsets. The following lemmas express these properties through \textit{freq}/\textit{infreq} relations:

\begin{lemma}
Let $I_n$ be an itemset of size $n$. For $I_{k-1} \subseteq I_k$, the following hold:
\begin{eqnarray}
\label{eq:lemma1.1}
\textit{freq}(I_k) \Rightarrow \textit{freq}(I_{k-1}),\\
\label{eq:lemma1.2}
\textit{infreq}(I_k) \Leftarrow \textit{infreq}(I_{k-1}).
\end{eqnarray}
\end{lemma}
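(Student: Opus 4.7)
The plan is to reduce both implications to a single monotonicity property of the support function, namely that $\textit{supp}$ is antitone with respect to itemset inclusion: if $I_{k-1} \subseteq I_k$, then $\textit{supp}(I_{k-1}) \geq \textit{supp}(I_k)$. This is the standard ``downward closure'' observation underlying the Apriori Algorithm, and once it is available the two claims (\ref{eq:lemma1.1}) and (\ref{eq:lemma1.2}) follow by purely elementary manipulations of the defining inequalities.

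First I would establish the monotonicity lemma. For any basket $t \in D$, if $I_k \subseteq t$ then $I_{k-1} \subseteq I_k \subseteq t$, so every basket supporting $I_k$ also supports $I_{k-1}$. Dividing the cardinalities of the two supporting basket sets by $|D|$ gives $\textit{supp}(I_{k-1}) \geq \textit{supp}(I_k)$.

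Next I would discharge (\ref{eq:lemma1.1}). Assuming $\textit{freq}(I_k)$ means $\textit{supp}(I_k) \geq r_f$; combined with the monotonicity step this yields $\textit{supp}(I_{k-1}) \geq \textit{supp}(I_k) \geq r_f$, hence $\textit{freq}(I_{k-1})$. For (\ref{eq:lemma1.2}), assuming $\textit{infreq}(I_{k-1})$ means $1 - \textit{supp}(I_{k-1}) \geq r_i$, i.e.\ $\textit{supp}(I_{k-1}) \leq 1 - r_i$; combined again with monotonicity this gives $\textit{supp}(I_k) \leq \textit{supp}(I_{k-1}) \leq 1 - r_i$, i.e.\ $1 - \textit{supp}(I_k) \geq r_i$, which is exactly $\textit{infreq}(I_k)$.

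There is no real obstacle here: the only subtle point is to notice that the direction of the implication in (\ref{eq:lemma1.2}) is flipped relative to (\ref{eq:lemma1.1}), precisely because \emph{infrequency} is defined via $1 - \textit{supp}$ and therefore inherits the opposite monotonicity. A brief remark ensuring that the supporting basket set for the larger itemset is contained in that of the smaller one is sufficient to make both statements obvious; no case analysis or additional machinery is required.
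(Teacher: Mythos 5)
Your proposal is correct and follows essentially the same route as the paper: both rest on the antitonicity of $\textit{supp}$ under itemset inclusion (every basket containing $I_k$ contains $I_{k-1}$), from which $\textit{supp}(I_{k-1}) \geq \textit{supp}(I_k) \geq r_f$ and $1 - \textit{supp}(I_k) \geq 1 - \textit{supp}(I_{k-1}) \geq r_i$ give the two implications. No gap to report.
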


\begin{proof}
From the definition of support:
$$\textit{supp}(I_k) = \frac{ |\{t\in D : I_k \subseteq t\}| }{|D|}.$$
Because $I_{k-1} \subseteq I_k$, then
$$\{t\in D: I_k \subseteq t\} \subseteq \{t\in D: I_{k-1} \subseteq t\}$$
so $\textit{supp}(I_{k-1}) \geq \textit{supp}(I_k)$. It follows from definitions of \textit{freq} and \textit{infreq} that
\begin{eqnarray*}
\textit{supp}(I_{k-1}) \geq \textit{supp}(I_k) \geq r_f,\\
1 - \textit{supp}(I_k) \geq 1 - \textit{supp}(I_{k-1}) \geq r_i.
\end{eqnarray*}
\end{proof}

\begin{lemma}
Relations inverse to~\ref{eq:lemma1.1} and~\ref{eq:lemma1.2}, that is
\begin{eqnarray}
\label{eq:lemma2.1}
\textit{freq}(I_k) \Leftarrow \textit{freq}(I_{k-1}),\\
\label{eq:lemma2.2}
\textit{infreq}(I_k) \Rightarrow \textit{infreq}(I_{k-1}),
\end{eqnarray}
do not hold.
\end{lemma}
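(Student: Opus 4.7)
The statement claims that two implications fail in general, so the natural plan is to exhibit counterexamples: small, explicit basket sets for which the antecedent holds but the consequent fails. Since relations~\ref{eq:lemma2.1} and~\ref{eq:lemma2.2} are essentially independent (one concerns \textit{freq}, the other \textit{infreq}), I would construct one example for each, though with care a single basket set can refute both at once.

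For~\ref{eq:lemma2.1}, I would fix a threshold $r_f$ (say $r_f = 1/2$) and take a basket set $D$ in which an item $a$ is abundant while the pair $\{a,b\}$ is rare. Concretely, taking $D = \{\{a\},\{a\},\{a\},\{a,b\}\}$, the singleton $I_{k-1} = \{a\}$ has $\textit{supp}(I_{k-1}) = 1 \geq r_f$, so $\textit{freq}(I_{k-1})$ holds, whereas $I_k = \{a,b\}$ has $\textit{supp}(I_k) = 1/4 < r_f$, contradicting $\textit{freq}(I_k)$. For~\ref{eq:lemma2.2}, I would pick $r_i$ (say $r_i = 1/2$) and use the same $D$: here $I_k = \{a,b\}$ satisfies $1 - \textit{supp}(I_k) = 3/4 \geq r_i$, so $\textit{infreq}(I_k)$ holds, while $I_{k-1} = \{a\}$ has $1 - \textit{supp}(I_{k-1}) = 0 < r_i$, so $\textit{infreq}(I_{k-1})$ fails.

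The proof is essentially a pair of finite calculations, so there is no real obstacle; the only thing to check is that the counterexamples satisfy the size requirement $I_{k-1} \subseteq I_k$ with $|I_k| = |I_{k-1}| + 1$, which they do by construction. I would present the argument by first stating the common data $(D, r_f, r_i)$, then computing the four supports once, and finally reading off both failures from the same table. If the thesis prefers a parameter-free presentation, one can simply say: given any fixed thresholds $r_f, r_i \in (0,1]$, scale the multiplicity of $\{a\}$ in $D$ so that $\textit{supp}(\{a\}) \geq r_f$ while $\textit{supp}(\{a,b\}) < \min(r_f,\, 1 - r_i)$, which is always achievable by adding enough copies of $\{a\}$.
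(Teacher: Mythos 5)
Your proposal is correct and is in essence the same argument as the paper's: both refute the implications by exhibiting a transaction database in which a frequent itemset has an infrequent one-item extension (the paper phrases it as a contradiction and uses the extreme case of a single basket $t$ plus an item $e\notin t$, so the supports are $1$ and $0$ and no threshold bookkeeping is needed). One small care point: either use distinct filler items in your baskets or state explicitly that $D$ is a multiset of transactions (as your final ``scale the multiplicity'' remark already presumes), since read literally as a set $\{\{a\},\{a\},\{a\},\{a,b\}\}$ collapses to two baskets and then $\textit{supp}(\{a,b\})=1/2$, which spoils the $r_f=1/2$ computation for~\ref{eq:lemma2.1}, although the~\ref{eq:lemma2.2} part still goes through.
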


\begin{proof}
Suppose (by way of contradiction) that the relations hold. Let $D=\{t\}$, where $|t|=m$, be a basket set, $I_1=\{e\}$ be an itemset s.t. $e\notin t$. It follows, that $\textit{supp}(I_1)=0$. For $I_m = t$, $\textit{supp}(I_m)=1$ and $\textit{freq}(I_m)$. It follows from~\ref{eq:lemma2.1} that
$$\textit{freq}(I_m) \Rightarrow \textit{freq}(I_m \cup I_1).$$
But because $\textit{supp}(I_1)=0$, then $\textit{supp}(I_m \cup I_1) = 0$, so $\neg\textit{freq}(I_m \cup I_1)$, which contradicts the above.
Similarly, it may be shown that, while $\textit{infreq}(I_m \cup I_1)$, it would follow from~\ref{eq:lemma2.2} that $\textit{infreq}(I_m)$, which is also a contradiction.
\end{proof}

\noindent
With those properties proved, the downward closure lemma can be introduced.

\begin{lemma}[The downward closure lemma]
Let $D=\{t_1, t_2, \ldots, t_n\}$ be the set of transactions, and $L_i$ a set of all frequent itemsets of size $i$ in $D$. For
$$C_k = \{c:c=a\cup \{b\} \wedge a \in L_{k-1} \wedge b\in \bigcup L_{k-1} \wedge b\notin a \},$$
$L_k \subseteq C_k$, where $k \geq 2$.
\end{lemma}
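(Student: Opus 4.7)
The plan is to show set inclusion directly: take an arbitrary $I \in L_k$ and exhibit a decomposition witnessing membership in $C_k$. Since $k \geq 2$, we may choose two distinct elements $b, b' \in I$, and then consider the two size-$(k-1)$ subsets $a = I \setminus \{b\}$ and $a' = I \setminus \{b'\}$. The goal is then to verify that the pair $(a, b)$ satisfies all four defining conjuncts of $C_k$: namely $I = a \cup \{b\}$, $a \in L_{k-1}$, $b \in \bigcup L_{k-1}$, and $b \notin a$.

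The first and fourth conjuncts are immediate by construction: $a = I \setminus \{b\}$ gives $a \cup \{b\} = I$ and $b \notin a$. For the second conjunct, I would invoke Lemma~1, specifically implication~\ref{eq:lemma1.1}, applied to the chain $a \subset I$: since $I$ is frequent (as $I \in L_k$), so is $a$, hence $a \in L_{k-1}$. For the third conjunct, I apply the same implication to $a' \subset I$, concluding $a' \in L_{k-1}$; since $b \in a'$ by the choice $b \neq b'$, we get $b \in a' \subseteq \bigcup L_{k-1}$.

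I do not anticipate a genuine obstacle here, since the argument is essentially a direct application of the downward monotonicity already established in Lemma~1. The only subtlety worth flagging is that the construction of the witness element $b'$ requires $|I| \geq 2$, which is exactly why the lemma is stated for $k \geq 2$; the case $k = 1$ would fail because $\bigcup L_0$ is empty (or ill-defined), reflecting the fact that singletons have no nontrivial generating decomposition. This boundary condition is the single place I would be careful to invoke the hypothesis $k \geq 2$ explicitly.
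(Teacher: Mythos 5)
Your proposal is correct and takes essentially the same route as the paper: the paper writes $I_k = I_{k-2}\cup\{b_1,b_2\}$ and applies implication~\ref{eq:lemma1.1} twice to obtain $I_{k-2}\cup\{b_1\}\in L_{k-1}$ and $I_{k-2}\cup\{b_2\}\in L_{k-1}$, which is exactly your construction with $a=I\setminus\{b\}$ and $a'=I\setminus\{b'\}$. The only difference is cosmetic: the paper wraps this in a proof by contradiction, whereas you verify membership in $C_k$ directly.
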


\begin{proof}
$L_k$ can be formulated in terms of the \textit{freq} relation as:
$$L_k=\{I:|I|=k \wedge freq(I)\}.$$
Suppose $L_k \nsubseteq C_k$. Then there exist an itemset $I_k\in L_k$ s.t. $I_k \notin C_k$. Because $k \geq 2$, then $I_k=I_{k-2} \cup \{b_1, b_2\}$ for some $b_1,b_2\in I_k$. From~\ref{eq:lemma1.1} it follows that
$$\textit{freq}(I_k) \Rightarrow \textit{freq}(I_{k-2} \cup \{b_1\}),$$
so $I_{k-2} \cup \{b_1\} \in L_{k-1}$ and $b_1 \in \bigcup L_{k-1}$. Similarly
$$\textit{freq}(I_k) \Rightarrow \textit{freq}(I_{k-2} \cup \{b_2\}),$$
so $I_{k-2} \cup \{b_2\} \in L_{k-1}$. But it means that $(I_{k-2} \cup \{b_2\}) \cup \{b_1\} \in C_k$, which is a contradiction, taking into account the formula for $C_k$. There could be no $I_k\in L_k$ s.t. $I_k \notin C_k$, so $L_k \subseteq C_k$.
\end{proof}

Because~\ref{eq:lemma2.1} and~\ref{eq:lemma2.2} do not hold, infrequent itemsets cannot be build from smaller ones. An example would be a set of $20$ cars: $10$ red, and $10$ fast. While being red and being fast are certainly frequent characteristics, being both red and fast is not, as there is no such car in the set.
\\

Wu et al.~\cite{negativeAssociationRules04} solved a similar problem by narrowing down mined infrequent itemsets to only those, for which their every subset is also infrequent. In other words, the paper explored the possibility of applying the downward closure lemma to infrequent itemsets regardless of the aforementioned issue, arguing that such infrequent itemsets were less likely accidental. Itemsets found through this approach satisfy the relation:
$$\textit{infreq}(I_k) \Leftrightarrow \textit{infreq}(I_{k-1}).$$

In this publication, taking into account that desired fact sets were rather small (so they could be quickly computed), and assuming small baskets (number of elements close to the number of board fields/pieces), the problem has been approached with a naive solution for smaller itemsets, and the downward closure lemma for larger ones. It is presented in Algorithm~\ref{alg:myApriori}.
\\
\begin{algorithm}[h]
 \SetAlgoLined
 \KwData{Basket sets $D,U$, \textit{freq}/\textit{infreq} thresholds $\epsilon_d,\epsilon_u$}
 \KwResult{Itemsets both frequent in $D$ and infrequent in $U$}
 \BlankLine
 $n \gets MaxBruteforceFactsetSize(D, U)$\;
 \For{$k=1 \to n$}{
    \For{basket $b \in D$}{
        $C \gets \{c:c\subseteq b \wedge |c| = k\}$\;
        \For{candidate $c \in C$}{
            $count_d[c] \gets count_d[c] + 1$\
        }
    }
    \For{basket $b \in U$}{
        $C \gets \{c:c\subseteq b \wedge |c| = k\}$\;
        \For{candidate $c \in C$}{
            $count_u[c] \gets count_u[c] + 1$\;
        }
    }
    $L_k \gets \{c:count_d[c] \geq \epsilon_d \wedge count_u[c] \leq \epsilon_u\}$\;
 }
 \For{$k=n+1 \to n_{max}$}{

    $C \gets \{c:c=a\cup \{b\} \wedge a \in L_{k-1} \wedge b\in \bigcup L_{k-1} \wedge b\notin a \}$\;
    \For{candidate $c \in C$}{
    
        \If{$|\{b: c\subseteq b \wedge b\in D \}| \geq \epsilon_d \wedge |\{b:c\subseteq b \wedge b \in U\}| \leq \epsilon_u$}{   
            $L_k \gets L_k \cup \{c\}$\;
        }
    }
 }
\Return{$\bigcup_k L_k$}
\caption{Apriori-like feature set mining algorithm, which works simultaneously on two transactions sets $D$ and $U$. The point of algorithm is to find itemsets frequent in $D$ and infrequent in $U$ at the same time. It works in two phases: small itemsets are found with a naive approach. Then, larger itemsets are being build out of smaller ones as in regular Apriori, though the approach is not accurate with infrequent itemsets.}
\label{alg:myApriori}
\end{algorithm}

The algorithm explores the aforementioned idea of applying the downward closure lemma to infrequent itemsets, even though it would only return some of the itemsets. It begins with a brute force search for small itemsets (size $k\leq3$ was intended). Starting from $k+1$, it then applies the downward closure lemma. The mining takes place in baskets $D$ and $U$ in parallel.

\chapter{Testing Platform}
\label{chap:testingPlatform}

This chapter introduces GGP Spatium - a GGP framework created particularly for this publication. It also covers the environment, in which the framework has been employed for evaluation of the approach presented in the previous chapter.

\section{GGP Spatium Framework}

An extensive framework, named GGP Spatium, has been implemented to evaluate the ideas presented in Chapter~\ref{chap:keyIdeas}. Targeting Unix platforms, it consists of nearly $35000$ lines of code, written mostly in C++ and Python, and has been released under GNU LGPL license~\cite{lgpl} as an addition to this publication. It should be noted that there exist similar projects, like Java-based GGP Galaxy~\cite{ggpGalaxy13} or RL-GGP~\cite{rlggp13}, which is also Java-based and constitutes a testbed for reinforced learning algorithms for GGP. However, due to different principles of those projects and technologies involved, it was more convenient to create an independent framework.
\\

The main components of the system are:
\begin{description}
  \item[GGP Agent] - (C++), an UCT-based agent capable of carrying regular GGP matches using the protocol,
  \item[GDL To C++ Generator] - (C++, Python), a tool generating reasoning machine C++ source code out of mGDL rule sheets,
  \item[Knowledge Miner] - (Python), a tool, based on an evolutionary algorithm, which conducts plays and analyzes game records of population of agents in order to produce a knowledge file,
  \item[Game Runner] - (C++), a small tool reusing most of the agent's code in order to accurately measure raw performance of supplied reasoning engines,
  \item[Game Library] - a simple database of rule sheets, GTC engine source code files, compiled .so libraries and game knowledge, maintained automatically by the agents.
\end{description}
The framework also includes miscellaneous scripts, i.e. mGDL conformance checker or a knowledge file scorer. Layer architecture diagrams, illustrating the key components, are shown in Figure~\ref{fig:arch}. UML diagrams of those components are presented in Appendix~\ref{apdx:uml}. The following sections describe them in detail.
\\

\begin{figure}[h]
\centering
\begin{tabular}{cc}
\subfloat[GGP Agent]{\includegraphics[width=0.45\textwidth]{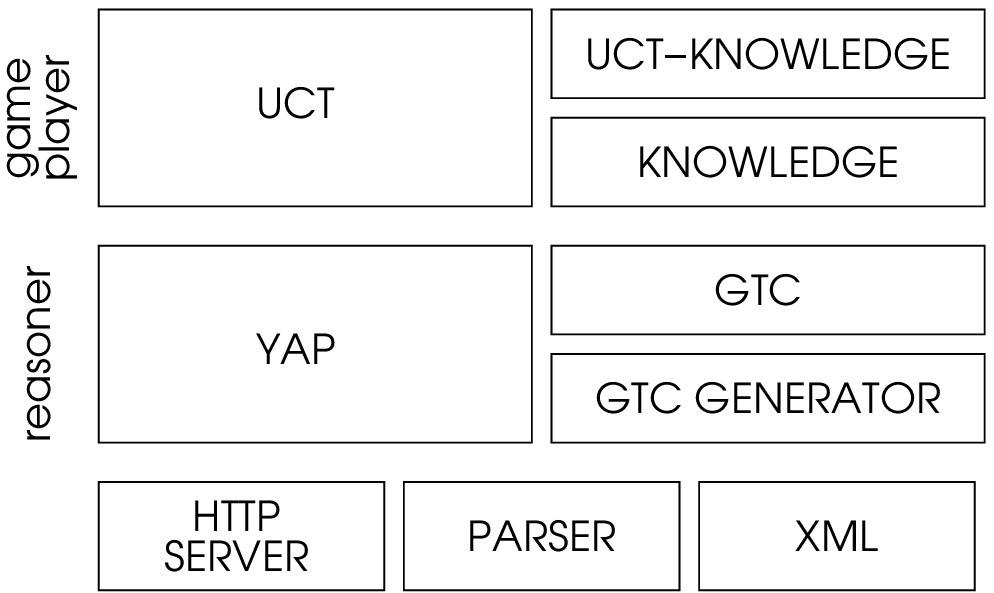}} 
\hspace{1.5cm}
\subfloat[Knowledge Miner]{\includegraphics[width=0.4\textwidth]{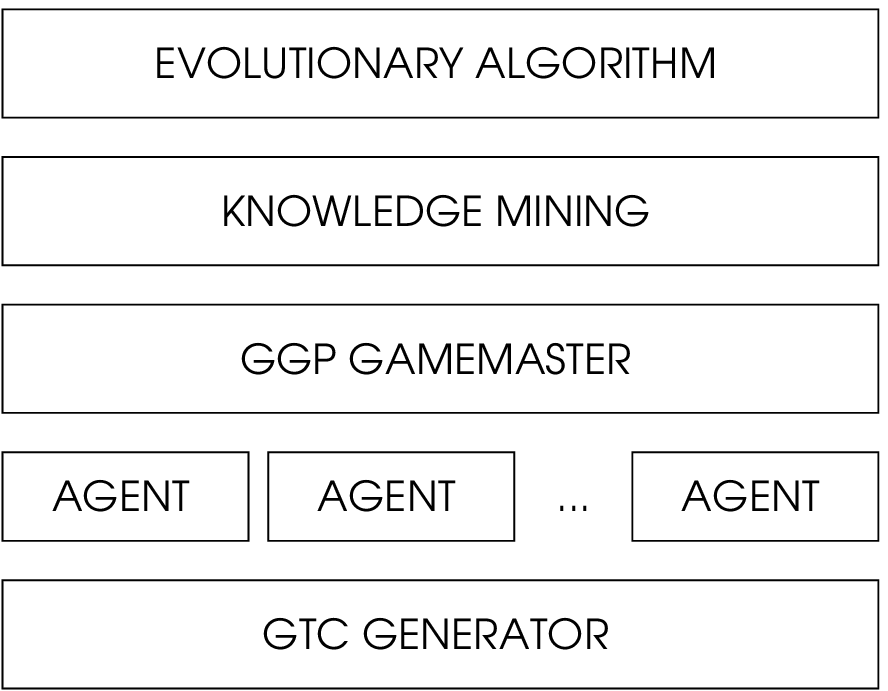}} \\
\vspace{0.5cm}
\end{tabular}
\subfloat[GTC Generator (GDL To C++ Generator of reasoning engines)]{\includegraphics[width=0.4\textwidth]{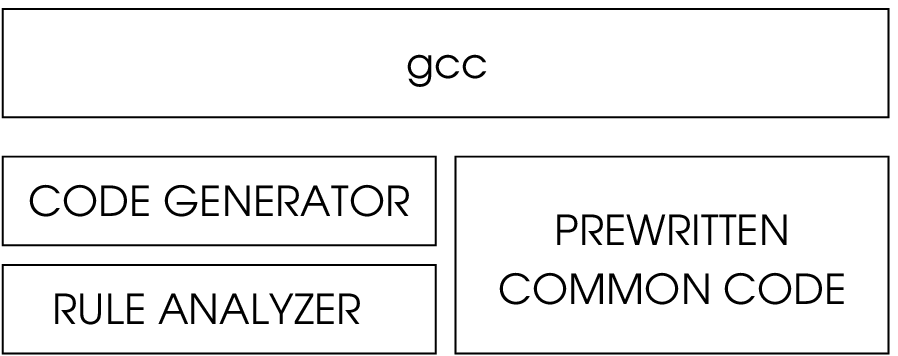}}
\caption{Layer diagrams of key components of the system. UML diagrams are presented in Appendix~\ref{apdx:uml}.}
\label{fig:arch}
\end{figure}

\subsection{GGP Agent}

The agent follows the popular design described in Section~\ref{sec:agentArchitecture}. It employs UCT, has a regular network interface to handle GGP game protocol, and an interface for playing algorithms (UCT, UCT with knowledge, purely random etc.).
\\

What may distinct the agent from other similar ones, is the implementation of concepts presented in Chapter~\ref{chap:keyIdeas}, with a few additions for testing purposes:
\begin{itemize}
	\item two reasoning engines supplied by default: YAP and GTC,
	\item a common interface for different reasoning engines,
	\item a common interface for playing algorithms,
	\item transposition table, simple and linked,
	\item an Euclidean, feature-based knowledge module, able to load XML knowledge files (described later in this chapter),
    \item ability to save XML game records (Figure~\ref{fig:xmlrecord}) along with some internal data about the plays,
	\item options for artificially limiting transposition table size, altering game clocks, enabling CPU profiling.
\end{itemize}

The system of swapping reasoning engines deserves a broader attention. At default, at the beginning of each game, the agent tries to generate, compile and load a GTC reasoning engine. On failure, it falls back to YAP Prolog. Generated GTC files are stored in the library of known games, so a rule sheet would not have to be processed more than once. It is also possible to supply a custom-written C++ engine, along with a rule sheet it corresponds to. Lastly, the agent can be compiled statically with one, chosen game engine - be that GTC or a custom one.
\\

New playing algorithms can also be developed, by extending appropriate classes. Game clock multiplier is useful either for testing agents with uneven computation times, or conducting plays with clocks under 1~s with high-performance reasoning engines. Along with possibility of limiting transposition table memory in a sensible way, the framework provides means for creating agents not only as resource-intensive GGP contestants, but also compact, general-purpose problem solvers with network interfaces.
\\

\begin{figure}
\begin{SaveVerbatim}{Verbxmlrecord}
<Match Id="MATCH001">
  <Player>
    <Role>black</Role>
    <Score>6</Score>
  </Player>
  <Player>
    <Role>red</Role>
    <Score>58</Score>
  </Player>
  <State Number="0">
    <Fact>(cell 4 4 black)</Fact>
    <Fact>(cell 4 5 red)</Fact>
    <Fact>(cell 5 4 red)</Fact>
    <Fact>(cell 5 5 black)</Fact>
    <Fact>(control black)</Fact>
    <Move>
      <Role>black</Role>
      <MoveFact>(mark 3 5 BLACK)</MoveFact>
    </Move>
    <Move>
      <Role>red</Role>
      <MoveFact>noop</MoveFact>
    </Move>
  </State>
\end{SaveVerbatim}
	
	\centering
    \setlength{\fboxsep}{5mm}
    \fbox{\BUseVerbatim{Verbxmlrecord}}
    \caption{Sample excerpt from a game record XML file}
    \label{fig:xmlrecord}
\end{figure}

\subsection{GDL to C++ Generator}

GTC Generator is responsible for development and compilation of GTC reasoning machines. The operation is meant to be carried in the startclock phase of a match. Generator takes a rule sheet on input; it computes it's hash sum and stores the generated code in the game library. Afterwards, it attempts compilation.
\\

Common parts of code involve data structures and realization of the agent's reasoning engine interface. For a particular GDL rule sheet, only C++ source code files containing specific routines are being generated to complete the GTC engine, along with a GNU Make makefile for the ease of later compilation. On successful compilation, a shared object dynamic linked .so library\footnote{A common library format for Unix systems.} is being created and stored in the Gamelib.
\\

The generator is capable of generating reasoning machines based on both schemes presented in Subsection~\ref{subsec:gtcModels}. Said section also provides sample code excerpts for both methods.

\subsection{Evolutionary Knowledge Miner}

Principal of operation of the knowledge miner has been presented in depth in Section~\ref{sec:knowledgeMining}. However, gathering a vast number of game records is not straightforward, and requires an additional comment.
\\

The goal of knowledge miner is to develop a single game knowledge XML file. The miner is designed to work in a Local Area Network, where it also acts as a server coordinating client workstations. All workstations, along with the server, are assumed to have access to shared Network Attached Storage. Sample network diagram illustrating devices participating in mining the knowledge is shown in Figure~\ref{fig:lan}.
\\

\begin{figure}[h]
  \centering
  \includegraphics[width=0.7\textwidth]{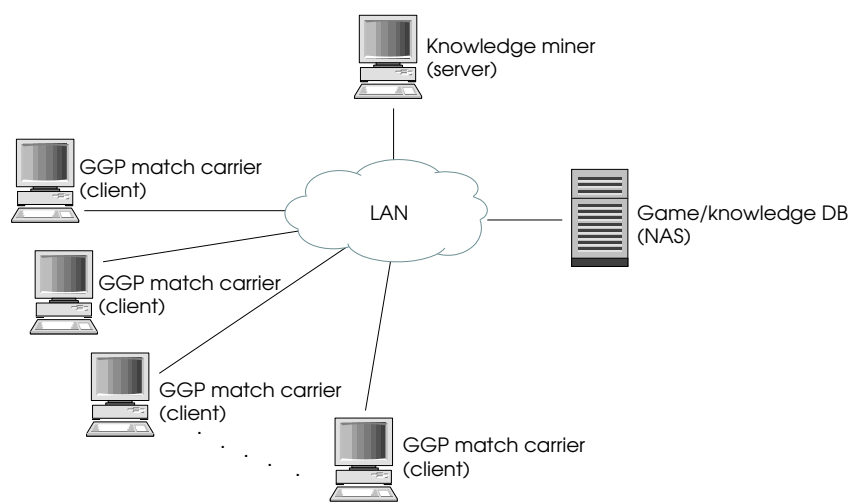}
  \caption{Sample network setup for carrying GGP matches and mining the knowledge with the Knowledge Miner}
  \label{fig:lan}
\end{figure}

Client workstations spawn GGP Game Managers along with GGP Agent instances, carry out GGP matches, and save XML game logs to the shared space. The miner stores intermediate agents' knowledge in XML files in the shared space, so the agents could access it. No extra locking mechanisms are needed, because the server and clients write the shared data in different phases of the mining algorithm. Client-server communication takes place over the network, using TCP/IP protocol.
\\

A special case of described network setup is running the server and clients on a single machine. In that case, the communication takes place through the loopback interface, and the shared space might be i.e. the computer's hard drive.

\subsection{Transposition Table}

The goal of the transposition table (TT for short) is to speed up the game graph search and keep the agent from consuming too much resources. The following design is an extended version of the one in~\cite{holt08}. The paper gave an outline of a Java-based GGP agent; because for demanding games TT would take up most of available memory, the maximum size of TT was limited by the memory available to the whole Java Virtual Machine. Whenever the VM would go out of RAM, it would split the Transposition Table in half (thus deleting a "random" half of states).
\\

The transposition table used in this publication is similar, with an exception of using a linked hash set\footnote{Terminology borrowed from the \texttt{java.util} Java package.} rather than an ordinary one. A linked hash set stored an additional list of nodes, which was used for better memory management. Because states also had pointers to other states, the transposition table became an interesting combination of three data structures:
\begin{itemize}
    \item a hash set,
    \item a directed graph,
    \item a linked list.
\end{itemize}
Every reasoning engine implementing the agent's reasoning interface shipped it's own state class, equipped with a hashing function. The state's hash was it's key in the hash set.
\\

The linked list served as a priority queue. A new state inserted to the transposition table was also being inserted at the front of the queue. Each state was moved to the front of the queue upon visiting. If the maximum number of states had been reached, the last item queue item was being deleted, which turns out to be the one visited last. This strategy worked well with UCT; after all, UCT keeps the search "fair", so even average states get visited from time to time, and thus brought up to the front of the queue. With large enough memory limits, the game graph held only relevant parts of the game tree.
\\

Some game's GAs are graphs rather than trees (i.e. in chess the same state might be reached a few times since the players might be moving back and forth with the same pieces). Instead of flushing the transposition table or pruning the game graph with every turn, they were left intact. The old, possibly unreachable states, were deleted in the first place, since they were last in the queue anyway.
\\

\begin{figure}[p]
\centering
\begin{tabular}{cc}
\subfloat[Initial transposition table: a hash set, a linked list and a graph]{\includegraphics[width=0.44\textwidth]{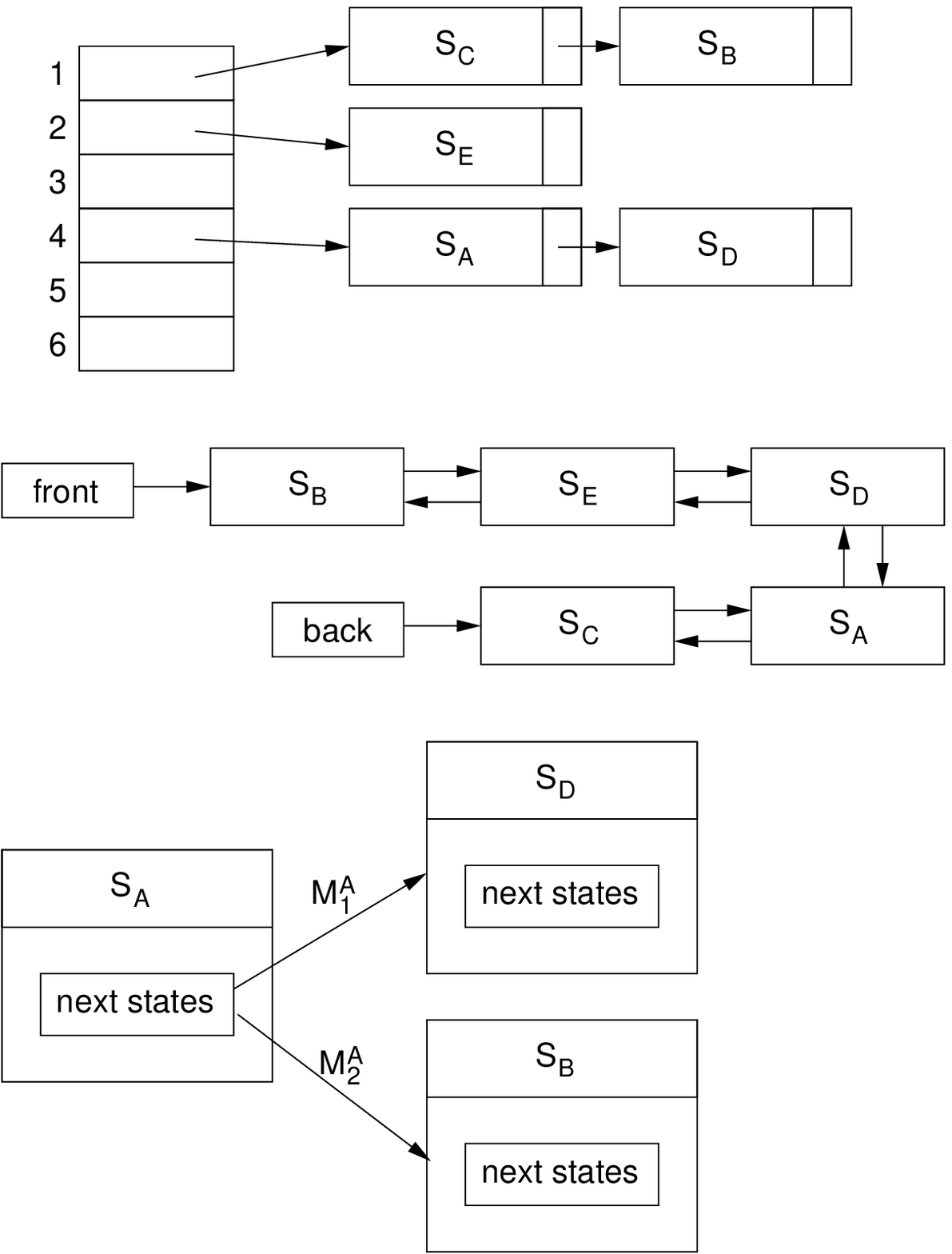}} 
\hspace{1cm}
\subfloat[After accessing state $S_D$ it is being brought to the front of the list]{\includegraphics[width=0.44\textwidth]{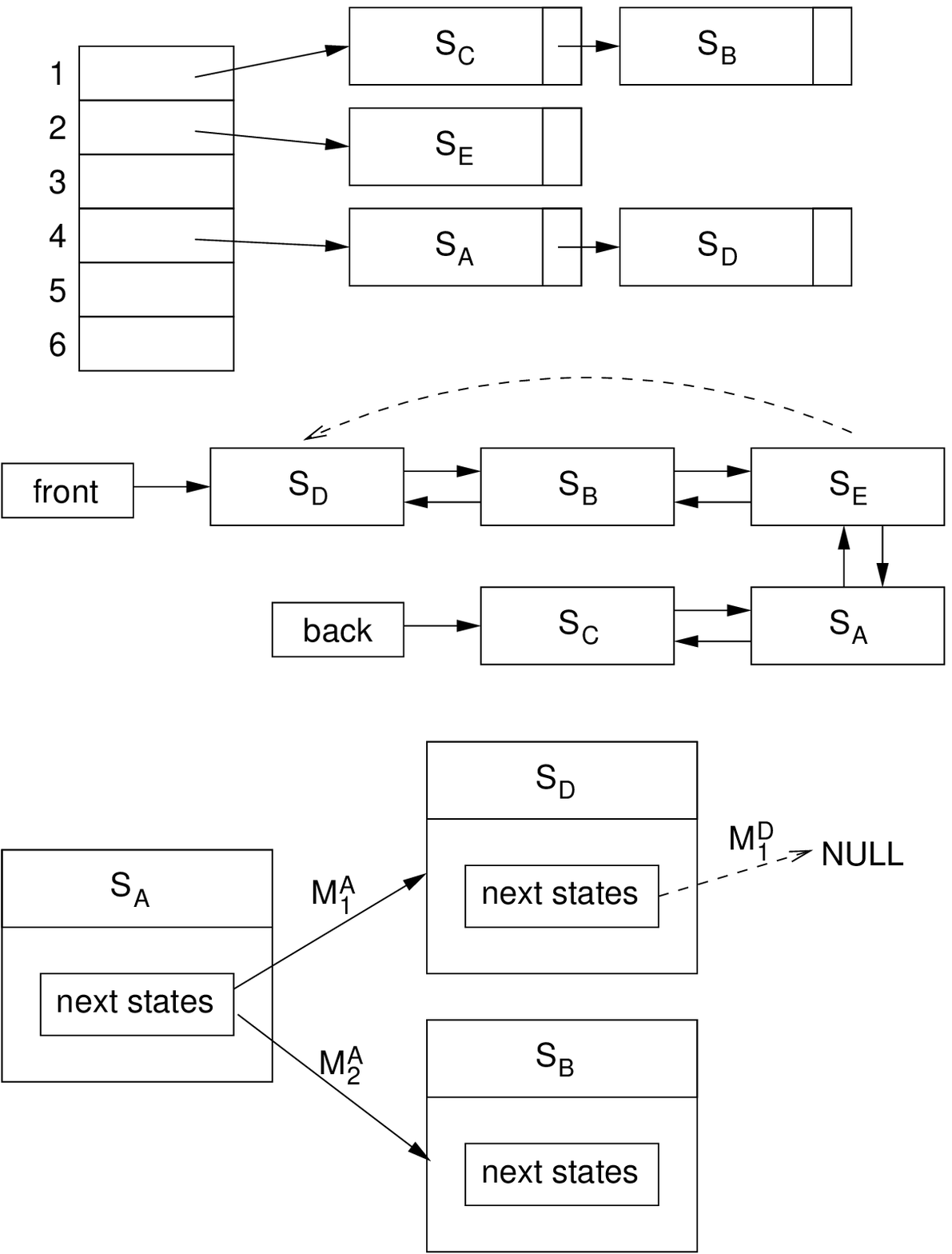}}\\
\vspace{0.5cm}
\end{tabular}
\subfloat[$S_F$ is a new state inferred from $(S_D, M_1^D)$; if the limit is reached, the last state from the list ($S_C$ in this case) is being deleted]{\includegraphics[width=0.50\textwidth]{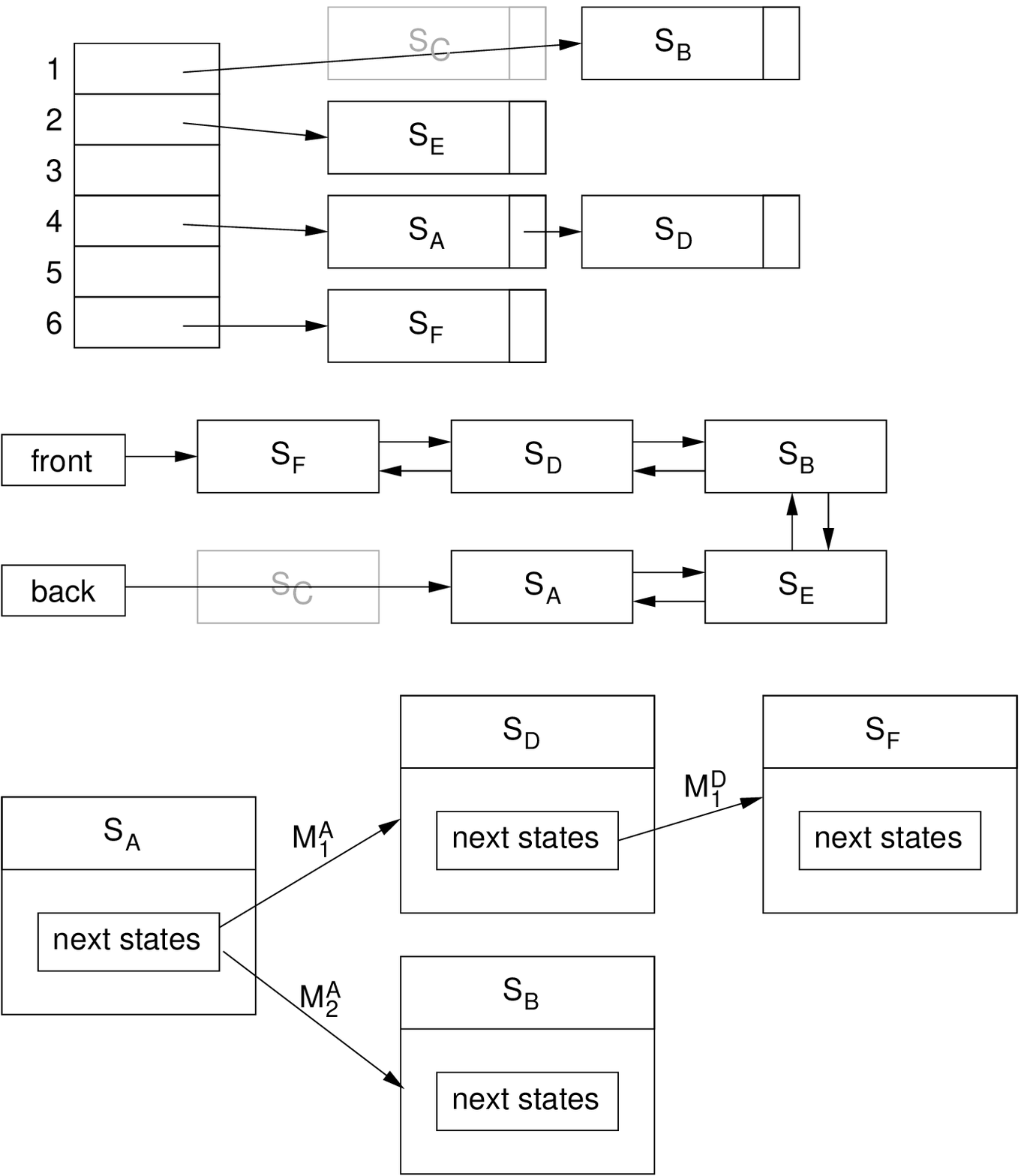}}
\caption{Transposition table behavior upon accessing/adding a state}
\label{fig:tt}
\end{figure}

Figure~\ref{fig:tt} shows an example of transposition table's behavior when accessing/adding a state. Each state contains pointers to reachable, already visited states (indexed with moves). Upon visiting, state $S_D$ is being moved to the front of the linked list. The move set $M^D_1$ is to be made during the UCT search phase. There is no path from $S_D$ labeled with $M^D_1$, so a new state is created. If a state with such hash value has already been stored in the hash set, the missing path is added; otherwise the state $S_F$ is inserted to TT. $S_F$ is brought to the front of the linked list. The size limit has been reached, so the last state in the linked list, $S_C$, is being deleted. The design involves usage of some kind of smart pointers, so no additional updates on the game graph would be necessary.
\\

Finally, the maximum size of the TT during runtime can set as simply the maximum number of states, or more precisely as a rough size in bytes by estimating an average size of a state during runtime. The design provides an interface for states having self-size estimating function.

\subsection{Game Knowledge Format}
\label{subsec:gameKnowFormat}

Recalling Section~\ref{sec:knowledgeMining}, the proposed spatial game knowledge consisted of features and meta facts, and relied on the underlying board as well as artificial board areas. Furthermore, if possible, features could be backed with frequent item sets, improving their quality.
\\

Although a few ways of improving UCT with features were discussed throughout this publication, no specific was chosen for the proposed approach. Weights for feature classes, number of features and other different parameters susceptible to evolution were considered parts of the knowledge as well. The proposed evolution algorithm worked with individuals, conceptually being the agents, but practically being game knowledge instances their possessed. The self-play scheme, a part of the evolution, required those agents to be ready to conduct matches during every generation, right after the knowledge update phase.
\\

For the stated reasons, per-game knowledge has been designed as a single XML file, encapsulating features, item sets, and knowledge parameters. Other data about individuals, such as their evolution history or quality in respect to the objective function, were stored in small auxiliary files. This way, all agents shared one single executable, with different knowledge files supplied to be loaded.
\\

The structure of said XML knowledge files was simple:
\begin{verbatim}
    <Knowledge>
       <Parameters>
          ...
       </Parameters>
       <Player role="red">
          <WinningFeatures>
             ...      
          </WinningFeatures>
          <LoosingFeatures>
             ...
          </LoosingFeatures>
       </Player>
       <Player role="black">
          ...
       </Player>
    </Knowledge>
\end{verbatim}

\paragraph{Knowledge parameters}

The parameters section had the aforementioned knowledge parameters: from ways of using the knowledge, to particular weights. Of course the agent has been programmed to correctly interpret those. Winning and loosing feature lists were, possibly lengthy, lists of features correlated with winning and loosing respectively. Thus, the knowledge files could range from several kilobytes, to a few megabytes in size, depending on the number of features. An excerpt from a sample knowledge file, showing features and item sets, is shown in Figure~\ref{fig:xmlknow}.
\\

The parameters, which were coded as chromosome genes, included:
\begin{multicols}{2}
\begin{itemize}
    \item maximum knowledge size
    \item weights for particular feature classes
    \item learning factor
    \item weights on winning/loosing features
    \item weights specifying the overall impact of knowledge
    \item parameters for progressive widening
    \item boolean values for:
    \begin{itemize}
        \item feature classes
        \item feature classes' item sets
        \item using item sets in selection and simulation UCT phases
        \item progressive widening
        \item first-feature scoring
        \item using features in the selection phase
    \end{itemize}
\end{itemize}
\end{multicols}

The basic form of using the knowledge, is for defining probability distribution for moves during pseudo-random playouts, according to the formula
$$P(m_k|s_n) =  \frac{\textit{score}(m_k)}{\sum_{i\in I} \textit{score}(m_i)},$$
with $\textit{score}(m_i)=\textit{baseValue} + \textit{winningFeatures}(m_i) - \textit{loosingFeatures}(m_i)$, and $s_n$ denoting the state, $m_k$ denotes one of the legal moves. First-feature scoring is the method of, when being in state, scoring a move with the first matching feature (from a sorted list). The default knowledge scoring method goes over the whole list and sums weights of matching features. The parameter for using features also in the selection phase, employs Formula~\ref{formula:progressiveUCT} (Subsection~\ref{subsec:incorporation}).

%

\begin{figure}
\begin{SaveVerbatim}{Verbxmlknow}
<Knowledge>
...
   <Player role="black">
      <WinningFeatures>
       <FeatureRelEuclidKNearest weight="0.5">
          <K>3</K>
          <Pieces> redpiece redpiece redpiece </Pieces>
          <Itemset>
             <MetafactPieceInArea>
                <AreaSize>2</AreaSize>
                <AreaDimensions>7 1</AreaDimensions>
                <Piece>blackpiece</Piece>
             </MetafactPieceInArea>
             <MetafactAnyPieceInField>
                <Position>7 1</Position>
             </MetafactAnyPieceInField>
             ...
          </Itemset>
       </FeatureRelEuclidKNearest> 
       <FeatureRelEuclidProximity weight="0.3">
          <Distance>1</Distance>
       </FeatureRelEuclidProximity>
       <FeatureAbsEuclidBorderDist weight="0.1">
          <Distance>0</Distance>
          <Lower>True</Lower>
          <Dimension>2</Dimension>
       </FeatureAbsEuclidBorderDist>
          ...
\end{SaveVerbatim}
	
	\centering
    \setlength{\fboxsep}{5mm}
    \fbox{\BUseVerbatim{Verbxmlknow}}
    \caption{A sample excerpt from a knowledge XML file}
    \label{fig:xmlknow}
\end{figure}

\section{Performance Tunning}

The following section explains important choices made during system setup and tests. It also provides estimates of overhead introduced by the tested ideas. Such things are important in correct interpretation of test results presented in Chapter~\ref{chap:tests}, as well as in evaluation of the system.

\subsection{Choosing Game Clocks}

Since GGP agents are assumed to adapt to an arbitrary game, they are naturally assumed to adapt to an arbitrary game length (expressed as startclock and playclock). However, when doing any kind of game-specific agent adaptation, be that mining knowledge, training an evaluation function, etc., it may be important to choose right game clocks. Keeping in mind that UCT converges in time to the same result as minimax, the impact of using additional knowledge (which is usually not that accurate) might diminish in time, eventually worsening the performance.
\\

As explained in Section~\ref{sec:reasoning}, the reasoning engine plays a crucial role in the agent's computational performance. By profiling the GGP agent made for this publication with Google Performance Tools~\cite{googlePerfTools}, it turned out that reasoning alone accounts for most of the time spent by the agent during each turn in a bare UCT agent. However, performance of reasoning engines may vary greatly, even by orders of magnitude. For instance, a GTC-based agent can do as much reasoning during 1~s as, elsewhere identical, Prolog-based one in 44~s in 8puzzle.\footnote{Detailed performance comparison is shown in Subsection~\ref{subsec:gtcPerformance}.}
\\

Another factor to consider is the complexity of the GDL rule sheet. Reasoning in games, described with rule sheets large resulting in large game trees (chess, checkers, Othello),
tends to be much slower in terms of games/s or states/s, than in case of simpler ones (Blocks World, Tic-tac-toe). For instance, GTC-based agent can conduct about 3/s full random playouts of Othello, whereas for Connect Four the same agent would conduct as many as 2300/s, even though Othello playouts are approximately three times long in the number of turns. Additionally, duration of random playouts made by a human player would probably be far closer in both cases.
\\

One simple approach, that seems to be favored during various GGP competitions, is to choose game clocks similar to those which human players would use. Computations made for this publication involved thousands of regular GGP matches. The chosen clocks were fairly low, usually both startclock and playclock were set as 10~s. As stated above, clocks are very relative, especially when taking advantage of GTC. To investigate the impact of knowledge, on games with game graphs heavily explored by UCT, as well as those with graphs explored only a little, the set of games chosen for evaluation included games with efficient and inefficient reasoning engines. When developing knowledge with a particular application in mind, all said factors should be considered when choosing right clocks for simulations.

\subsection{Used Hardware}

Even though the participants of a GGP match (agents and Gamemaster) are able to play over network, and the knowledge mining involves many computers connected to a local area network, the preferred method of conducting matches is to have the agents and the Gamemaster run on one machine and communicate through the loopback interface. This approach has strong advantages:
\begin{itemize}
\item it evens out RAM usage on participating machines, since even a low number of Gamemasters might deplete the resources,
\item match fault rate is lower, because players are not susceptible to unexpected delays in communication,
\item agents running on the same processor should obtain roughly the same amount of processing power, than when run on different hardware.
\end{itemize}

Computations made for this publication where conducted in LAN consisted of 28 workstations with the following configuration:
\begin{itemize}
\item Intel Core2 Quad Q8300 @ 2.50~GHz, 4~GB RAM, Debian 7.0, gcc 4.7.1 (14 workstations),
\item Intel Core i5-2400 @ 3.10~GHz, 4~GB RAM, Debian 7.0, gcc 4.7.2 (14 workstations).
\end{itemize}

Both CPU models present on those machines were equipped with 4 cores. With one computer being the miner/server, there were 54 matches of a 2-player game possible to conduct in parallel (4 agents occupying each of the remaining 27 workstations). With 10~s clocks, the computations usually took 1-3~h/generation, depending on the game.
\\

Further improvements to the computation time might be easily achieved by rewriting the mining component in C++ and/or distributing the mining on more workstations.
\\

The current design of the agent is single-threaded. To achieve the best utilization of resources, the number of players run in parallel on one machine should match the number of core. However, operating system's scheduling policy should also be taken into account.

\subsection{Knowledge Overhead}

Evaluation of algorithms and computation methods naturally should take their implementation into account. The GGP Agent is designed to work with different reasoning engines - each supplying it's \texttt{Fact} class. To keep things simple, the knowledge module operates on facts where object constants are strings. However, GTC uses internally simple types (\texttt{int}s) instead, to keep the reasoning faster and lower memory usage. The knowledge module makes frequent conversions to coordinates, i.e. for a fact \texttt{(mark 1 2 x)} it will return a vector $\rowvec{2}{1}{2}^\top$. With GTC, it has to make frequent unnecessary intermediate conversions to strings (which is not the case with YAP Prolog, where strings are being returned by default by YAP library calls).
\\

Of course, the mentioned overhead takes up precious turn time, which could be used for more iterations of MCTS, thus artificially limiting available turn time. The real performance of knowledge-based players, with more efficient implementation, should be much higher than reported. To overcome this problem, the framework could be further modified to have the reasoning engine supply also it's own metric module.

\subsection{Fitting GTC Within Startclock}

Even though this publication assumes, that large amounts of time are available for knowledge development before the start of the  game, it is still important to evaluate application of GTC to ordinary GGP matches. In such matches, the GTC module should manage to analyze game rules, generate C++ source code, compile it to a custom reasoning machine, and have the agent load it, during the startclock phase.
\\

Table~\ref{tab:gtcTimes} summarizes resources required for GTC to complete for various GDL rulesheets. For tested games, GDL managed to finish roughly within 10~s startclock with an optimized version, and within 5~s startclock with an unoptimized one, on an ordinary machine. Both code generation and compilation were single-threaded, therefore the agent could run a few mirrors of GTC Generator (with different optimization levels set) in parallel. They would still share IO, but the tests revealed that most of the time spent on generation and compilation was spent in the user space, so it should not be an issue. Moreover, the agent could even temporarily load Prolog and begin the computations using remaining cores from the beginning, with an option to replace the reasoning engine on the fly, to achieve the best utilization of resources.

\begin{table}[h]
  \caption{Resource usage during generation and compilation of GTC source code on an Intel Core i5 2400 @ 3.10~GHz machine. The tests were carried using single-thread programs (framework's GTC Generator and gcc 4.7.2). Measured times are the total runtime times (spent in user space, system space and waiting for IO). Source code size is the total size, including the files common to all reasoning machines. The files do not undergo any compression and are in a human-readable form.}
	\centering
	\begin{tabular}{ |c||c|c|c|c| }
	\hline
	\multirow{2}{*}{Game} & \multirow{2}{*}{Generation time} & \multicolumn{2}{c|}{Compilation time} & \multirow{2}{*}{Source code size}\\ 
	& & non-optimized & -O3 enabled &\\ \hline
	\textit{8puzzle.kif} & 0.154 s & 3.139 s & 4.722 s & 144 KB \\
	\textit{amazing.kif} & 0.204 s & 3.243 s & 5.201 s & 180 KB \\
	\textit{blocks.kif} & 0.154 s & 3.087 s & 4.742 s & 128 KB \\
	\textit{checkers.kif} & 0.354 s & 3.876 s & 8.663 s & 368 KB \\
	\textit{chess.kif} & 0.554 s & 3.949 s & 8.862 s & 404 KB \\
	\textit{chinesecheckers6.kif} & 0.204 s & 3.387 s & 5.986 s & 224 KB \\
	\textit{connectfour.kif} & 0.154 s & 3.194 s & 5.109 s & 164 KB \\
	\textit{crisscross.kif} & 0.204 s & 3.294 s & 5.339 s & 200 KB \\
	\textit{lightsout.kif} & 0.155 s & 3.123 s & 4.611 s & 132 KB \\
	\textit{nim.kif} & 0.104 s & 3.101 s & 4.762 s & 124 KB \\
	\textit{othello.kif} & 0.204 s & 3.364 s & 5.694 s & 212 KB \\
	\textit{pancakes.kif} & 0.154 s & 3.152 s & 4.771 s & 144 KB \\
	\textit{pawntoqueen.kif} & 0.204 s & 3.301 s & 5.388 s & 188 KB \\
	\textit{peg.kif} & 0.154 s & 3.207 s & 5.047 s & 160 KB \\
	\textit{sum15.kif} & 0.154 s & 3.073 s & 4.724 s & 128 KB \\
	\textit{tictactoe.kif} & 0.154 s & 3.088 s & 4.838 s & 136 KB \\
	\hline
	\end{tabular}

  \label{tab:gtcTimes}
\end{table}

\section{UCT Architectural Issues}
\label{sec:agentArchitecture}

This section presents some technical aspects of a typical UCT-based General Game Playing agent, in respect to the agent implemented in GGP Spatium. As such agents have dominated the AAAI Competition\footnote{In the AAAI Competitions held in years 2007-2012, with non-simulation-based agent being among the finalists only once (Fluxplayer, 2007).}, UCT might be considered a de facto standard for constructing such systems.

\subsection{The $C$ Constant}

The first obvious adjustment is a proper $C$ value (often denoted also as $k$).
Recalling the original UCT formula~\ref{eq:ucb}, the value of a move is made of two components: it's winning average and the UCB bonus. The longer the move is omitted, the higher the bonus, so it keeps the agent from forgetting about unpromising but not thoroughly investigated moves.
\\

The bonus is multiplied by the $C$ factor which, as many note~\cite{mogoFirstGoUct06}, serves as a exploration/exploitation parameter. Higher values increase the bonus thus emphasizing exploration, whereas low values favor exploitation.~\cite{CadiaPlayer09, holt08} used an empirically set constant $C=40$, 2007 version of Ary used $C=50$~\cite{Ary08}. The value was later changed to $C=40$ for multiplayer games, and $C=100$ for single player~\cite{Ary10} (obviously to favor exploration in search problems). Those values were also determined experimentally. Also, the authors report that attempts to set the constant dynamically after the rules analysis did not bring good results.
\\

\cite{tron10, mctsPacMan11} used $C=\sqrt{2}$, which makes UCT use the exact UCB1 formula. \cite{holt08} notes that values close either 0 or 100 gave significantly worse results than a value in between.
\\

Finally,~\cite{MCAmazons07, amazonsDiscover08} point out that, when applying UCT/MCTS to the Game of Amazons (or any particular game), the constant should be accordant with the problem; in one of the studies, empirically chosen $C=0.35$ was used.
\\

The agent of GGP Spatium used a fixed value $C=40$, based on the claim that changing the value through learning does not bring satisfactory results. Chapter~\ref{chap:tests} unveils, however, an interesting shift in number of visited nodes under the influence of game knowledge. It might indicate that shorter simulation episodes take place, because of frequently choosing same paths in the selection phase, and the agent focuses more on exploitation.

\subsection{Internal Policies}

\paragraph{Selection phase}

An extended policy, UCB1-TUNED~\cite{multiarmed02} is reported to perform even better than UCB1 policy~\cite{InriaMCTSPatternGo06}. The upper confidence bound for UCB1-TUNED is
 
\[ \sqrt{\frac{\ln n}{n_j}\min\left\{\frac{1}{4},V_j(n_j)\right\}} .\]

Many other policies have been developed over the years. UCB2~\cite{LaiRobbins85} for instance has a better confidence bound, but is more complex. Other widely-known policies include $\epsilon_n-GREEDY$ and UCB1-NORMAL.
\\

Also, the first policy with an upper confidence value, developed by Lai and Robbins in 1985~\cite{LaiRobbins85} for the multiarmed bandit problem, was better than UCB in terms of the expected bound, but was proved only for a certain class of distributions\footnote{Including popular ones like the Gaussian Distribution or the Poisson Distribution.}. In~\cite{policyLearning12} an interesting approach of learning the best policy for a given problem was examined.

\paragraph{Tree node expansion}

Two-phase tree searching in MCTS, consisting of the selection phase and the simulation phase, has the advantage of memory conservation. Only the first phase requires storing information about the states, such as detailed descriptions, hash values, visit counters, average payouts, etc. It is, however, unclear how to proceed with extending game tree with new nodes. Ideally, stored nodes should be the ones visited the most. Coulom~\cite{Coulom06efficientselectivity} mentions a best-first tree growing policy: 
\begin{itemize}
\item Start with a root only
\item Whenever a random simulation goes through a node, which was already visited once, create a new node to the next move in that simulation
\end{itemize}
Using the policy, the search tree becomes unbalanced (in terms of branches' heights), therefore it adapts to more promising states
\\

Some of the leading agents like Ary~\cite{Ary10} and CadiaPlayer~\cite{CadiaPlayer09} use a slightly simplified version of that policy, where a node is being added once for each playout, and it is the first one encountered on a search path, which is not already stored.
\\

It gets complicated for agents with highly efficient reasoning engines, hard-coded or at least not Prolog-based, which are not doing any real reasoning apart from calculating states and moves. Since they rapidly switch game states, the node-per-playout policy, especially for heavily branched games, might lead to quick exhaustion of available memory. For instance, the Game of Amazons agent~\cite{amazonsDiscover08} mentioned earlier, has a policy of adding a node after as many as 40 visits to a parent node. Interestingly, the threshold was not set to just save the memory. As noted in the paper, setting the threshold to 20 yielded slightly worse results, while decreasing it further to 5 lowered the winning percentage of the agent (of plays versus the test agent) from 80\% to 65\%.
\\

The agent implemented for this publication featured a fixed policy of one node for each playout. Resulting transposition tables were rather small (in the number of states), yet they have been prepared to selectively adapt to size restrictions. Section~\ref{sec:testsTT} discusses sizes of transposition tables obtained during tests.

\paragraph{Opponent modeling}

At default, every opponent is assumed to play with the same UCT strategy. In the minimax heuristic, the algorithm has to alternate between the players on consecutive plies of the tree. In UCT, every opponent can be modeled with the same strategy. For each possible move a state should thus hold not the average payout, but the average payouts for all players. It has a reflection in game knowledge, since features are mined and stored for each player separately.

\chapter{Validation}
\label{chap:tests}

This chapter contains detailed description of tests made to evaluate the ideas proposed in Chapter~\ref{chap:keyIdeas}. They have been carried out, using GGP Spatium framework, in a local area network (the testing environment is discussed in Chapter~\ref{chap:testingPlatform}).

\section{Tested Games}
\label{sec:testedGames}

All rule sheets mentioned in this publication (with the exception of rule sheets modified with the Euclidean extension) come from Dresden GGP Server~\cite{dresdenGGPServer} game database. Only a few games were chosen for the most time-consuming tests (preparing game knowledge through evolution): \textit{amazons.kif}, \textit{checkers.kif}, \textit{connectfour.kif}, \textit{cubicup.kif}, \textit{othello.kif}. They have been enhanced with the proposed Euclidean GDL extension. The extension, however, does not guarantee immediate compatibility with a rule sheet. It took minor changes to make three of them conform:
\begin{description}
\item[amazons.kif] - A turn consists of moving a piece and shooting an arrow. Turns have been split in half, letting the player move and shoot in separate, though subsequent turns. The opponent is forced to wait (by playing \texttt{noop}) for two subsequent turns. This change altered the game automaton, while preserving semantics. It also sped up the reasoning and considerably lowered branching factor from around 500 to $\sqrt{500}$, at the cost of doubling the length of the game.
\item[checkers.kif] - The rule sheet required specifying a piece location and destination to make a move. Moreover, for double and triple jumps, intermediate fields also had to be specified, i.e. \texttt{(doublejump bp 3 2 3 3 3 4)} moves a piece from field $(3,2)$ to field $(3,4)$. Move functors have been made into arguments, and dummy variables have been added, so all move sentences would have equal number of arguments and a common functor \texttt{mark}. The previous example would become \texttt{(mark doublejump bp dummy dummy 3 2 3 3 3 4)}.
\item[connectfour.kif] - It suffices to chose a column to make a move. The second dimension, as well as piece type, was implicitly added to the move as well. \texttt{(drop ?x)} became \texttt{(play ?x ?y ?piece)}.
\end{description}
To avoid confusion in the remaining sections, names of the rule sheets modified to meet the needs of the Euclidean extensions end with .ext.kif, e.g. \textit{tictactoe.ext.kif}.
\\

\begin{table}
  \caption{Games for which knowledge files have been developed with Knowledge Miner}
	\centering
	\begin{tabular}{ |c||c|c|c|c|c| }
	\hline
	\multirow{2}{*}{Game} & Branching & Average & Random & \multirow{2}{*}{Board} & \# of \\
	& factor & length  & games/s & & dimensions \\
	\hline
	\textit{amazons.ext.kif}  		& 22 	& 131	& 51.8	& square $10\times 10$ & 2-d \\
	\textit{checkers.ext.kif} 		& 10 	& 99	& 44.2		& square $8\times 8$ & 2-d \\
	\textit{cubicup.ext.kif} 		& 4 	& 56	& 278.23	& tetrahedron (edge 6) & 3-d \\
	\textit{connectfour.ext.kif} 	& 7 	& 22	& 2330.49	& square $8\times 6$ & 2-d \\
	\textit{othello.ext.kif} 		& 7 	& 61	& 2.99		& square $8\times 8$ & 2-d \\
	\hline
	\end{tabular}

  \label{tab:testedGames}
\end{table}

Table~\ref{tab:testedGames} summarizes rule files chosen for knowledge evolution. Descriptions and complete rule sheets of less-known tested games are presented in Appendix~\ref{apdx:rulesheets}.

\section{Compiled Code}

\subsection{mGDL Conformance}
\label{subsec:mgdlTest}

Algorithm~\ref{alg:mgdl-check} has been used to resolve conformance with mGDL for a particular rule sheet. The algorithm tries to unify literals found in rules' tails with other rules' heads, first checking functors and arities. Unificators are being examined; if all variables unify with constants or other variables, the rule sheet is mGDL compatible. If there are exceptions, the result of the algorithm is inconclusive, for there is no guarantee that such an unification would ever occur during the real reasoning.
\\

\begin{algorithm}[h]
 \SetAlgoLined
 \KwData{GDL rule sheet}
 \KwResult{\texttt{true} if conforming to mGDL or \texttt{inconclusive} when uncertain}
 \BlankLine
	$R \gets$ set of rules\;
	\For{rule $r \in R'=R\setminus\left\{terminal, goal, legal, next, \ldots\right\}$}{
		\For{literal $l \in tail(r)$}{
			\For{rule $r_2 \in R$}{
				\lIf{$head(r_2)$ unifies with $l\; \wedge \;\exists a\in args(l)$ s.t. $a$ unifies with a complex expr}	{			
					\Return{(inconclusive)}
				}
			}
		}
	}
	\Return{true}
\caption{Basic mGDL conformance check}
\label{alg:mgdl-check}
\end{algorithm}

A set of 270 GDL games from Dresden GGP Server~\cite{dresdenGGPServer} has been checked with the aforementioned algorithm for mGDL conformance. Table~\ref{tab:mgdlCompat} presents the results. Rule sheets, which do not immediately conform to mGDL, are easy to alter; in all cases, it would take only minor changes, not significantly extending nor complicating the rule sheets.
\\

\begin{table}
  \caption{Conformance of GDL v1 rule sheets with mGDL, obtained from Dresden GGP Server~\cite{dresdenGGPServer}.}
	\centering
	\begin{tabular}{c|c|c|}
	\cline{2-3}
	& \textbf{rule sheets} & rule sheets \\
	& \textbf{conforming to mGDL} & not conforming to mGDL \\ 
	\hline 
	\multicolumn{1}{ |c| }{number} & \textbf{253} & 17 \\ 
	\hline 
	\multicolumn{1}{ |c| }{percentage} & \textbf{94\%} & 6\%\\
	\hline 
	\end{tabular}

  \label{tab:mgdlCompat}
\end{table}

\subsection{GTC Performance}
\label{subsec:gtcPerformance}

The raw performance of reasoning engines has been measured with GTC Game Runner, tool made solely for benchmarking purposes. It measures the average performance of a particular reasoning engine during random plays. With this tool, some of GDL v1 games have been tested. It should be noted that GTC reasoning engines failed to build for some games, because of minor incapabilities of used implementation of the GTC translation scheme. Table~\ref{tab:gtcperformance} summarizes achieved results.
\\

\begin{table}
  \caption{Average GTC performance measured on an AMD Athlon 64 X2 4000+ @ 2.1~GHz machine, compiled with gcc 4.6.3 (32-bit). Tests were carried using GTC Game Runner, which minimizes the overheads. Measured times are the average times from a 10~s run. Prolog tests were carried with YAP Prolog 6.2.2 and a C++ module loading YAP as a shared library.}
    \centering
	\begin{tabular}{ |c||c||c|>{\columncolor[gray]{.9}}c||c|>{\columncolor[gray]{.9}}c| }
	\hline
	\multirow{2}{*}{Game} & YAP Prolog & \multicolumn{2}{c||}{GTC} & \multicolumn{2}{c|}{GTC2}\\ 
	 & games/s & games/s & \% of YAP & games/s & \% of YAP\\ \hline
	\textit{8puzzle.kif} & 132.69 & 531.04 & 400 & 5849.11 & 4408 \\
	\textit{amazing.kif} & 295.34 & 374.16 & 127 & 6054.98 & 2050 \\
	\textit{blocks.kif} & 3830.79 & 233648.83 & 6099 & 283470.64 & 7400 \\
	\textit{checkers.kif} & 5.51 & 8.82 & 160 & 44.20 & 802 \\
	\textit{chess.kif} & 1.13 & 1.51 & 134 & 2.46 & 218 \\
	\textit{chinesecheckers6.kif} & 46.90 & 54.50 & 116 & 874.65 & 1865 \\
	\textit{connectfour.kif} & 236.54 & 785.92 & 332 & 2330.49 & 985 \\
	\textit{crisscross.kif} & 179.82 & 148.24 & 82 & 1951.87 & 1085 \\
	\textit{lightsout.kif} & 128.82 & 535.44 & 416 & 4060.41 & 3152 \\
	\textit{nim.kif} & 762.65 & 867.13 & 114 & 19230.91 & 2522 \\
	\textit{othello.kif} & 2.34 & 0.80 & 34 & 2.99 & 128 \\
	\textit{pancakes.kif} & 485.51 & 393.82 & 81 & 33954.03 & 6993 \\
	\textit{pawntoqueen.kif} & 35.30 & 37.77 & 107 & 119.67 & 339 \\
	\textit{peg.kif} & 75.07 & 77.52 & 103 & 254.33 & 339 \\
	\textit{sum15.kif} & 1060.32 & 5721.91 & 540 & 14856.77 & 1401 \\
	\textit{tictactoe.kif} & 906.37 & 4532.34 & 500 & 34390.02 & 3794 \\
	\hline
	\end{tabular}

  \label{tab:gtcperformance}
\end{table}

GTC v1 yielded satisfactory results, but only for shallow and wide reasoning trees. Google CPU Profiler\footnote{Part of Google Performance Tools~\cite{googlePerfTools}.} was used to determine the cause of this behavior. As it turned out, each time a fact was processed, a join operation took place (exemplary join might be found in Subsection~\ref{subsec:database}), and the overall cost of all the join operations was the biggest bottleneck of all. The approach turned successful only for small games, where variable tables were short. More efficient data structures are required for more complex games with GTC v1.

\section{Game Knowledge Development}

This section presents tests concerned with spatial features and the evolutionary algorithm. Impact of features on the agent's computational performance and quality of plays is established, along with purposefulness of the evolution. At the end, crucial evolved parameters are summed up.

\subsection{Evolution Parameters}

Because the genetic algorithm utilized by the knowledge miner is based on SGA, parameters with regular to SGA values have been employed:
\begin{multicols}{2}
\begin{itemize}
    \item population size: 24
    \item crossover rate: 70-80\%
    \item elitism: 15\%
    \item new random individuals: 10\%
    \item mutation rate: 0.01-0.02
    \item ranked selection
    \item matches with no knowledge: 300
    \item matches per individual during tournaments: 50
    \item number of generations: 5-20
\end{itemize}
\end{multicols}
\noindent
Feature weights were assigned based on their $\phi$ coefficients.

\subsection{Objective Function} 

The point of mining the knowledge is to develop the knowledge file, which would improve the agent's play. It is not clear how to artificially measure the agent's skill; the chosen approach was to make a bare UCT agent (that is a one having no knowledge) a baseline agent, with whom scoring matches would be played. Then, after a certain number of plays, a conclusion about the agent's performance might have been drawn.
The objective function has been thus formulated, as an average win percentage of the agent, in matches versus a bare UCT agent. Of course, the accuracy depends on the number of conducted matches.
\\

Reliable scoring of the agents is very resource-consuming. Matches usually end with a binary result; thus, the score might be modeled with the binomial distribution. It takes nearly 400 matches, to achieve a margin of error of circa 5\% when scoring one single agent.\footnote{Assuming 95\% confidence interval and estimating the interval with a normal distribution.} Since the impact of an agent on the next generation diminished with it's score, it was crucial to score reliably the fittest individuals.
\\

To conserve resources, the agents were scored in rounds. During each round, $m$ matches were conducted between each individual and the bare UCT agent. Only the best performing half of the agents qualified for the next round. The tournament ended, when there was one individual left. Rounds allowed to conduct over half of the matches less, than when scoring all the agents with the same number of matches. Figure~\ref{fig:tournamentConvergence} shows agent's fitness during an exemplary tournament.
\\

\begin{figure}
  \centering
  \includegraphics[width=1.0\textwidth]{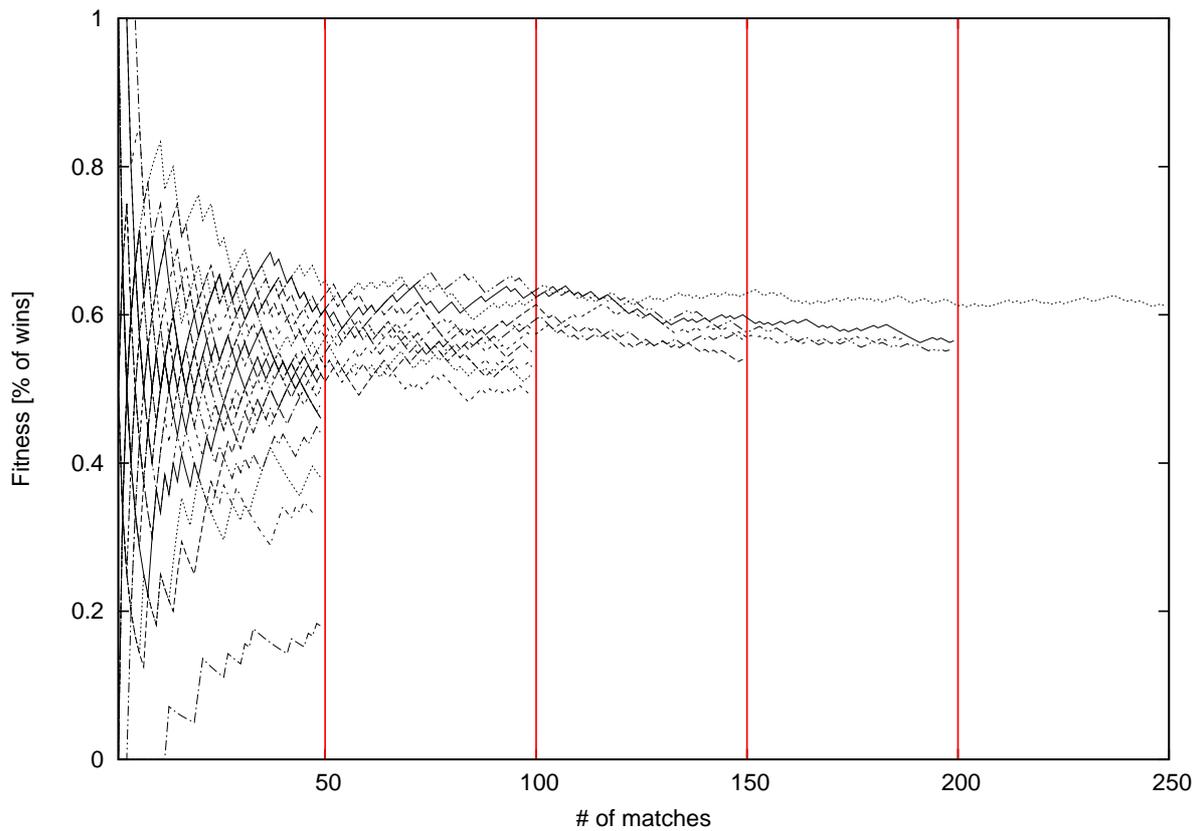}
  \caption{Convergence of agents' fitness throughout the tournament in \textit{connectfour.ext.kif}. Each polyline represents agent's fitness, expressed by win percentage against baseline UCT agent. The weakest ones are being discarded with each round. Vertical lines separate subsequent rounds of the tournament.}
  \label{fig:tournamentConvergence}
\end{figure}

Another investigated way of scoring the agents was to have the agents play between themselves. It also had a similar benefit of cutting the number of matches in half (for 2-player games, since every match gave feedback to both of the players). Early test, however, gave significantly worse results in this approach. Game records were of lower quality, but most importantly, the practical confidence interval was much wider. Although there was a strong correlation between the results of the tournaments, and results of plays versus a bare UCT agent, the correlation was not perfect and added to the confidence intervals. The approach worked well on early generations, where individuals differed between each other. In the later ones, the error was to great to reliably score the agents, with only minor differences in scores.

\subsection{Particular Evolution Cases}

To evaluate the system of evolutionary feature mining and utilization in games as a whole, complete evolution episodes were carried for games listed in Section~\ref{sec:testedGames}. Early tests reported poor performance; CPU profiling revealed a major overhead on memory management in the knowledge module (compared with regular, GTC-based agent). The overhead can be mitigated in the GTC version of the reasoner, by integrating the knowledge module, so that during each random playout no extra state objects would be created and passed from the reasoner to the agent. To manage with the overhead, a simple handicap has been introduced: each bare-UCT player had half the time available to the knowledge-based one. The handicap has been chosen based on rough overhead estimates. The following sections contain information about the obtained results.

\subsubsection{The Game of Amazons}

The Game of Amazons becomes very costly to test in the modified version, were turns are split in half and average random game has length of 120 turns. Initially, only 10 generations were carried; early generations did not seem to bring any significant gain. Figure~\ref{fig:amazonsQ} presents best agent's fitnesses. It is worth noticing, that the final best individual did not feature progressive widening; after all, branching factor of the tested version has been lowered.
\\

\begin{figure}
  \centering
  \includegraphics[width=0.7\textwidth]{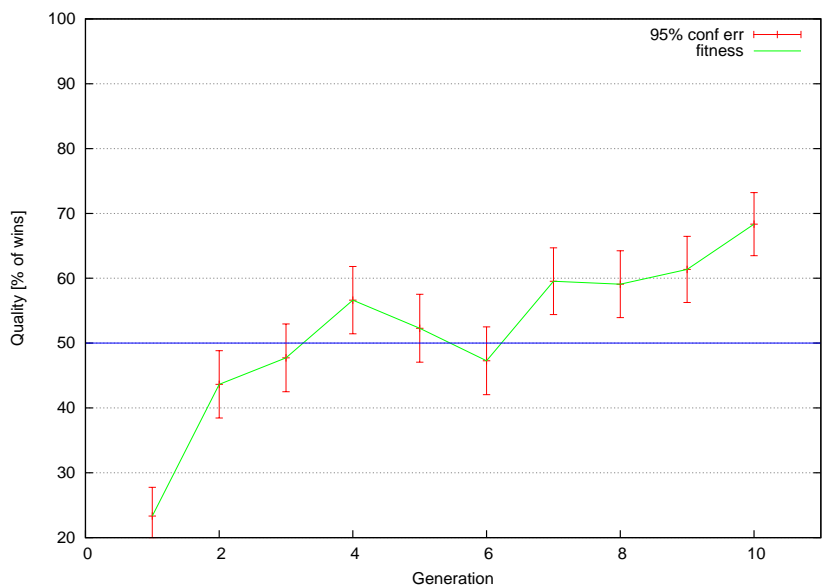}
  \caption{Fitness of best individual during game knowledge evolution - \textit{amazons.kif}}
  \label{fig:amazonsQ}
\end{figure}

\subsubsection{Checkers}
As it is a case with rule sheets having highly inefficient corresponding reasoning engines, even though GTC has improved the performance, UCT tree search was shallow. Under this conditions, spatial features gave immediate gain, which conversely has not been further improved by the evolution (Figure~\ref{fig:checkersQ}).

\begin{figure}
  \centering
  \includegraphics[width=0.8\textwidth]{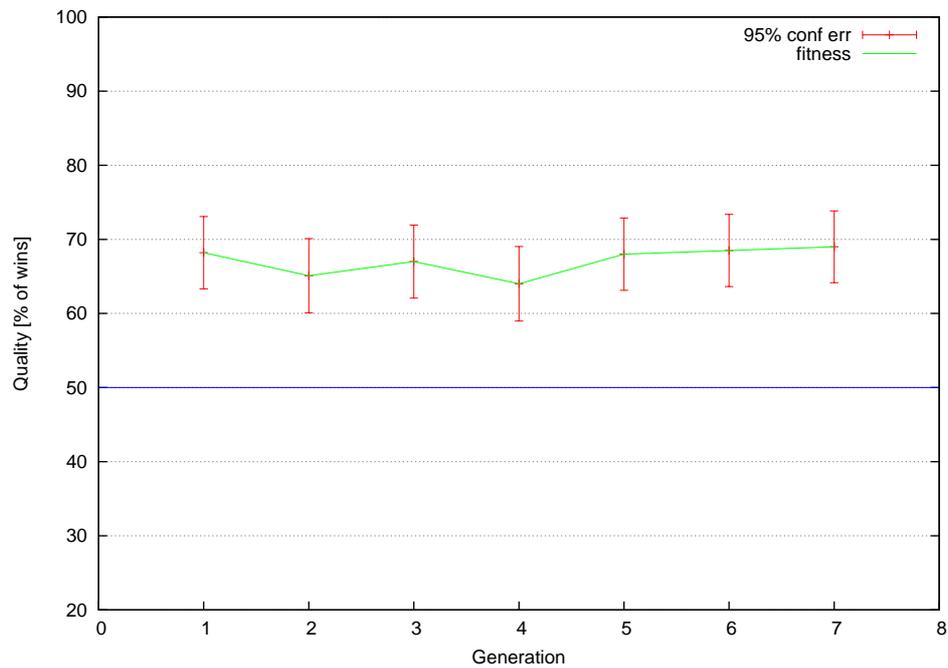}
  \caption{Fitness of best individual during game knowledge evolution - \textit{checkers.kif}}
  \label{fig:checkersQ}
\end{figure}

\subsubsection{Connect Four}

Short match lengths in Connect Four encouraged longer evolution. However, it failed to bring any significant gain within the set handicap (Figure~\ref{fig:connect4Q}). While the opponent had half the time as startclock and playclock, final transposition tables had on average  71\% states less in the individual, indicating that the knowledge overhead caused more serious slowdowns than expected.
\\
\begin{figure}
  \centering
  \includegraphics[width=0.8\textwidth]{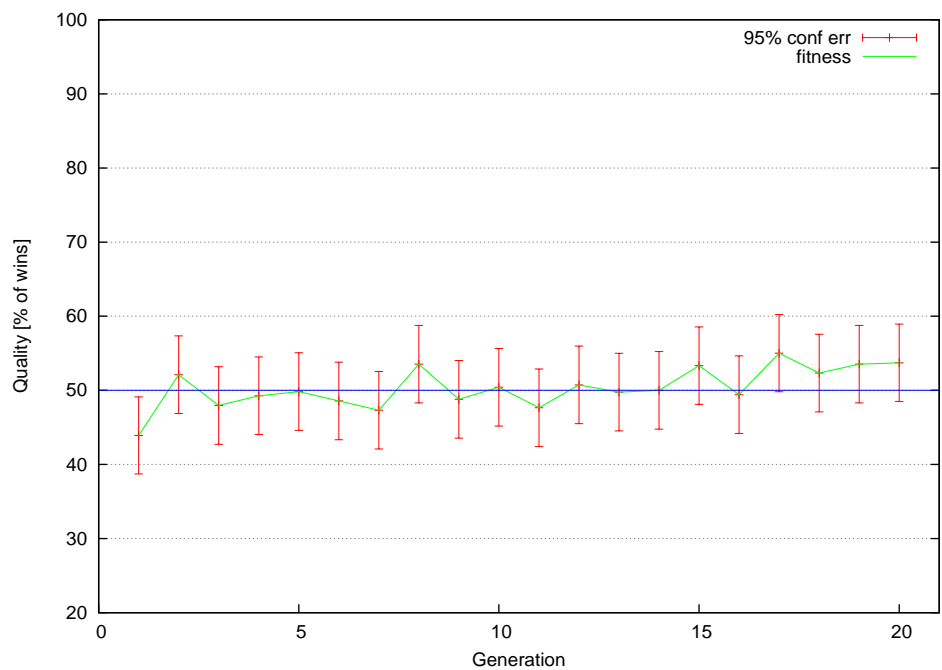}
  \caption{Fitness of best individual during game knowledge evolution - \textit{connectfour.kif}}
  \label{fig:connect4Q}
\end{figure}

\subsubsection{Cubicup}

Spatial features gave an immediate rise in quality of plays of around 5-10\% in the first generation, which was evolved to around 15\% (Figure~\ref{fig:cubicupQ}). Transposition tables of the best knowledge-equipped individuals had about 40-70\% states of the baseline agent's TTs, with 44\% in the final individual, what speaks in favor of the 0.5 handicap.

\begin{figure}[htbp]
  \centering
  \includegraphics[width=0.8\textwidth]{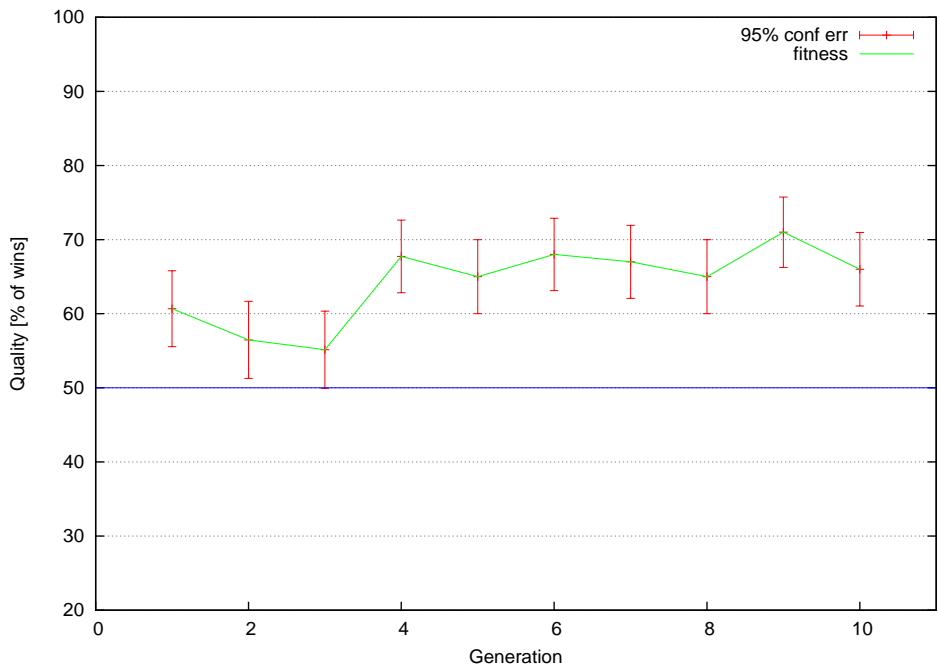}
  \caption{Fitness of best individual during game knowledge evolution - \textit{cubicup.kif}}
  \label{fig:cubicupQ}
\end{figure}


\subsubsection{Othello}

With games shorter than 64 turns, Othello has been tested with slightly larger game clocks. Nonetheless, unreasonably slow reasoning engine resulted in shallow tree search. Feature system seemed to just make up for the overhead - average transposition table sizes were around 45\% smaller in knowledge agents than in the baseline agent. Curiously, among mined features were some features common among Othello players and programs, like those promoting playing in the corners or along the edges. Those few features alone should be responsible for most gains from the knowledge. It might be the case, that other features, which seem relevant in the mining phase, actually worsened agent's plays.

\begin{figure}[htbp]
  \centering
  \includegraphics[width=0.8\textwidth]{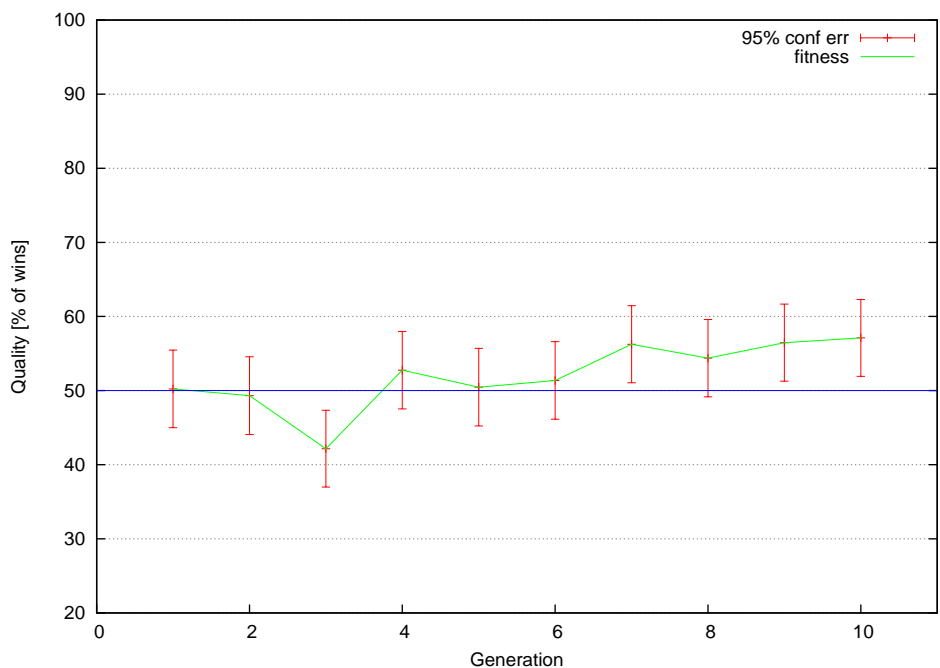}
  \caption{Fitness of best individual during game knowledge evolution - \textit{othello.kif}}
  \label{fig:othelloQ}
\end{figure}

\subsection{Feature Utilization}

With no doubt, well evolved feature set gives an advantage to the agent. The purpose of this section is to determine how the features are being used.
\\

As stated in Subsection~\ref{subsec:features}, the features fall into 7 categories. Distribution of those categories among the best players is shown in Figure~\ref{fig:featureDistrib}. All feature classes occur in the fittest individuals. However, currently there is no method to measure the real per-feature impact on playouts (how frequently a feature is matched, what is the average payout of the most frequent and infrequent matches, etc.). With such data at hand, it would be feasible to fine-tune feature list by tweaking their weights.
\\

\begin{figure}[htbp]
  \centering
  \includegraphics[width=0.8\textwidth]{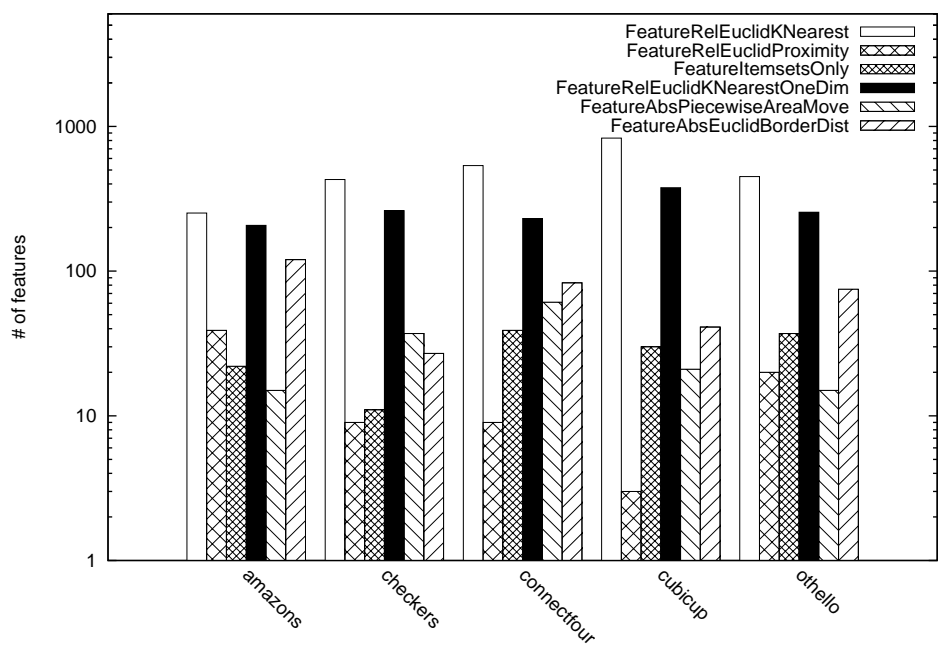}
  \caption{Distribution of feature classes in different games. Features come from the best individual in evolution. The Y-axis is in logarithmic scale.}
  \label{fig:featureDistrib}
\end{figure}

Number of features also plays an important role; recalling poor performance of the knowledge module, higher counts of features might not report the expected fitness. The maximum number of features, which is susceptible to evolution, ranges 0-500 for all the players. For instance, for a 2-player game, the total number ranges 0-2000, as each player has both good and bad feature lists. Figure~\ref{fig:featureNumConnect4} shows the evolution of an average number of features in population in Connect Four. The number grows in time, until the limit is reached. The maximum number of features increases after the knowledge files have been saturated. It may be expected that this trend would continue with next generations. 

\begin{figure}[htbp]
  \centering
  \includegraphics[scale=1.0]{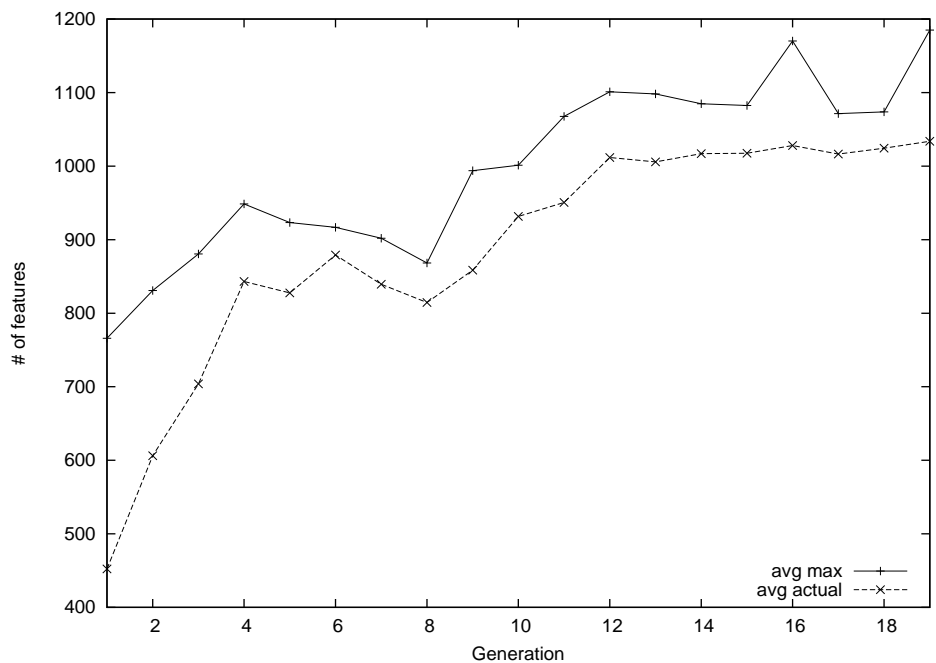}
  \caption{Average maximal and actual number of population's features in \textit{connectfour.kif}. After achieving saturation, knowledge size increases evolutionary, to permit new features.}
  \label{fig:featureNumConnect4}
\end{figure}

\subsection{Knowledge Summary}

The impact of the knowledge turned out hard to measure; frequent passing game states to the knowledge module resulted in memory management overhead of around 45\% to 71\%, seriously crippling the agent's performance (Table~\ref{tab:testedGamesResults}). Nonetheless, after applying a handicap to the baseline agent, the evolution took off. Spatial features, considering performance issues, brought competitive gains, especially in harder games. For instance in checkers, poor performance of the reasoning engine resulted in extremely shallow tree search. Spatial features improved the play substantially, again considering the applied handicap.
\\

However, preliminary tests with larger game clocks brought significantly worse results. This is expected, since such a small number of game features, limits their expressive power, therefore skewing payout distributions obtained through random playouts. Perhaps it would be a good idea to limit the impact of knowledge in time, not only in the selection phase, but in the simulation phase as well. Another obvious improvement would be extending the system with new feature classes, and proper implementation adding knowledge support on the level of the reasoning machine.
\\

\begin{table}
  \caption{Agent performance with evolved knowledge. Win percentage is an average value from matches played against the baseline, bare-UCT agent. Confidence interval were constructed at a confidence level of 95\%. Selection and simulation node percentage is the percentage of nodes visited during those matches by the baseline agent.}
	\centering
	\begin{tabular}{ |c||c|c|c|c| }
	\hline
	\multirow{2}{*}{Game} & Startclock/ & Win		 & UCT & Tree\\
						& playclock		& percentage & constant & expands \\
	\hline
	\textit{amazons}  		& 10s/10s	& $68.3\pm4.87\%$	& 56.7	& 14/10 playouts\\
	\textit{checkers} 		& 2s/2s 	& $68.8\pm4.85\%$	& 36.4	& 12/10 playouts\\
	\textit{cubicup} 		& 5s/5s		& $66.1\pm4.96\%$	& 15.4	& 5/10 playouts\\
	\textit{connectfour} 	& 5s/5s		& $53.7\pm5.22\%$	& 33.0	& 8/10 playouts\\
	\textit{othello} 		& 10s/10s	& $57.1\pm5.19\%$	& 38.4	& 9/10 playouts\\
	\hline
	\end{tabular}
  \label{tab:testedGamesResults}
\end{table}

\section{Transposition Tables}
\label{sec:testsTT}

The purpose of the following section was to determine usefulness of linked transposition tables. To achieve the goal, matches of bare UCT players were carried for different games with various transposition table sizes. Both types of agents, that is with linked and ordinary transposition tables played against each other. Obtained results have been presented in Table~\ref{tab:ttPerformance}.
\\
\begin{table}
  \caption{Performance of bare UCT agents utilizing linked transposition tables against agents with randomly-emptied transposition tables, under size restrictions}
	\centering
	\begin{tabular}{ |c||c|c|c| }
	\hline
	Game & required TT size & restricted TT size (\%) & Win percentage\\
	\hline
	\textit{amazons.kif}  		& 23500	& 1\% 	& $51.5\pm6.92\%$ \\
	\textit{checkers.kif} 		& 24000	& - 	& - \\
	\textit{cubicup.kif} 		& 16000	& 1\% 	& $48.7\pm6.93\%$ \\
	\textit{connectfour.kif} 	& 11500	& 0.5\% & $57.1\pm6.86\%$ \\
	\textit{othello.kif} 		& 30000	& 0.3\% & $57.9\pm6.84\%$ \\
	\hline
	\end{tabular}
  \label{tab:ttPerformance}
\end{table}

The bound on transposition table size has to be much lower than the maximal obtained size during unbounded simulations, to see any difference between the two types of TTs. It should be even lower, than an average number of states expanded during a single turn. It is costly to test reliably with hundreds of matches, games with long clocks that could better illustrate the effect. Thus, small game clocks were tested. Linked transposition tables gave a small improvement over randomly-emptied ones. Google CPU Profiler~\cite{googlePerfTools} reported negligible cost of handling the additional linked list.
\\

The difference in performance between the two types of TTs might be expected to increase with further limiting the size of tables or increasing turn time. However, as long as the reasoning engine is weak, memory exhaustion should not be an issue. Furthermore, MCTS suited to the Game of Amazons in~\cite{amazonsDiscover08} achieved better results with expansions policy as high as expanding by one node every 40 visits, so even with efficient reasoning, bounding the memory might not be necessary. On the other hand, it is easy to imagine a chess player, equipped with a custom engine, that could benefit from linked TTs a lot.
\\

\section{Validation Summary}

The approach proposed in this thesis proved to be valuable. Generated C++ reasoning machines offered significant speed ups of 2300\% on average over YAP Prolog, while retaining compatibility and fairly low resource usage. With improved pace of matches, it became more feasible to conduct long computations. The evolutionary algorithm, along with simple mining methods, working on a very basic set of spatial board features, were competitive with the standard UCT algorithm and noticeably improved the quality of plays. Linked transposition tables, while introducing a negligible overhead, brought performance gains only with restrictive limits on their sizes. However, they still might be useful for agents playing longer matches.

\chapter{Summary}
\label{chap:summary}

\section{Conclusions}

This publication pursues a goal of designing and evaluating an efficient General Game Playing agent, at the same time relaxing some limitations imposed by GGP specification. Perhaps, the crucial change is carrying time-consuming computations, like mining game knowledge, before the actual match. With this approach in mind, GGP can find it's application in developing efficient problem solvers, rather than versatile but weak players.
\\

Chapter~\ref{chap:sota} presents the latests and most important research in the field of GGP/MCTS. Since MCTS is strongly connected to GGP as the main method of choice, the research made for particular games is highly relevant to the topic.

Chapter~\ref{chap:problem} contains formulation of the problem and initial discussion of it's purposefulness and methods.

Chapter~\ref{chap:keyIdeas} introduces the new approach, formulated in the thesis as a solution for the problem. It consists of a revised, GDL to C++ translation scheme, proposition of spatial feature-based general game knowledge and an evolutionary algorithm for mining features and tunning knowledge parameters.

Chapter~\ref{chap:testingPlatform} starts with an introduction of GGP Spatium - a versatile framework created to evaluate the approach presented in this publication. It consists of many components, parts of which can be easily modified, like playing algorithms and reasoning machines swapped in the agent.

Chapter~\ref{chap:tests} finally presents the results of tests constructed in order to verify the approach. GTC translation scheme proves valuable and shows a substantial improvement over the original approach. Additionally, spatial board knowledge and the evolutionary algorithm, while simple, improve the agent's performance.
\\

All objectives of this thesis are met. The developed agent's leanness manifests itself through efficient reasoning with GTC, possible memory conservation with linked transposition tables, and fewer dependencies, by making Prolog libraries optional. It also brings a shift in the philosophy, by separating learning from playing.

The GDL translation scheme starts with an introduction of mGDL - a simplified version of GDL, yet much easier to handle during the translation process. Though mGDL is closer to Datalog than GDL, high compatibility with existing GDL rule sheets has been retained; 97\% of over 250 tested rule sheets meet the conditions without any modifications. GDL to C++ (GTC) translation scheme proves valuable and shows a substantial improvement over the original approach from~\cite{cpp09}, with performance gains ranging from 28\% to 7300\%, with 2300\% on average over YAP Prolog. However, the complex games - while managing to undergo translation and compilation under few seconds - are the ones show the smallest improvements. Still, GTC is competitive with other leading methods of performance reasoning for GDL rule sheets~\cite{propnets09, instantiating10}. 

The agent also features linked transposition tables - self-managing transposition tables, which seem to suit MCTS's adaptive searching nature. Linked TTs bring slight performance gains over the very basic design from~\cite{holt08}, but only under restrictive size limits. Nonetheless, the agent might act either as a thorough problem solver, when run with large clocks and TT, or as a compact agent doing shallow search within small memory, which only adds to it's versatility.
\\

Learning the game before conducting a match has been successfully realized through the evolutionary algorithm extending SGA~\cite{Goldberg:1989:GAS:534133}, knowledge mining through $\phi$ coefficient with modified Apriori algorithm~\cite{Agrawal:1994:FAM:645920.672836} based loosely on~\cite{negativeAssociationRules04}, and spatial board knowledge. Despite low, yet steady quality gains of slightly over 50\% to 70\% matches won against the baseline agent, the real potential is much greater. Spatial features might give satisfactory results as soon as in the first generation. With efficient implementation, more feature classes and accurate system of scoring the features and choosing the best ones, the knowledge-based players should largely outperform bare UCT ones. One other interesting phenomenon is the discrepancy of evolved UCT constants: they range from 15.4 to 56.7, with an average value of 35.98. As for tree expanding policies, there seems to be no universal tendency towards more frequent or seldom expands.

Game knowledge relies on the concept of a board, which in turn requires yet another small modification to GDL, allowing to supply game meta data. Feature system, while simple, allows abstraction of move characteristics inspired by computer Go~\cite{KNearestPatternsGo05, InriaMCTSPatternGo06, progressiveMCTSGo08}, otherwise hard to grasp. Only few feature and meta data classes were described and implemented; in reality, tens or even hundreds of features might be added to the system.
\\


As the tests indicate, GGP Spatium meets the criteria formulated as the problem addressed in this publication. The framework presents a different approach than existing ones~\cite{rlggp13, ggpGalaxy13}, and constitutes an extensive platform for developing game playing agents and problem solvers. A few weakness of GDL have been revealed during tests: the lack of meta data mechanism, arithmetics, or perhaps hasty inclusion of function constants, which unnecessarily complexes the language. Performance of reasoning machines, while getting better, still falls behind custom-written code. It could be more convenient to supply rule sheets along with pre-written libraries with bindings for the most popular programming languages. This way, the quality of plays should substantially improve, but more importantly, it would refocus research on different methods closer to real-world applications. Lastly, the mechanism of guiding the machine in abstracting information with carefully crafted features seems to have potential; the system presented in this publication is small and simple, but could be easily extended, mimicking various human abstraction skills.

\section{Perspectives}

This publications, through the proposed approach and obtained results, points interesting perspectives for further research.

\paragraph{Further GDL development}
GDL would benefit greatly from simple additions like arithmetics or a system of meta data. Such meta data might be easily added and maintained by the GGP community. Agents, which do not support such data, would simply ignore it. The data might describe various properties: from the mentioned boards to branching factors, different move classes, and other characteristics. To keep things simple, all meta data might be supplied within a single new restricted functor, i.e. \texttt{meta/1}.

\paragraph{High-performance reasoning engines}
There is also much room for improvement in reasoning engines. Instantiation is certainly of great value, since the instantiated output - being itself valid GDL - could also be supplied to other engines, i.e. to GTC. Partial instantiation heuristics could play great role in such methods. Another directions is parallelization - not only for many CPUs, but also GPUs. A temporary solution might be, however, as simple as providing human-written libraries for rapid state switching, along with associated rule sheets.

\paragraph{Extensive system of spatial features}
Spatial features clearly have potential for improving agents' performance. It might be the case, that only the initial population would suffice to develop good enough knowledge, without doing much costly computations. The spatial feature system requires a much thorough investigation, as this publication nearly outlines the idea. More feature classes should be implemented, and a system estimating each feature's usefulness might be more accurate, than the genetic algorithm which scores the agent merely as a black box. Also, a proper implementation taking advantage i.e. of GTC's memory-efficient internal state representation is crucial for further development.

\paragraph{On-demand AI}
GGP has brought a vision of AIs, created on-demand for specific problems, closer than even before. GGP Spatium also attempts on being a toolkit that pursues this point of view. It makes possible to evolve, generate and compile the agent itself entirely as a library, without any external dependencies - a network-interfaced, on-demand problem solver.

\appendix
\chapter{Rule Sheets with the Euclidean Metric Extension}
\label{apdx:rulesheets}

\section{The Game of Amazons}
The Game of Amazons has been invented in 1988 by Walter Zamkauskas. It takes place on a $10\times 10$ board. Each player controls four pieces, initially placed symmetrically in designated places along board edges. Players alternate taking turns; each turn consists of moving a piece and shooting an arrow. A piece can do a queen move (that is, make either a horizontal, vertical or a diagonal move along empty fields). An arrow is an additional piece, which starts it's move on the field of a firing, non-arrow piece. The arrow can also make a queen move. As the game continues, more arrows accumulate, limiting the possibilities of making valid moves; the first player, which cannot make a valid move, loses.

\begin{scriptsize}
    \begin{multicols}{2}
        \verbatiminput{amazons.ext.kif}
    \end{multicols}
\end{scriptsize}

\section{Cubicup}

Each player has 28 pieces. Players take turns while building a pyramid, whose base is a triangle (side length of 6 pieces). Small cubes are used as pieces in such way, that a cube can be placed on top of three other ones. During his turn, a player has to place a cube. If he creates a special formation of three cubes of his color on the same level, called \textit{chalice}, his opponent has to first place a cube on top the chalice, before making his move, thus loosing an additional cube. The first player to run out of cubes loses.

\begin{scriptsize}
    \begin{multicols}{2}
        \verbatiminput{cubicup.ext.kif}
    \end{multicols}
\end{scriptsize}

\chapter{UML Diagrams of Key System Components}
\label{apdx:uml}

Chapter~\ref{chap:testingPlatform} outlines GGP Spatium with it's components; Layer diagrams are presented in Figure~\ref{fig:arch}, to give a sens of mutual relations between them. To understand the applications on the code level, UML diagrams are particularly useful. This appendix gives basic diagrams for GGP Agent (Figure~\ref{fig:ggp-agent-uml}), Knowledge Miner (Figure~\ref{fig:analyzer-uml}) and GTC Generator (Figure~\ref{fig:gtc-uml}). The diagrams are stripped of methods, showing only the most significant classes.

\vspace{1cm}

\begin{figure}[htbp]
  \centering
  \includegraphics[scale=0.3]{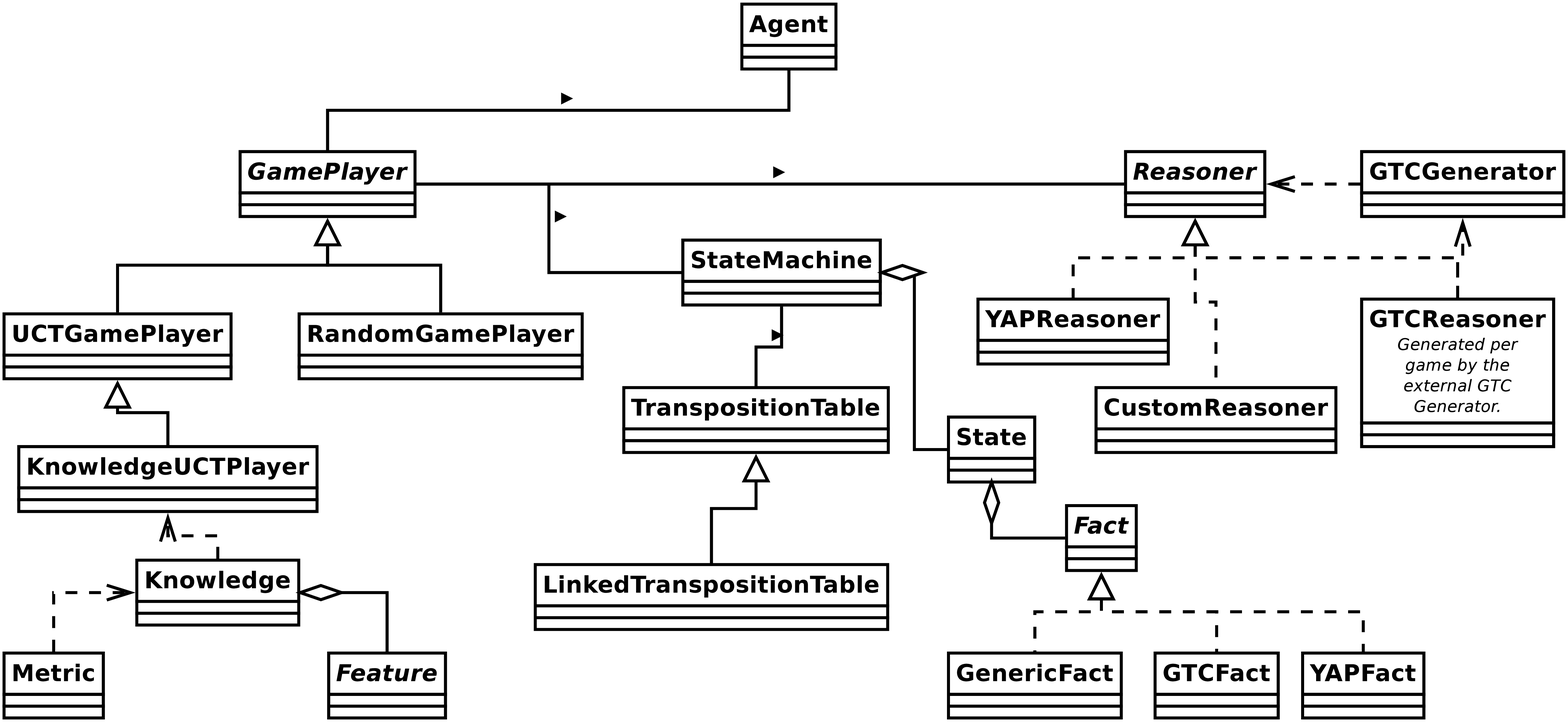}
  \caption{Partial UML class diagram of GGP Agent - part of GGP Spatium framework. It is possible to extend the player with custom \texttt{GamePlayer} and \texttt{Reasoner} classes. The class inheriting from \texttt{Reasoner} should supply it's own \texttt{Fact} extension as well, or use the \texttt{GenericFact} class. The \texttt{Knowledge} class aggregates \texttt{Feature} instances in two ways: feature classes are being registered, and particular instances (with different parameters) are aggregated for scoring moves.}
  \label{fig:ggp-agent-uml}
\end{figure}

\begin{figure}[htbp]
	\centering
	\includegraphics[scale=0.3]{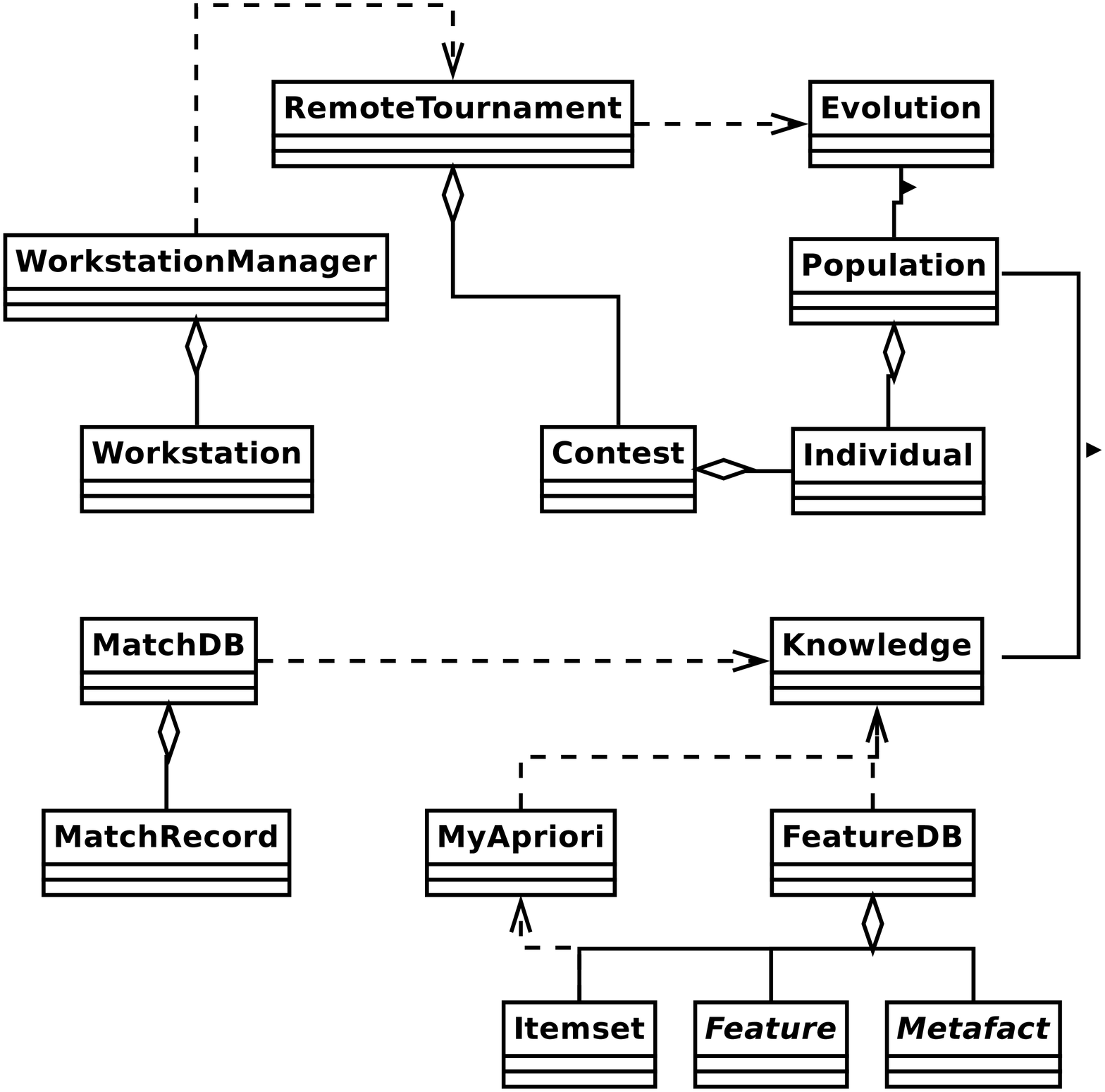}
	
\caption{Partial UML class diagram of Knowledge Miner - part of GGP Spatium framework. \texttt{Evolution} and \texttt{Population} are the top-level classes. \texttt{RemoteTournament} consists of matches, delegated to remote workstations. The \texttt{Knowledge} module is responsible for both triggering knowledge mining and providing particular knowledge instances.}
\label{fig:analyzer-uml}
\end{figure}

\begin{figure}[htbp]
	\centering
	\includegraphics[scale=0.3]{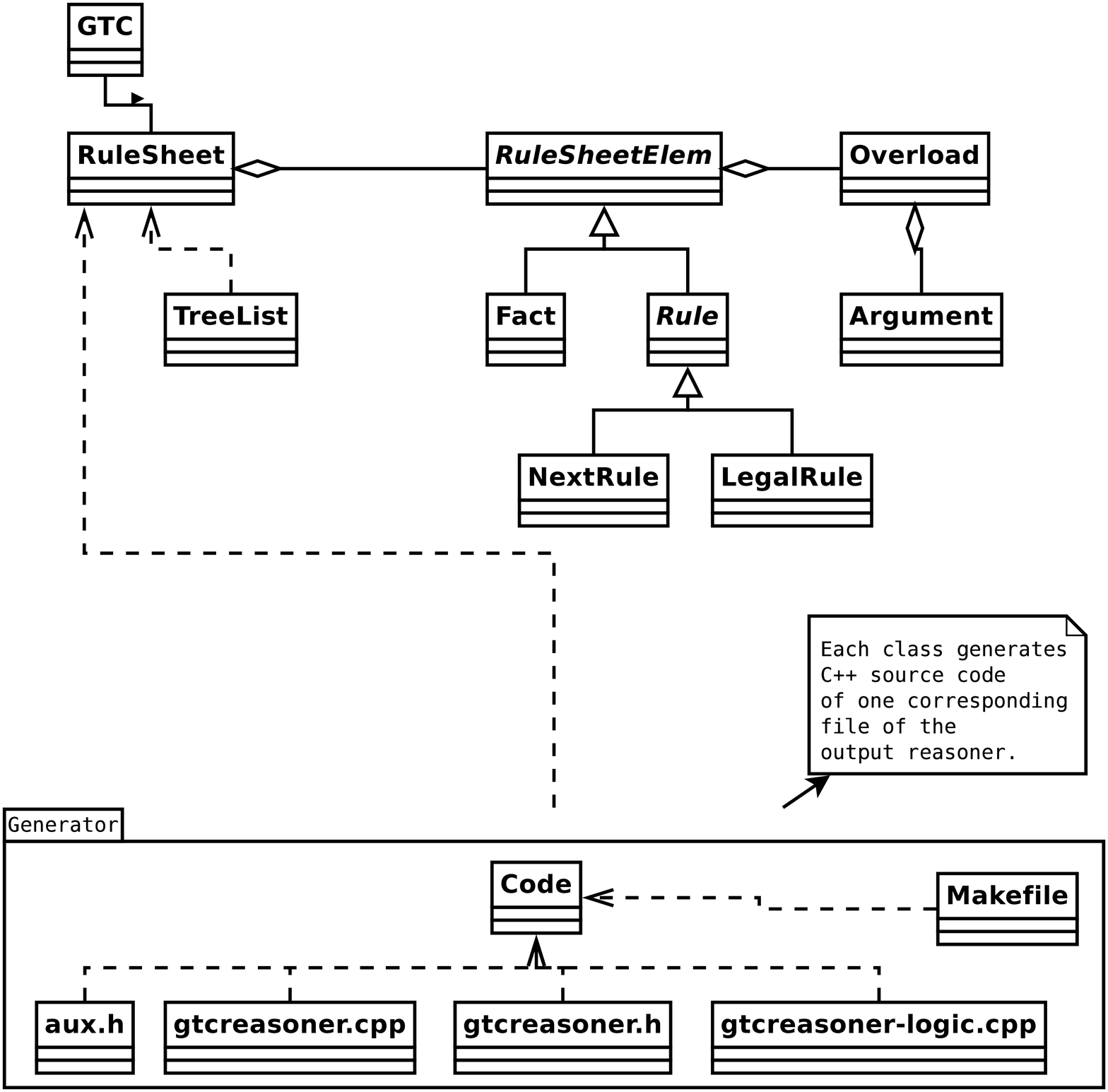}
	
\caption{Partial UML class diagram of GTC Generator - part of GGP Spatium framework. A GDL rule sheet is divided into separate expressions (trees), which undergo basic transformations (name mangling, handling of \texttt{not} and \texttt{or} functors, etc.), to finally have the expressions accurately classified in a \texttt{RuleSheet} object. Such an object serves as the basic input for the \texttt{Generator} package, where each module is responsible for a single output file. The \texttt{Code} class aids in code generation by providing methods for controlling loops and indentation.}
\label{fig:gtc-uml}
\end{figure}

\cleardoublepage
\addcontentsline{toc}{chapter}{Bibliography}
\bibliography{bibliography}{}

\begin{thebibliography}{10}

\bibitem{Agrawal:1994:FAM:645920.672836}
Rakesh Agrawal and Ramakrishnan Srikant.
\newblock Fast algorithms for mining association rules in large databases.
\newblock In {\em Proceedings of the 20th International Conference on Very
  Large Data Bases}, VLDB '94, pages 487--499, San Francisco, CA, USA, 1994.
  Morgan Kaufmann Publishers Inc.

\bibitem{multiarmed02}
Peter Auer, Nicol\`{o} Cesa-Bianchi, and Paul Fischer.
\newblock Finite-time analysis of the multiarmed bandit problem.
\newblock {\em Mach. Learn.}, 47(2-3):235--256, May 2002.

\bibitem{MagicSets86}
Francois Bancilhon, David Maier, Yehoshua Sagiv, and Jeffrey~D. Ullman.
\newblock Magic sets and other strange ways to implement logic programs
  (extended abstract).
\newblock In {\em Proceedings of the fifth ACM SIGACT-SIGMOD symposium on
  Principles of database systems}, PODS '86, pages 1--15, New York, NY, USA,
  1986. ACM.

\bibitem{knowledgeTransfer07}
Bikramjit Banerjee and Peter Stone.
\newblock General game learning using knowledge transfer.
\newblock In {\em Proceedings of the 20th international joint conference on
  Artifical intelligence}, IJCAI'07, pages 672--677, San Francisco, CA, USA,
  2007. Morgan Kaufmann Publishers Inc.

\bibitem{goMovePatternTrends12}
Petr Baudis and Josef Moudr\'{\i}k.
\newblock On move pattern trends in a large {G}o games corpus.
\newblock {\em CoRR}, abs/1209.5251, 2012.

\bibitem{rlggp13}
José~Luis Benacloch-Ayuso.
\newblock {RL-GGP}: A platform integrating reinforcement learning algorithms in
  the {G}eneral {G}ame {P}laying setting using {RL}-glue.
\newblock \url{http://http://users.dsic.upv.es/~flip/RLGGP/}, 2013.
\newblock [Online; accessed 1-March-2013].

\bibitem{CadiaPlayer09}
Yngvi Bj{\"o}rnsson and Hilmar Finnsson.
\newblock {C}adia{P}layer: A simulation-based general game player.
\newblock {\em IEEE Transactions on Computational Intelligence and AI in
  Games}, 1(1):4--15, 2009.

\bibitem{bouzyGoPatterns03}
Bruno Bouzy.
\newblock Associating domain-dependent knowledge and {M}onte {C}arlo approaches
  within a {G}o program.
\newblock In {\em Joint Conference on Information Sciences}, pages 505--508,
  2003.

\bibitem{KNearestPatternsGo05}
Bruno Bouzy and Guillaume Chaslot.
\newblock Bayesian generation and integration of {K}-nearest-neighbor patterns
  for 19x19 {G}o.
\newblock In {\em IEEE 2005 Symposium on Computational Intelligence in Games},
  pages 176--181, 2005.

\bibitem{Brugmann93montecarlo}
Bernd Br\"{u}gmann.
\newblock Monte {C}arlo {G}o, 1993.

\bibitem{progressiveMCTSGo08}
Guillaume M. J-B. Chaslot, Mark H.~M. Winands, H.~Jaap {van den Herik}, Jos W.
  H.~M. Uiterwijk, and Bruno Bouzy.
\newblock Progressive strategies for {M}onte-{C}arlo tree search.
\newblock {\em New Mathematics and Natural Computation ({NMNC})},
  4(03):343--357, 2008.

\bibitem{gajusz12}
Gajusz Chmiel.
\newblock Automat stan{\'o}w w {G}eneral {G}ame {P}laying.
\newblock Master's thesis, University of Wroc{\l}aw, 2012.
\newblock (in Polish).

\bibitem{CluneHeuristicFunctions07}
James Clune.
\newblock Heuristic evaluation functions for {G}eneral {G}ame {P}laying.
\newblock In {\em Proceedings of the 22nd national conference on Artificial
  intelligence - Volume 2}, AAAI'07, pages 1134--1139. AAAI Press, 2007.

\bibitem{dresdenGGPServer}
{Computational Logic group at TU Dresden}.
\newblock The {D}resden {GGP} {S}erver.
\newblock \url{http://ggpserver.general-game-playing.de}, 2013.
\newblock [Online; accessed 1-March-2013].

\bibitem{EloGo07}
R{\'e}mi Coulom.
\newblock {Computing {E}lo Ratings of Move Patterns in the Game of {G}o}.
\newblock In {\em {Computer Games Workshop}}, Amsterdam, Netherlands, 2007.

\bibitem{Coulom06efficientselectivity}
Rémi Coulom.
\newblock Efficient selectivity and backup operators in {M}onte-{C}arlo tree
  search.
\newblock In {\em Proceedings Computers and Games 2006}. Springer-Verlag, 2006.

\bibitem{propnets09}
Evan Cox, Eric Schkufza, Ryan Madsen, and Michael~R Genesereth.
\newblock Factoring general games using {P}ropositional {A}utomata.
\newblock In {\em Proceedings of the IJCAI-09 Workshop on General Game Playing
  (GIGA'09)}, pages 13--20, 2009.

\bibitem{MoyoGoStudio12}
Frank de~Groot.
\newblock Moyo {G}o {S}tudio.
\newblock \url{http://moyogo.com}, 2013.
\newblock [Online; accessed 1-March-2013].

\bibitem{prologBench2001}
Bart Demoen and Phuong-Lan Nguyen.
\newblock Odd {P}rolog benchmarking.
\newblock {CW} report, KU Leuven, 2001.

\bibitem{chessTutorial04}
Jon Edwards.
\newblock Chess is fun.
\newblock \url{http://www.queensac.com}, 2004.
\newblock [Online; accessed 1-March-2013].

\bibitem{elo1978rating}
Arpad~E. Elo.
\newblock {\em The rating of chessplayers, past and present}.
\newblock Arco Pub., New York, NY, USA, 1978.

\bibitem{hoyle01}
Susan~L. Epstein.
\newblock Learning to play expertly: a tutorial on {H}oyle.
\newblock In {\em Machines that learn to play games}, pages 153--178. Nova
  Science Publishers, Inc., Commack, NY, USA, 2001.

\bibitem{CadiaSearchControl11}
Hilmar Finnsson and Yngvi Bj{\"o}rnsson.
\newblock {CadiaPlayer}: Search control techniques.
\newblock {\em KI-K{\"u}nstliche Intelligenz}, 25(1):9--16, 2011.

\bibitem{cadiaplayerSrc}
Hilmar Finnsson and Yngvi Bj{\"o}rnsson.
\newblock {C}adia{P}layer source.
\newblock
  \url{http://cadia.ru.is/wiki/public:cadiaplayer:main#cadiaplayer_source},
  2013.
\newblock [Online; accessed 1-March-2013].

\bibitem{lgpl}
Free~Software Foundation.
\newblock {GNU} {L}esser {G}eneral {P}ublic {L}icense, version 3.
\newblock \url{http://www.gnu.org/copyleft/lesser.html}, June 2007.
\newblock [Online; accessed 1-March-2013].

\bibitem{garcez02}
Artur S.~d'Avila Garcez, Dov~M. Gabbay, and Krysia~B. Broda.
\newblock {\em Neural-Symbolic Learning System: Foundations and Applications}.
\newblock Springer-Verlag New York, Inc., Secaucus, NJ, USA, 2002.

\bibitem{grandChallenge12}
Sylvain Gelly, Levente Kocsis, Marc Schoenauer, Mich\`{e}le Sebag, David
  Silver, Csaba Szepesv\'{a}ri, and Olivier Teytaud.
\newblock The grand challenge of computer {G}o: {M}onte {C}arlo tree search and
  extensions.
\newblock {\em Commun. ACM}, 55(3):106--113, March 2012.

\bibitem{mogoFirstGoUct06}
Sylvain Gelly and Yizao Wang.
\newblock Exploration exploitation in {G}o: {UCT} for {M}onte-{C}arlo {G}o.
\newblock In {\em {NIPS}: {N}eural {I}nformation {P}rocessing {S}ystems
  Conference On-line trading of Exploration and Exploitation Workshop}, Canada,
  December 2006.

\bibitem{InriaMCTSPatternGo06}
Sylvain Gelly, Yizao Wang, R{\'e}mi Munos, and Olivier Teytaud.
\newblock Modification of {UCT} with patterns in {M}onte-{C}arlo {G}o.
\newblock Research Report RR-6062, INRIA, 2006.

\bibitem{aaaiComp05}
Michael Genesereth and Nathaniel Love.
\newblock General {G}ame {P}laying: Overview of the {AAAI} competition.
\newblock {\em AI Magazine}, 26:62--72, 2005.

\bibitem{kifSpec}
Michael~R. Genesereth.
\newblock Knowledge {I}nterchange {F}ormat, draft proposed {A}merican
  {N}ational {S}tandard (dp{ANS}), {NCITS.T2/98-004}.
\newblock \url{http://logic.stanford.edu/kif/dpans.html}, 2013.
\newblock [Online; accessed 1-March-2013].

\bibitem{sal93}
Michael Gherrity.
\newblock {\em A game-learning machine}.
\newblock PhD thesis, University of California at San Diego, La Jolla, CA, USA,
  1993.

\bibitem{Goldberg:1989:GAS:534133}
David~E. Goldberg.
\newblock {\em Genetic Algorithms in Search, Optimization and Machine
  Learning}.
\newblock Addison-Wesley Longman Publishing Co., Inc., Boston, MA, USA, 1st
  edition, 1989.

\bibitem{googlePerfTools}
{Google Inc.}
\newblock Google {P}erformance {T}ools.
\newblock \url{http://goog-perftools.sourceforge.net/}, 2013.
\newblock [Online; accessed 1-March-2013].

\bibitem{Datalog11}
Shan Shan Huang Todd~Jeffrey Green and Boon~Thau Loo.
\newblock Datalog and emerging applications: an interactive tutorial.
\newblock In {\em Proceedings of the 2011 ACM SIGMOD International Conference
  on Management of data}, SIGMOD '11, pages 1213--1216, New York, NY, USA,
  2011. ACM.

\bibitem{holt08}
Andreas Holt.
\newblock {G}eneral {G}ame {P}laying systems.
\newblock Master's thesis, Technical University of Denmark, 2008.

\bibitem{deepBlue}
IBM.
\newblock Icons of progress. {D}eep {B}lue.
\newblock \url{http://www-03.ibm.com/ibm/history/ibm100/us/en/icons/deepblue/},
  2013.
\newblock [Online; accessed 1-March-2013].

\bibitem{deducing-piece-args-kaiser-2007}
David~M. Kaiser.
\newblock Automatic feature extraction for autonomous {G}eneral {G}ame
  {P}laying agents.
\newblock In {\em Proceedings of the 6th international joint conference on
  Autonomous agents and multiagent systems}, AAMAS '07, pages 93:1--93:7, New
  York, NY, USA, 2007. ACM.

\bibitem{GIFL11}
Mesut Kirci, Nathan~R. Sturtevant, and Jonathan Schaeffer.
\newblock A {GGP} feature learning algorithm.
\newblock {\em KI-K{\"u}nstliche Intelligenz}, 25(1):1--8, 2011.

\bibitem{instantiating10}
Peter Kissmann and Stefan Edelkamp.
\newblock Instantiating general games using {P}rolog or dependency graphs.
\newblock In {\em Proceedings of the 33rd annual German conference on Advances
  in artificial intelligence}, KI'10, pages 255--262, Berlin, Heidelberg, 2010.
  Springer-Verlag.

\bibitem{GamerGGPAgent11}
Peter Kissmann and Stefan Edelkamp.
\newblock Gamer, a {G}eneral {G}ame {P}laying agent.
\newblock {\em KI-K{\"u}nstliche Intelligenz}, 25(1):49--52, 2011.

\bibitem{MCAmazons07}
Julien Kloetzer, Hiroyuki Iida, and Bruno Bouzy.
\newblock The {M}onte-{C}arlo approach in {A}mazons.
\newblock In {\em Proceedings of the Computer Games Workshop, Amsterdam, The
  Netherlands}, pages 185--192, 2007.

\bibitem{Kocsis06banditbased}
Levente Kocsis and Csaba Szepesvári.
\newblock Bandit based {M}onte-{C}arlo planning.
\newblock In {\em ECML-06. Number 4212 in LNCS}, pages 282--293. Springer,
  2006.

\bibitem{auto-heuristic-construction06}
Gregory Kuhlmann, Kurt Dresner, and Peter Stone.
\newblock Automatic heuristic construction in a complete {G}eneral {G}ame
  {P}layer.
\newblock In {\em Proceedings of the Twenty-First National Conference on
  Artificial Intelligence}, pages 1457--62, July 2006.

\bibitem{LaiRobbins85}
T.~L. Lai and Herbert Robbins.
\newblock Asymptotically efficient adaptive allocation rules.
\newblock {\em Advances in Applied Mathematics}, 6(1):4--22, 1985.

\bibitem{amazonsDiscover08}
Richard~J. Lorentz.
\newblock Amazons discover {M}onte-{C}arlo.
\newblock In {\em Proceedings of the 6th international conference on Computers
  and Games}, CG '08, pages 13--24, Berlin, Heidelberg, 2008. Springer-Verlag.

\bibitem{gdl_specification}
Nathaniel Love, Michael Genesereth, and Timothy Hinrichs.
\newblock {G}eneral {G}ame {P}laying: {G}ame {D}escription {L}anguage
  specification.
\newblock Technical Report LG-2006-01, Stanford University, Stanford, CA, 2006.
\newblock http://logic.stanford.edu/reports/LG-2006-01.pdf.

\bibitem{policyLearning12}
Francis Maes, Louis Wehenkel, and Damien Ernst.
\newblock Learning to play k-armed bandit problems.
\newblock In {\em ICAART (1)}, pages 74--81. SciTePress, 2012.

\bibitem{Mandziuk10}
Jacek Mandziuk.
\newblock {\em Knowledge-Free and Learning-Based Methods in Intelligent Game
  Playing}, volume 276 of {\em Studies in Computational Intelligence}.
\newblock Springer, 2010.

\bibitem{goDiversity11}
Leandro~Soriano Marcolino and Hitoshi Matsubara.
\newblock Multi-agent {M}onte {C}arlo {G}o.
\newblock In {\em The 10th International Conference on Autonomous Agents and
  Multiagent Systems - Volume 1}, AAMAS '11, pages 21--28, Richland, SC, 2011.
  International Foundation for Autonomous Agents and Multiagent Systems.

\bibitem{Ary08}
Jean M{\'e}hat and Tristan Cazenave.
\newblock {M}onte-{C}arlo {T}ree {S}earch for {G}eneral {G}ame {P}laying.
\newblock Technical report, Universit\'e Paris 8, Dept. Info., 2008.

\bibitem{Ary10}
Jean M{\'e}hat and Tristan Cazenave.
\newblock Combining {UCT} and nested {M}onte {C}arlo search for single-player
  {G}eneral {G}ame {P}laying.
\newblock {\em IEEE Trans. Comput. Intellig. and AI in Games}, 2(4):271--277,
  2010.

\bibitem{parallel11}
Jean M{\'e}hat and Tristan Cazenave.
\newblock A parallel {G}eneral {G}ame {P}layer.
\newblock {\em KI-K{\"u}nstliche Intelligenz}, 25(1):43--47, 2011.

\bibitem{neuralNets11}
Daniel Michulke.
\newblock Neural networks for high-resolution state evaluation in {G}eneral
  {G}ame {P}laying.
\newblock In {\em Proceedings of the IJCAI-11 Workshop on General Game Playing
  (GIGA'11)}, 2011.

\bibitem{distanceFeatures2012}
Daniel Michulke and Stephan Schiffel.
\newblock Distance features for {G}eneral {G}ame {P}laying agents.
\newblock In {\em ICAART (1)}, pages 127--136, 2012.

\bibitem{neuralNets09}
Daniel Michulke and Michael Thielscher.
\newblock Neural networks for state evaluation in {G}eneral {G}ame {P}laying.
\newblock In {\em Proceedings of the European Conference on Machine Learning
  and Knowledge Discovery in Databases: Part II}, ECML PKDD '09, pages 95--110,
  Berlin, Heidelberg, 2009. Springer-Verlag.

\bibitem{prologPerformance2}
Paulo Moura.
\newblock Logtalk performance.
\newblock \url{http://logtalk.org/performance.html}, 2013.
\newblock [Online; accessed 1-March-2013].

\bibitem{ComputerGo02}
Martin M{\"u}ller.
\newblock Computer {G}o.
\newblock {\em Artif. Intell.}, 134(1-2):145--179, 2002.

\bibitem{Centurio10}
Maximilian Möller, Marius Schneider, Martin Wegner, and Torsten SchaubM.
\newblock {C}enturio, a {G}eneral {G}ame {P}layer: Parallel, {J}ava- and
  {ASP}-based.
\newblock {\em K{\"u}nstliche Intelligenz}, 25(1):17--24, 2011.

\bibitem{pell92}
Barney Pell.
\newblock Metagame: A new challenge for games and learning.
\newblock In {\em Programming in Artificial Intellegence: The Third Computer
  Olympiad. Ellis Horwood}, pages 237--251. Ellis Horwood Limited, 1992.

\bibitem{PacManDossier}
Jamey Pittman.
\newblock The pac-man dossier.
\newblock \url{http://home.comcast.net/~jpittman2/pacman/pacmandossier.html},
  2013.
\newblock [Online; accessed 1-March-2013].

\bibitem{AprioriPerf04}
Paul~W. Purdom, Dirk~Van Gucht, and Dennis~P. Groth.
\newblock Average-case performance of the {A}priori algorithm.
\newblock {\em SIAM J. Comput.}, 33(5):1223--1260, 2004.

\bibitem{c4.593}
J.~Ross Quinlan.
\newblock {\em {C4.5}: {P}rograms for machine learning}.
\newblock Morgan Kaufmann Publishers Inc., San Francisco, CA, USA, 1993.

\bibitem{ocaml11}
Abdallah Saffidine and Tristan Cazenave.
\newblock A forward chaining based {G}ame {D}escription {L}anguage compiler.
\newblock In {\em IJCAI Workshop on General Intelligence in Game-Playing Agents
  (GIGA)}, pages 69--75, Barcelona, Spain, July 2011.

\bibitem{tron10}
Spyridon Samothrakis, David Robles, and Simon~M. Lucas.
\newblock A {UCT} agent for {T}ron: Initial investigations.
\newblock In {\em CIG}, pages 365--371. IEEE, 2010.

\bibitem{mctsPacMan11}
Spyridon Samothrakis, David Robles, and Simon~M. Lucas.
\newblock Fast approximate max-n {M}onte {C}arlo tree search for {M}s
  {P}ac-{M}an.
\newblock {\em IEEE Trans. Comput. Intellig. and AI in Games}, 3(2):142--154,
  2011.

\bibitem{symmetry09}
Stephan Schiffel.
\newblock Symmetry detection in {G}eneral {G}ame {P}laying.
\newblock In {\em Proceedings of the IJCAI-09 Workshop on General Game Playing
  (GIGA'09)}, pages 67--74, 2009.

\bibitem{Schiffel07automaticconstruction}
Stephan Schiffel and Michael Thielscher.
\newblock Automatic construction of a heuristic search function for {G}eneral
  {G}ame {P}laying.
\newblock In {\em In Seventh IJCAI International Workshop on Nonmontonic
  Reasoning, Action and Change (NRAC07)}, 2007.

\bibitem{fluxplayer07}
Stephan Schiffel and Michael Thielscher.
\newblock {F}luxplayer: A successful general game player.
\newblock In {\em Proceedings of the AAAI National Conference on Artificial
  Intelligence}, pages 1191--1196. AAAI Press, 2007.

\bibitem{ggpGalaxy13}
Sam Schreiber.
\newblock The {GGP} {G}alaxy {P}roject.
\newblock \url{http://code.google.com/p/ggp-galaxy/}, 2013.
\newblock [Online; accessed 1-March-2013].

\bibitem{knowledgeGenGGP08}
Shiven Sharma, Ziad Kobti, and Scott Goodwin.
\newblock Knowledge generation for improving simulations in {UCT} for {G}eneral
  {G}ame {P}laying.
\newblock In {\em Proceedings of the 21st Australasian Joint Conference on
  Artificial Intelligence: Advances in Artificial Intelligence}, AI '08, pages
  49--55, Berlin, Heidelberg, 2008. Springer-Verlag.

\bibitem{hSheng11}
Xinxin Sheng and David~J. Thuente.
\newblock Using decision trees for state evaluation in {G}eneral {G}ame
  {P}laying.
\newblock {\em KI-K{\"u}nstliche Intelligenz}, 25(1):53--56, 2011.

\bibitem{prologPerformance1}
Afany Software.
\newblock B-prolog performance.
\newblock \url{http://www.probp.com/performance.htm}, 2008.
\newblock [Online; accessed 1-March-2013].

\bibitem{BayesianPatternRankingGo06}
David Stern, Ralf Herbrich, and Thore Graepel.
\newblock Bayesian pattern ranking for move prediction in the game of {G}o.
\newblock In {\em Proceedings of the 23rd international conference on Machine
  learning}, ICML '06, pages 873--880, New York, NY, USA, 2006. ACM.

\bibitem{GDLExtension10}
Michael Thielscher.
\newblock A {G}eneral {G}ame {D}escription {L}anguage for incomplete
  information games.
\newblock In {\em Proceedings of the Twenty-Fourth AAAI Conference on
  Artificial Intelligence}, Atlanta, Georgia, USA, July 2010. AAAI Press.

\bibitem{gdl2-11}
Michael Thielscher.
\newblock {GDL-II}.
\newblock {\em K{\"u}nstliche Intelligenz}, 25:63--66, 2011.

\bibitem{poker11}
Guy Van~den Broeck and Kurt Driessens.
\newblock Automatic discretization of actions and states in {M}onte-{C}arlo
  tree search.
\newblock In {\em Proceedings of the ECML/PKDD 2011 Workshop on Machine
  Learning and Data Mining in and around Games,}, pages 1--12, September 2011.

\bibitem{cpp09}
Kevin Waugh.
\newblock Faster state manipulation in general games using generated code.
\newblock In {\em Proceedings of the 1st General Intelligence in Game-Playing
  Agents (GIGA)}, 2009.

\bibitem{wiki:secretaryProblem}
Wikipedia.
\newblock The secretary problem --- {W}ikipedia{,} {T}he {F}ree {E}ncyclopedia.
\newblock \url{http://en.wikipedia.org/wiki/Secretary_problem}, 2013.
\newblock [Online; accessed 1-March-2013].

\bibitem{wiki:unification}
Wikipedia.
\newblock Unification (computer science) --- {W}ikipedia{,} {T}he {F}ree
  {E}ncyclopedia.
\newblock \url{http://en.wikipedia.org/wiki/Unification_(computer_science)},
  2013.
\newblock [Online; accessed 1-March-2013].

\bibitem{negativeAssociationRules04}
Xindong Wu, Chengqi Zhang, and Shichao Zhang.
\newblock Efficient mining of both positive and negative association rules.
\newblock {\em ACM Trans. Inf. Syst.}, 22(3):381--405, July 2004.

\end{thebibliography}
\bibliographystyle{plain}

\end{document}